\newtheorem{definition}{Definition}
\tikzstyle{block} = [draw, fill=white, rectangle,
\tikzstyle{sum} = [draw, fill=white, circle, node distance=1cm]
\tikzstyle{input} = [coordinate]
\tikzstyle{output} = [coordinate]
\tikzstyle{pinstyle} = [pin edge={to-,thin,black}]
\tikzset{roads/.style={line width=0.2cm}}
\begin{document}

\title{Non-asymptotic bounds for stochastic optimization with biased noisy gradient oracles}

\author{Nirav Bhavsar$^\dagger$ and  Prashanth L.A.$^\sharp$\thanks{
		$^\dagger$ Department of Computer Science and Engineering,
		Indian Institute of Technology Madras, Chennai,
		E-Mail: niravnb@cse.iitm.ac.in,
		
		$^\sharp$ Department of Computer Science and Engineering,
		Indian Institute of Technology Madras, Chennai,
		E-Mail: prashla@cse.iitm.ac.in.}}



\maketitle

\begin{abstract}
	We introduce biased gradient oracles to capture a setting where the function measurements have an estimation error that can be controlled through a batch size parameter. Our proposed oracles are appealing in several practical contexts, for instance, risk measure estimation from a batch of independent and identically distributed (i.i.d.) samples, or simulation optimization, where the function measurements are `biased' due to computational constraints. In either case, increasing the batch size reduces the estimation error. We highlight the applicability of our biased gradient oracles in a risk-sensitive reinforcement learning setting.  In the stochastic non-convex optimization context, we analyze a variant of the randomized stochastic gradient (RSG) algorithm with a biased gradient oracle. We quantify the convergence rate of this algorithm by deriving non-asymptotic bounds on its performance.  Next, in the stochastic convex optimization setting, we derive non-asymptotic bounds for the last iterate of a stochastic gradient descent (SGD) algorithm with a biased gradient oracle. 
\end{abstract}

\begin{IEEEkeywords}
	Biased gradient oracle, zeroth-order stochastic optimization, simultaneous perturbation, Gaussian smoothing, non-asymptotic bounds.
\end{IEEEkeywords}
\section{Introduction}
\label{sec:intro}

We consider the problem of minimizing a smooth objective function, when the optimization algorithm is provided with biased function  measurements. 
This setting is motivated by practical applications, where the objective function is estimated from a batch dataset, and the estimation scheme is biased. As an example, consider the problem of estimating  Conditional Value-at-Risk (CVaR), a popular risk measure in financial applications, from a batch of independent and identically distributed (i.i.d.) samples. The classic CVaR estimator \cite{rocka} requires estimation of a certain quantile of the underlying distribution, and hence, the resulting estimate is biased. 
As another example, one could consider a simulation optimization problem \cite{fu2015handbook}, where the function measurements are `biased' due to computational constraints. In both examples, increasing the batch size used for estimation decreases the estimation error. 

The `biased stochastic optimization' setting outlined above is more general than the canonical `zeroth-order stochastic optimization` setting because the former features an estimation error that has a positive mean, while the latter usually features an estimation error that vanishes in expectation. We extend the theory of zeroth-order stochastic optimization to our setting by formalizing two oracle models that encapsulate a biased stochastic optimization problem. In each oracle model, an algorithm obtains a noisy and biased estimate of the gradient at any chosen point.  Both oracles feature a batch size parameter that can be used to control an additive estimation error component in the gradient estimates. 
The difference between the two proposed biased gradient oracles is that the first oracle features a bias-variance tradeoff for the gradient estimates, while the second one does not have such a tradeoff.


The biased gradient oracles can be implemented using the simultaneous perturbation \cite{bhatnagar-book,nesterov2011random}  class of algorithms that can provide biased gradient information, using only noisy function measurements. Such an approach can be extended to cover the case of biased function measurements that we consider in this paper. The gradient estimate resulting from a simultaneous perturbation method usually has a bias-variance tradeoff, i.e., the estimate has an additive bias of $\mathcal{ O}(\eta^2)$, where $\eta$ is a parameter to be chosen by the optimization algorithm. The variance of the gradient estimate is $\mathcal{ O}(1/\eta^2)$, and the choice of $\eta$ relates to bias-variance tradeoff \cite{spall1992multivariate,bhatnagar-book,spall2005introduction,hu2016bandit}. Under additional assumptions, one can eschew the bias-variance tradeoff, i.e., reduce the bias without adversely affecting the variance  \cite{nesterov2011random,ghadimi2013stochastic,balasubramanian2018zeroth}.

\begin{table*}
	\centering
	\captionsetup[subtable]{position = below}
	\captionsetup[table]{position=top}
	\caption{Summary of the iteration and sample complexity we obtain for the RSG-BGO algorithm \ref{alg:rsg} and SGD-BGO  algorithm \ref{alg:net} under two oracles for finding an $\epsilon$-stationary or $\epsilon$-optimal point (see Definition \ref{def:esolution}). Here \ref{as:biased_sp_estimation_error} is an oracle that returns a biased gradient estimate with a parameter that controls bias-variance tradeoff, while \ref{as:biased_gs_estimation_error} is an oracle variant where the bias of the gradient estimate can be reduced without adversely affecting the variance. Both oracles have a batch size parameter that controls the estimation error (See Section \ref{sec:oracles} for precise definitions).}
	\begin{subtable}{0.5\textwidth}
		\centering
			\hspace*{-2em}
		\begin{tabular}{|c|c|c|c|}
			\hline
			\multirow{2}{*} {\textbf{Oracle}} &\textbf{Iteration} & \textbf{Sample} & \multirow{2}{*}{\textbf{Reference}} \\ 
			& \textbf{complexity} & \textbf{complexity} &  \\ \hline
 			RSG with \ref{as:biased_sp_estimation_error} & $  \mathcal{O} \left(\frac{1}{\epsilon^3} \right)$ & $ N^2 $ & Theorem \ref{thm:biased_sp_esterr} \\ \hline
			RSG with \ref{as:biased_gs_estimation_error} & $  \mathcal{O} \left(\frac{1}{\epsilon^2} \right) $ & $ N^3 $ & Theorem \ref{thm:biased_gs_esterr} \\ \hline
			RSG with  an &   \multirow{2}{*} { $ \mathcal{O} \left(\frac{1}{\epsilon^2} \right)$ }  &\multirow{2}{*} { $ N $} & \multirow{2}{*} { \cite{ghadimi2013stochastic}} \\ 
			 unbiased gradient  & &  & \\ 
             oracle & &  & \\ \hline
		\end{tabular}
		\caption{Stochastic Non-convex Optimization}
		\label{tab:snco}
	\end{subtable}%
	\hspace*{-1em}
	\begin{subtable}{0.5\textwidth}
		\centering
			\begin{tabular}{|c|c|c|c|}
			\hline
			\multirow{2}{*} {\textbf{Oracle}} &\textbf{Iteration} & \textbf{Sample} & \multirow{2}{*}{\textbf{Reference}} \\ 
			& \textbf{complexity} & \textbf{complexity} &  \\ \hline
			RSG with \ref{as:biased_sp_estimation_error} & $  \mathcal{O} \left(\frac{1}{\epsilon^3} \right)$ & $ N^2 $ & Theorem \ref{thm:biased_sp_esterr_convex} \\ \hline
			RSG with \ref{as:biased_gs_estimation_error} & $  \mathcal{O} \left(\frac{1}{\epsilon^2} \right) $ & $ N^3 $ & Theorem \ref{thm:biased_gs_esterr_convex}\\ \hline
			SGD with \ref{as:biased_sp_estimation_error} & $  \mathcal{O} \left(\frac{1}{\epsilon^3} \right) $ & $ N^2 \log_2 N  $ & Theorem \ref{thm:convex_sp_esterr} \\ \hline
			SGD with \ref{as:biased_gs_estimation_error} & $  \mathcal{O} \left(\frac{1}{\epsilon^2} \right) $ & $ 4N^4 (N^2 - 1) / 3 $ & Theorem \ref{thm:convex_gs_esterr} \\ \hline
			SGD with & \multirow{2}{*} { $  \mathcal{O} \left(\frac{1}{\epsilon^2} \right) $} & \multirow{2}{*} { $ N $ } & \multirow{2}{*} { \cite{jain2019making}}\\ 
			unbiased oracle & & & \\ \hline
		\end{tabular}
		\caption{Stochastic Convex Optimization}
		\label{tab:sco}
	\end{subtable}
	\label{tab:summary}
	\vspace{-3mm}
\end{table*}
The focus of this paper is to understand the rate of convergence of gradient-based methods with inputs from a biased gradient oracle. 
We derive non-asymptotic bounds on the iteration complexity of gradient-based methods for a non-convex as well as a convex objective.  In either case, we derive bounds for gradient-based methods with inputs from the following oracle models: (i) an oracle whose gradient estimates have a parameter for trading off bias against the variance. We shall refer to this oracle as \ref{as:biased_sp_estimation_error}; and (ii) an oracle where the gradient estimates have no bias-variance tradeoff. We shall refer to this oracle as \ref{as:biased_gs_estimation_error} below. Note that both oracles have a batch size parameter for controlling the estimation error.
Table \ref{tab:summary} summarizes our bounds in the convex as well non-convex regimes, under two oracle models.

We now summarize our contributions in the case when the objective is non-convex.
We study the non-asymptotic performance of the randomized stochastic gradient (RSG) algorithm, proposed in \cite{ghadimi2013stochastic}. The case of unbiased gradient information is addressed in the aforementioned reference, and we focus on the cases when RSG is provided inputs from \ref{as:biased_sp_estimation_error} or \ref{as:biased_gs_estimation_error}. 
From our analysis, we  observe that RSG has a sample complexity bound of $\mathcal{O} (1/\epsilon^{3})$ with \ref{as:biased_sp_estimation_error}, and $\mathcal{O} (1/\epsilon^{2})$ with \ref{as:biased_gs_estimation_error}.  
This is not surprising, as \ref{as:biased_sp_estimation_error} provides a gradient estimate whose variance scales inversely with the perturbation constant $\eta$, and this is unlike the estimate from \ref{as:biased_gs_estimation_error}, where such an inverse scaling is absent. 
Our result, when specialized to a setting without the estimation error, matches the corresponding result in  \cite{ghadimi2013stochastic}. 
An advantage with our approach is that, unlike \cite{ghadimi2013stochastic}, we do not require knowledge of the function value at the optima for choosing the perturbation constant $\eta$. 
We demonstrate the applicability of our biased gradient oracle by  considering a risk-sensitive optimization problem in a reinforcement learning setting. We propose a risk-sensitive policy gradient (Risk-PG) algorithm, and show that the analysis of the RSG algorithm with \ref{as:biased_sp_estimation_error} applies to Risk-PG algorithm, while the results under \ref{as:biased_gs_estimation_error} would apply under additional assumptions.

Next, we summarize our contributions in the case when the objective is convex. Using a proof technique that is similar to the one employed in the non-convex case, we provide non-asymptotic bounds for the RSG algorithm with inputs from either \ref{as:biased_sp_estimation_error} or \ref{as:biased_gs_estimation_error}. A disadvantage with the RSG algorithm is that it requires knowledge of the smoothness parameter for choosing the step-size parameter in the gradient descent update iteration. We overcome this dependence by employing a different algorithm that is based on the stochastic gradient descent scheme analyzed in \cite{jain2019making}. We provide non-asymptotic bounds that hold in expectation for the final iterate of the stochastic gradient algorithm with inputs from either \ref{as:biased_sp_estimation_error} or \ref{as:biased_gs_estimation_error}. 
For the case of unbiased gradient information, the authors in  \cite{jain2019making} provide a sample complexity bound of the order $\O\left(1/\epsilon^{2}\right)$. We also provide a similar order bound, when the gradients are obtained from \ref{as:biased_gs_estimation_error}. On the other hand, when gradient estimates from \ref{as:biased_sp_estimation_error} are employed, the bound we obtain is of the order $\O\left(1/\epsilon^{3}\right)$. The latter bound is not surprising, considering a matching minimax lower bound shown in \cite{hu2016bandit}.

\textit{Related work.}
Biased stochastic optimization has been considered before in \cite{hu2016bandit,devolder2014first,Baes09,dAsp08,nguyen2021inexact,bollapragada2019exact,pasupathy2018sampling}.
In \cite{devolder2014first,Baes09,dAsp08}, the authors consider an oracle that outputs biased gradient measurements without any noise component. In contrast, we consider noisy gradient measurements with a controllable estimation error.  
In \cite{hu2016bandit}, which is a closely related work, the authors formalize a biased noisy gradient oracle. They derive an upper bound for a mirror descent scheme, and a minimax lower bound, both for the case of a convex objective. Unlike \cite{hu2016bandit}, our oracle model features an additional estimation error component that is not zero mean.  Our results, when specialized to the case without an estimation error matches the upper bound derived in \cite{hu2016bandit}. Our bounds are for a regular stochastic gradient descent algorithm, with the added advantage that the stepsize we employ does not require knowledge of the underlying smoothness parameter. More importantly, unlike \cite{hu2016bandit}, we study stochastic non-convex optimization problems with the biased gradient oracles mentioned before. 

In \cite{balasubramanian2018zeroth}, the authors derive a non-asymptotic bound for a zeroth-order variant of the stochastic conditional gradient algorithm under an oracle model similar to \ref{as:biased_gs_estimation_error}, except that the estimation error component is absent. Specializing our results to have zero-mean estimation error would make our bounds comparable to those in \cite{balasubramanian2018zeroth}.
In \cite{nguyen2021inexact}, the authors consider a oracle model with a batch size parameter, and propose an algorithm that estimates the gradient on a mini-batch of sufficient size. They provide a sample complexity bound of $ \mathcal{ O}(1/ \epsilon^2 ) $ for the case of a convex function, and this matches our bound for the oracle \ref{as:biased_gs_estimation_error}.
Our bounds for the convex case are for the `practically preferred' last iterate, while those in \cite{nguyen2021inexact} are for an iterate chosen uniformly at random. In addition, our step-size choice does not require the knowledge of the underlying smoothness parameter.


In \cite{pasupathy2018sampling}, which is another closely related work, the authors study stochastic gradient methods in a setting where the objective is estimated using batch data, and a batch size of $m$ leads to an estimation error of $O\left(1/m^\alpha\right)$. 
Our biased gradient oracle framework is comparable to their setting, when $ \alpha = 1/2 $, and for this case, our non-asymptotic bounds match their asymptotic rate. 

The rest of the paper is organized as follows: Section \ref{sec:oracles} formulates the biased gradient oracles, along with motivating applications. Section \ref{sec:snco} considers the  stochastic non-convex optimization problem, and presents  non-asymptotic bounds for a randomized gradient descent algorithm with inputs from a biased gradient oracle. Section \ref{sec:sco} considers the stochastic convex optimization problem and presents non-asymptotic bounds for a stochastic gradient descent algorithm. Section \ref{sec:rl} highlights the applicability of our biased gradient oracles in a risk-sensitive reinforcement learning setting. Section \ref{sec:proofs} provides the proofs of all the bounds which are presented in the paper, and finally, Section \ref{sec:concl} provides the concluding remarks.

\textbf{Notation:} Throughout this paper we assume $\| \cdot \| = \| \cdot \| _ { 2 }$, and $ \mathbf{1}_{m \times n} $ is an $ m \times n $ matrix with each entry as one.

\section{Biased gradient oracles}
\label{sec:oracles}

Consider the following optimization problem:
\begin{align}
	\min_{x \in \R^d}  f(x), \label{eq:pb}
\end{align}
where the function $ f : \R^d \rightarrow \R $ is assumed to be smooth.
Gradient-based methods are very popular for solving the optimization problem formulated above, and we consider an iterative algorithm which obtains estimates of $ \nabla f(\cdot) $ through calls to a biased gradient oracle. We define two such oracles  below, and  subsequently, we provide motivating applications featuring biased function measurements.
 \subsection{Oracle definitions}
\begin{enumerate}[leftmargin=2.5em,label=(\textbf{O1})]
	\item \label{as:biased_sp_estimation_error} \textbf{Biased gradient oracle}\\
	\textit{Input:} $ x \in \R^d $, perturbation constant $ \eta > 0$, and batch size $ m >0 $.\\
	\textit{Output:} a gradient estimate $ g(x, \xi,\eta, m) \in \R^d$ that satisfies
	\setlist{nolistsep}
	\begin{enumerate}[noitemsep]
		\item \label{as:biased_estimation_error_a}$ \| \mathbb { E }_{\xi}   \left[ g \left( x  , \xi, \eta, m \right) \right] - \nabla f \left( x  \right) \|_{\infty} \leq  c_1 \eta^2 + \frac{c_3}{\eta \sqrt{m}}$,
		\item \label{as:biased_estimation_error_b} $  \mathbb { E }_{\xi} \big[ \left\| g \left( x  , \xi, \eta, m  \right) -  \mathbb { E }_{\xi}   \left[ g \left( x  , \xi, \eta, m  \right) \right]  \right\|^{2} \big]$ \\ $ \leq \frac{c_2}{\eta^{2}}  ,$
	\end{enumerate}
	for some constants $ c_1,c_2,c_3 > 0$.
\end{enumerate}
In the oracle defined above, the parameter $\eta$ is used to tradeoff bias and variance in the gradient estimates, while the parameter $m$ is motivated by practical models where exact function measurements are unavailable. Instead, one could choose larger values of $m$ to increase the accuracy of the function measurements. 

Next, we present an alternative to \ref{as:biased_sp_estimation_error}, where the bias of the gradient estimates can be reduced without adversely affecting the variance.

\begin{enumerate}[leftmargin=2.8em,noitemsep,label=(\textbf{O2})]
	\item \label{as:biased_gs_estimation_error} \textbf{Biased gradient oracle - variant}\\
	\textit{Input:} $ x \in \R^d $, perturbation constant $ \eta > 0$ and batch size $ m >0 $.\\
	\textit{Output:} a gradient estimate $ g(x, \xi, \eta, m) \in \R^d$ that satisfies
	\setlist{nolistsep}
	\begin{enumerate}[noitemsep]
		\item \label{as:biased_gs_estimation_error_a}$ \| \mathbb { E }_{\xi}   \left[ g \left( x , \xi ,\eta, m \right) \right]  - \nabla f \left( x\right)\|_{\infty} \leq c_1 \eta + \frac{c_3}{\eta \sqrt{m}}$,
		\item \label{as:biased_gs_estimation_error_b} $  \mathbb { E }_{\xi} \left[ \left\| g \left( x , \xi, \eta, m  \right) - \mathbb { E }_{\xi}   \left[ g \left( x , \xi, \eta, m  \right) \right]  \right\|^{2} \right]$ \\ $  \leq {c_2}{\eta^{2}}  + \tilde{c_2} ,$
	\end{enumerate}
	for some positive constants $ c_1, c_2, \widetilde{c_2}$ and $c_3$.
\end{enumerate}
\subsection{Illustrative applications}
\begin{example}
In the regular \textit{simulation optimization} setting \cite{fu2015handbook}, we are given function measurements with zero-mean noise, i.e., $ f(x) = \E_\xi[F(x,\xi)]$.
In contrast, we consider a model where the function measurements have an error term with positive mean. In this model, the objective $f$ is obtained as a solution to the following sub-problem over the optimization variable $y$ that belongs to a convex and compact set $\Y$ :
\begin{align}
 f(x) = \min_{y\in \Y} \E_\xi[H_x(y,\xi)] , \forall x\in \R^d. \label{eq:biased-opt1}
\end{align}
Owing to computational considerations, the sub-problem defined above cannot be solved exactly. Instead, an optimization algorithm can obtain inexact measurements $F(x,m)$ defined by
\begin{align}
	F(x,m) = \min_{y\in \Y} \E_\xi[H_x(y,\xi)] + \epsilon(m) , \forall x\in \R^d. \label{eq:biased-opt-est}
\end{align}
In the above, $m$ is the batch size parameter, and $\epsilon$ is a `positive' estimation error term. Choosing a larger batch size $m$ implies the sub-problem in \eqref{eq:biased-opt1} can be solved more precisely, leading to lower estimation error $\epsilon(m)$. 
\end{example}

The oracles \ref{as:biased_sp_estimation_error}--\ref{as:biased_gs_estimation_error} are also appealing in practical applications where  the objective $ f $ has to be estimated from i.i.d. samples coming from a r.v. $ X $, and the estimation scheme is biased. Estimation of risk measures such as CVaR is an example of an application where the de facto estimation scheme is biased. 
We shall illustrate the applicability of our oracle in the context of CVaR objective below.
\begin{example}
We begin by defining the VaR $V_\alpha(X)$ and CVaR $C_\alpha(X)$, at a pre-specified level $\alpha\in (0,1)$ below.
\begin{align*}
	V_\alpha(X) & = \inf \lbrace \xi : \prob{X \leq \xi} \geq \alpha \rbrace, \textrm{ and ~} \\
	C_{\alpha}(X)  &  =  V_{\alpha}(X)  + \frac{1}{1 - \alpha} \mathbb{E} \left[ X - V_{\alpha}(X) \right] ^+,
\end{align*}
where $[X]^+ = \max (0, X).$  If the distribution underlying $ X $ is continuous, then 
$C_{\alpha}(X)   =  \E [X | X \geq V_\alpha(X) ]$.

We now describe a well-known estimate of CVaR using $m$ i.i.d. samples $ \{X_i, i = 1,\ldots,m\}$.
Notice that CVaR estimation requires an estimate of VaR. Let $\hat{V}_{m, \alpha}$ and $\hat{C}_{m, \alpha}$ denote the estimates of VaR and CVaR. These quantities are defined as follows (see \citep{serfling2009approximation}): 
\begin{align}
	\hat{V}_{m, \alpha} & = X_{\left[  \lfloor m\alpha \rfloor \right]}, 
	\hat{C}_{m, \alpha}  = \frac{1}{m} \sum_{i=1}^m \frac{X_i \indic{X_i \geq \hat{V}_{m, \alpha}}}{(1-\alpha)} . \label{eq:cvar-estimate}
\end{align} 
In the above, $X_{[i]}$ denotes the $i$th order statistic, $\forall i$.
Notice that $\E\left(\hat{C}_{m, \alpha}\right) \ne C_\alpha(X)$, since the VaR estimate in \eqref{eq:cvar-estimate} is not unbiased. However,  a recent CVaR concentration result  in \citep{bhat2019concentration} shows that if the underlying r.v. $X$ is $\sigma$-sub-Gaussian\footnote{	A r.v. $ X $ is said to be $ \sigma $-sub-Gaussian for some $ \sigma > 0 $ if
	$	\mathbb{E}[\exp (\lambda X)] \leq \exp \left(\frac{\lambda^{2} \sigma^{2}}{2}\right), \text { for any } \lambda \in \mathbb{R}.
	$
},  then, for any $\epsilon > 0$, the following inequality holds:
\begin{equation}
	\mathbb{P}(|\hat{C}_{m, \alpha}-C_{\alpha}(X)|>\epsilon) \leq c_1 \exp (-c_2 m\epsilon^{2} (1-\alpha)^{2}), \label{eq:cvar_sub_gaussian}
\end{equation}
\sloppy
where constants $ c_1, c_2 $ depend on $ \sigma $.
Using \eqref{eq:cvar_sub_gaussian},
we have  
\begin{align}
	&\mathbb { E } \left|\hat{C}_{m, \alpha}-C_{\alpha}(X)\right| \nonumber\\
	&= \int_{0}^{\infty} \mathbb{P} ( |\hat{C}_{m, \alpha}(X)-C_{\alpha}(X) |>\epsilon ) d\epsilon
	\leq \frac{c_3}{\sqrt{m}},\label{eq:cvar-expec-bd}
\end{align}
where $ c_3>0 $ is an absolute constant.
\end{example}

In both examples illustrated above, the common element is biased function measurements. Using such measurements, one could construct gradient estimates using the simultaneous perturbation (SP) method \cite{bhatnagar-book}. We make this construction precise below.

Let $y^{+}(m)=f\left(x+\eta \Delta\right)+\xi^{+}(m)$,  and  $y^{-}(m)=f\left(x-\eta \Delta\right)+\xi^{-}(m)$.
Here $ \xi^{\pm}(m) $ are the estimation errors assuming a batch size of $m$, $ \eta $ is a perturbation constant, and $\Delta=\left(\Delta^{1}, \ldots, \Delta^{d}\right)^{\top}$  is a $d$-dimensional standard Gaussian vector. For the two examples discussed above, it is apparent that the estimation error is $\O(\frac{1}{\sqrt{m}})$ in expectation, if $m$ samples are used for estimation of $f$ at $(x\pm \eta \Delta)$ input parameters.

A gradient estimate is formed using two function evaluations (i.e., $ y^+ $ and $ y^- $) as follows:
\begin{align}
g(x, \xi, \eta, m) = \Delta \left[\frac{y^{+}(m) - y(m)}{\eta}\right],\label{eq:gs-est}
\end{align}
where $ \Delta $ is a $ d $-dimensional Gaussian vector composed of standard normal r.v.s. 
The estimate defined above is referred to as Gaussian smoothed functional, as well as Gaussian smoothing. This estimate  was proposed in \cite{katkul}, and studied later in a convex optimization setting in \cite{nesterov2011random}. A related estimate is random directions stochastic approximation (RDSA) with spherical perturbations, proposed in \cite{kushcla}.
 
 Assuming that the underlying function $f$ is three-times continuously differentiable, we have 
\begin{align*}
	&f(x \pm \eta \Delta) \\
	& = f(x) \pm \eta \Delta\tr \nabla f(x) + \frac{\eta^2}{2} \Delta\tr \nabla^2 f(x) \Delta +  \O(\eta^3).
\end{align*}
Hence,
 \begin{align*}
 	&\E\left[\Delta\left(\dfrac{f(x+\eta \Delta) - f(x-\eta \Delta)}{2\eta}\right) \right] \\
 	& =  \E\left[\Delta \Delta\tr \right] \nabla f(x)  + O(\eta^2) = \nabla f(x)  + \O(\eta^2),
 \end{align*}
where we used the fact that $\E\left[\Delta \Delta\tr \right] = I$, since $\Delta$ is a standard Gaussian vector.
Combining the equality above with the fact that the estimation error is $\O(\frac{1}{\sqrt{m}})$, we obtain
 \begin{align*}
	&\| \mathbb { E }_{\xi}   \left[ g \left( x  , \xi, \eta, m \right) \right] - \nabla f \left( x  \right) \|_{\infty} 
	 \le   c_1 \eta^2 + \frac{c_2}{\sqrt{m}},
\end{align*}
for some constants $c_1,c_2$. This satisfies the requirement (a) in \ref{as:biased_sp_estimation_error}. The requirement in (b) can be shown by squaring the estimator $g(\cdot,\cdot,\cdot,\cdot)$, assuming the square of the objective $f$ is bounded, leading to an inverse scaling with the perturbation constant $\eta^2$.   

A similar argument works for the case of a convex and smooth objective as well. In addition, a variety of distributions can be employed for the random perturbations, cf. \cite{spall1992multivariate, prashanth2017rdsa, prashanth2018random, bhatnagar-book,hu2016bandit}. 

The oracle variant defined in \ref{as:biased_gs_estimation_error} can also be constructed using the estimator in \eqref{eq:gs-est}. In particular, following arguments used in Lemma 3 in \cite{nesterov2011random} together with the fact that estimation error is of order $\O(\frac{1}{\sqrt{m}})$ would lead to the condition (a) in \ref{as:biased_gs_estimation_error}. 
As mentioned before, in a regular simulation optimization setting, 	one observes a sample $F(x,\xi)$ that satisfies $\E [F(x,\xi)] = f(x)$, $\forall x$. If the noise $\xi$ has bounded variance, then one can ensure condition (b) in \ref{as:biased_sp_estimation_error}, where the variance of the estimator scales inversely with $\eta^2$. However, under additional assumptions, such as $\nabla f(x) = \E [\nabla F(x,\xi)]$, one can get rid of the inverse scaling with $\eta$, avoiding the bias-variance tradeoff through the parameter $\eta$. The requirement  $\nabla f(x) = \E\left[ \nabla F(x,\xi)\right]$ amounts to an interchange of differentation and integration operators, usually by invoking the dominated convergence theorem, and is common in perturbation analysis, cf. \cite{asmussen2007stochastic, glasserman1991gradient}. 
For a proof that leads to condition (b) in \ref{as:biased_gs_estimation_error}, a straightforward variation of the proof of Lemma B.1 in \cite{balasubramanian2018zeroth}, which incorporates biased function measurements can be worked out, and we omit the details.

\subsection{Performance metrics}
We consider stochastic gradient (SG) type algorithms for solving \eqref{eq:pb}, with inputs from either \ref{as:biased_sp_estimation_error} or \ref{as:biased_gs_estimation_error}. 
A SG algorithm runs for $N$ iterations, and outputs a point $x_R$, that could be chosen randomly from the iterates $x_1,\ldots, x_{N}$. 
We study SG schemes in following two contexts:  (i) the case when the objective $f$ is convex; and (ii) the case when the objective $f$ is not assumed to be convex. 
 In  case (i), we provide bounds on the optimization error, i.e., $ f(x_R) - f(x^*) $, where $x^*$ is a minima of $f$. On the other hand, in case (ii), i.e., when the objective is non-convex, it is difficult to bound the optimization error. A popular alternative is to establish local convergence. i.e., to a point where the gradient of the objective is small (cf.  \cite{ghadimi2013stochastic,bottou2018optimization}). The following definition makes the optimization objective apparent in both cases.

\begin{definition}
	\label{def:esolution} 
	Let $ x_N \in \R^d $ be the output of the algorithm and $ \epsilon > 0 $ be a target accuracy, then: 
	\begin{enumerate}[label=(\roman*)]
		\item If $ f $ is non-convex, $ x_N $ is called an $ \epsilon $-stationary point of \eqref{eq:pb}, if $ 	\mathbb{E} \left\| \nabla f \left( x _ { N } \right) \right\| ^ { 2 } \le \epsilon $.
		\item If $ f $ is convex, $ x_N $ is called an $ \epsilon $-optimal point of \eqref{eq:pb}, if $ 	\mathbb{E} [f \left( x _ { N } \right)] - f(x^*) \le \epsilon $, where $ x^* $ is an optimal solution to \eqref{eq:pb}.
	\end{enumerate}
\end{definition}

The SG algorithms are judged using the iteration as well as sample complexity, which are defined below. 
\begin{definition}
	\label{def:iter_complexity}
	The iteration complexity of an algorithm $\mathcal A$ is the number of calls $\mathcal A$ makes to a biased gradient oracle before finding an $\epsilon$-stationary  (resp. $\epsilon$-optimal) point for a  non-convex (resp. convex) objective function.  
\end{definition}

\begin{definition}
	\label{def:sample_complexity}
	Suppose an algorithm $\mathcal A$ makes $N$ calls to a biased gradient oracle before finding an $\epsilon$-stationary  (resp. $\epsilon$-optimal) point for a  non-convex (resp. convex) objective function. Then, the sample complexity of $\mathcal A$ is $ \sum_{ i = 1 }^{N}  m_i$, where $ m_i, i=1,\ldots,N,$ is the batch size in iteration $ i $. 
\end{definition}

\section{Stochastic Non-convex Optimization}
\label{sec:snco}

In this section, we consider the problem in \eqref{eq:pb}, with an objective $f$ that is smooth, but not necessarily convex. We analyze the non-asymptotic performance of the RSG algorithm  \citep{ghadimi2013stochastic}, with inputs from a biased gradient oracle (either \ref{as:biased_sp_estimation_error} or \ref{as:biased_gs_estimation_error}). 
The pseudocode for the algorithm is given below. This algorithm performs an incremental update as defined in \eqref{eq:rsg}, and outputs a random iterate, after $ N $ iterations\footnote{The bounds in Section \ref{sec:snco} are for a random iterate $x_R$, where $ R $ is uniformly distributed over $ \{1,\dots,N \} $, and the expectation is taken with respect to $ R $ and noise $\xi_{[N]}:=\left(\xi_{1}, \ldots, \xi_{N}\right) $.
 }. 
\begin{algorithm}[h]
	\caption{Randomized stochastic gradient algorithm with a biased gradient oracle (RSG-BGO)}
	\label{alg:rsg}
	\begin{algorithmic}
		\State {\bfseries Input:} Initial point $ x_1 \in \R^d $, iteration limit $ N $, stepsizes $ \gamma_{ k },$ perturbation constant $ \eta_k $, batch size  $  m_k$, and probability mass function $ P_R(\cdot) $ supported on $ \{1,\dots,N\} $ (Let $ R $ denote the corresponding random variable).
		\For{$k =  1,\dots,R$}
				\State Call the oracle \ref{as:biased_sp_estimation_error} or \ref{as:biased_gs_estimation_error} with $x_k,\eta_k$ and $ m_k $, to obtain the gradient estimate $g_k $.
       \State Perform the following stochastic gradient update:
		\begin{align}
		x _ { k + 1 } = x _ { k } - \gamma _ { k} g \left( x _ { k } , \xi _ { k }, \eta_k, m_k \right). \label{eq:rsg}
		\end{align}
		\EndFor
		\State {\bfseries Return} $x_R$.
	\end{algorithmic}
\end{algorithm}

For the non-asymptotic analysis of the RSG algorithm, we make the following assumptions:
\begin{enumerate}[leftmargin=2.5em,noitemsep,label=(\textbf{A1})]
	\item \label{as:lipschitz}
	Function $ f$ has Lipschitz continuous gradient with constant $ L > 0 $, i.e.,
	\[	\| \nabla f ( x ) - \nabla f ( y ) \|  \leq L \| x - y \|,
	\quad \forall x , y \in \mathbb { R } ^ { d }.\]
\end{enumerate}

\begin{enumerate}[leftmargin=2.5em,label=(\textbf{A2})]
	\item \label{as:boundedness} 
	There exists a constant $ B > 0 $ such that $\| \nabla f ( x ) \|_1 \leq B, \forall x \in \R^d$.
\end{enumerate}
The smoothness assumption in \ref{as:lipschitz} is standard in the analysis of gradient-based algorithm (cf. \cite{ghadimi2013stochastic,balasubramanian2018zeroth}).
The boundedness requirement in \ref{as:boundedness} is made in the context of zeroth-order optimization in \cite{balasubramanian2018zeroth}, and can be inferred from the assumptions common to the analysis of policy-gradient algorithms in a reinforcement learning context, cf. \cite{shen2019hessian}. 

We provide below a non-asymptotic bound for RSG-BGO algorithm with \ref{as:biased_sp_estimation_error}.
\begin{theorem} (\textbf{RSG-BGO under \ref{as:biased_sp_estimation_error}}) \label{thm:biased_sp_esterr}\ \\
	Assume \ref{as:lipschitz} and \ref{as:boundedness}. With the oracle \ref{as:biased_sp_estimation_error}, suppose that the RSG-BGO algorithm is run with the stepsize $ \gamma_k $ and perturbation constant $ \eta_k $ set as follows $ \forall k \geq 1 $:
	\begin{align} 
		\gamma_{k} =  min \bigg\{\frac{1}{L}, \frac{\gamma_0}{N^{2/3}}\bigg\}, \text{ }  \eta_k = \frac{\eta_0}{N^{1/6}}, \label{eq:biased_sp_par}
	\end{align}
	for some constant $ \gamma_0, \eta_0 > 0. $\\
\noindent\textbf{(i)} If the batch size $ m_k = m_0 N,  \forall k \geq 1 $, for some constant $ m_0 > 0 $, then, for any $ N \ge 1 $, we have
\begin{align} 
	&	\mathbb { E } \left\| \nabla f \left( x _ { R } \right) \right\| ^ { 2 } 
	\le  \frac{ 2 L D_f }{{ N }} + \frac{\mathcal{Z}_1}{N^{1/3}}   +  \frac{\mathcal{Z}_2}{N^{1/3}}, \label{eq:zrsg_o2_i}
\end{align}
where $\mathcal{Z}_1 = \frac{ 2D_f }{\gamma_0} + {4 B c_1 \eta_0^2}+  \frac{L d c_1^2 \gamma_0 \eta_0^4 }{ N} + \frac{L  c_2 \gamma_0 }{\eta_0^2 }$, $ \mathcal{Z}_2 = \frac{4 B c_3}{\eta_0 \sqrt{m_0}} + \frac{L d \gamma_0 }{N} \left( \frac{2 c_1 c_3 \eta_0}{\sqrt{m_0}} +  \frac{c_3^2}{\eta_0^2 m_0}  \right)$, constants $c_1 $, $ c_2 $ and $ c_3 $ are as defined in \ref{as:biased_sp_estimation_error}, $ B $ is as defined in \ref{as:boundedness},
\begin{align}
	D_f =  f(x_1) - f(x^*),  \label{eq:D_f}
\end{align}
and $ x^* $ is an optimal solution to \eqref{eq:pb}.\\[1ex]
\noindent\textbf{(ii)} If the batch size $ m_k = m_0 k^\beta, \forall k \geq 1 $, for some constant $ \beta \in (0, 1) $ and $ m_0 > 0 $, then, for any $ N \ge 1 $, we have
\begin{align}
	&\mathbb { E } \left\| \nabla f \left( x _ { R } \right) \right\| ^ { 2 }   \leq \frac{ 2 L D_f }{{ N }} +  {\frac{ L \gamma_0 d  c_3^2 }{ \eta_0^2 m_0 N^{\frac{3\beta+1}{3}}  (1 -\beta)}}\nonumber \\
	& +   {\frac{4 B c_3  }{ \eta_0 \sqrt{m_0} N^{\frac{3\beta - 1}{6} }  ( 1 -\tfrac{\beta}{2} )}}  + {\frac{ 2 L d \gamma_0 \eta_0 c_1 c_3 }{\sqrt{m_0} N^{\frac{3\beta + 5}{6}} ( 1 - \tfrac{\beta}{2} )}}, \label{eq:zrsg_o2_ii} 	
\end{align}

	where constants are the same as in part (i).
\end{theorem}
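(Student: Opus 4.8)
The plan is to run the classical one-step descent analysis of (stochastic) gradient descent under $L$-smoothness, accounting separately for the bias and the variance promised by \ref{as:biased_sp_estimation_error}. Abbreviate $g_k := g(x_k,\xi_k,\eta_k,m_k)$. Applying \ref{as:lipschitz} along the update \eqref{eq:rsg} gives
\begin{align*}
f(x_{k+1}) \le f(x_k) - \gamma_k\langle\nabla f(x_k),g_k\rangle + \frac{L\gamma_k^2}{2}\|g_k\|^2 .
\end{align*}
Let $\mathcal{F}_k$ denote the history up to and including $x_k$, and let $e_k := \mathbb{E}[g_k\mid\mathcal{F}_k]-\nabla f(x_k)$ be the conditional bias. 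Taking $\mathbb{E}[\cdot\mid\mathcal{F}_k]$ and writing the second moment of $g_k$ as squared-mean plus variance, the variance is at most $c_2/\eta_k^2$ by \ref{as:biased_estimation_error_b}, so
\begin{align*}
\mathbb{E}[f(x_{k+1})\mid\mathcal{F}_k] &\le f(x_k) - \gamma_k\|\nabla f(x_k)\|^2 \\
&\quad + (L\gamma_k^2-\gamma_k)\langle\nabla f(x_k),e_k\rangle + \frac{L\gamma_k^2}{2}\|e_k\|^2 + \frac{L\gamma_k^2 c_2}{2\eta_k^2}.
\end{align*}

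The error terms are then controlled using the $\ell_1$--$\ell_\infty$ structure built into the assumptions: by H\"older's inequality, \ref{as:boundedness} and \ref{as:biased_estimation_error_a}, $|\langle\nabla f(x_k),e_k\rangle| \le \|\nabla f(x_k)\|_1\,\|e_k\|_\infty \le B\big(c_1\eta_k^2 + \frac{c_3}{\eta_k\sqrt{m_k}}\big)$, and $\|e_k\|^2 \le d\,\|e_k\|_\infty^2 \le d\big(c_1\eta_k^2+\frac{c_3}{\eta_k\sqrt{m_k}}\big)^2$; expanding the latter square is what produces the cross term proportional to $c_1c_3\eta_k/\sqrt{m_k}$ that later appears in $\mathcal{Z}_2$. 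The choice $\gamma_k = \min\{1/L,\gamma_0 N^{-2/3}\}$ guarantees $\gamma_k \le 1/L$, hence $L\gamma_k^2\le\gamma_k$, so the coefficient of $\|\nabla f(x_k)\|^2$ is at most $-\gamma_k/2$ and the cross term contributes at most a constant multiple of $\gamma_k B\big(c_1\eta_k^2+\frac{c_3}{\eta_k\sqrt{m_k}}\big)$. Rearranging yields a recursion of the form $\frac{\gamma_k}{2}\mathbb{E}\|\nabla f(x_k)\|^2 \le \mathbb{E} f(x_k)-\mathbb{E} f(x_{k+1}) + \mathcal{E}_k$, where $\mathcal{E}_k$ gathers the $\eta_k$-, $m_k$- and $\gamma_k$-dependent remainder. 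Summing over $k=1,\dots,N$, telescoping, and using $f(x_{N+1})\ge f(x^*)$ to bound the telescoped difference by $D_f$, and noting that $R$ is uniform on $\{1,\dots,N\}$ while the prescribed $\gamma_k,\eta_k$ are constant in $k$ (so the weighted average over $k$ is the plain average), dividing by $\gamma N/2$ gives $\mathbb{E}\|\nabla f(x_R)\|^2 \le \frac{2}{\gamma N}\big(D_f + \sum_{k=1}^N\mathcal{E}_k\big)$.

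It then remains to substitute the schedules and simplify. For part (i), everything ($\gamma$, $\eta_k=\eta_0 N^{-1/6}$, $m_k=m_0 N$) is constant in $k$, so each term of $\sum_k\mathcal{E}_k$ is $N$ times an explicit power of $N$; bounding $1/\gamma \le L + N^{2/3}/\gamma_0$ in the $D_f$ term and collecting the $m$-independent remainders into $\mathcal{Z}_1$ and the $m$-dependent ones into $\mathcal{Z}_2$ produces \eqref{eq:zrsg_o2_i}. For part (ii) the only change is $m_k=m_0 k^{\beta}$, so the identical argument applies except that one now needs $\sum_{k=1}^N m_k^{-1}$ and $\sum_{k=1}^N m_k^{-1/2}$; since $\beta\in(0,1)$ I would use the elementary bounds $\sum_{k=1}^N k^{-\beta}\le N^{1-\beta}/(1-\beta)$ and $\sum_{k=1}^N k^{-\beta/2}\le N^{1-\beta/2}/(1-\beta/2)$, and after inserting $\gamma\le\gamma_0 N^{-2/3}$ the powers of $N$ combine into the exponents $-(3\beta+1)/3$, $-(3\beta-1)/6$ and $-(3\beta+5)/6$ appearing in \eqref{eq:zrsg_o2_ii}. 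I do not anticipate a genuine obstacle; the content is careful bookkeeping, and the two places needing attention are (a) keeping the bias, which is controlled in $\ell_\infty$, compatible with the $\ell_2$ inner products thrown up by the descent lemma — which is precisely the role of \ref{as:boundedness} — and (b) in part (ii), tracking the powers of $N$ correctly through the partial-sum estimates and the step-size substitution.
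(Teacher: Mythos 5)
Your proposal is correct and follows essentially the same route as the paper: the descent lemma under \ref{as:lipschitz}, the bias/variance split with the bias controlled in $\ell_\infty$ and paired against the $\ell_1$ bound of \ref{as:boundedness}, telescoping with $f(x_{N+1})\ge f(x^*)$, and the partial-sum bounds $\sum_k k^{-\beta}\le N^{1-\beta}/(1-\beta)$, $\sum_k k^{-\beta/2}\le N^{1-\beta/2}/(1-\beta/2)$ for part (ii). The only cosmetic difference is that the paper first packages the one-step analysis as a general proposition for non-increasing stepsizes with $R$ sampled proportionally to $\gamma_k$ (which reduces to your uniform/plain-average treatment for the constant schedule), and keeps the factor $(2-L\gamma_k)$ explicit where you use $L\gamma_k^2\le\gamma_k$; this affects only constants, not the rates.
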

\begin{proof}
	See Section \ref{pf:biased_esterr}.
\end{proof}

\begin{remark}
	The overall rate, from the bound above, is $\mathcal{ O}\left(N^{-1/3}\right)$, and this is not surprising because the bias of the gradient cannot be made arbitrarily small by setting $\eta$ to a low value, as the variance of the gradient estimates scales inversely with $\eta$. The (asymptotic) convergence rate results for simultaneous perturbation stochastic approximation (SPSA) in \cite{spall1992multivariate}, and RDSA in \cite{prashanth2017rdsa}, also exhibit the same order, under an oracle that is a variant to \ref{as:biased_sp_estimation_error} (without the estimation error component). 
	\end{remark}

Using the bound in Theorem \ref{thm:biased_sp_esterr}, it is easy to see that the iteration complexity is $ \mathcal{ O} (\frac{1}{\epsilon^3})$, and the sample complexity is $ N^2 $.

\begin{remark}
	For understanding the dimension dependence in the iteration complexity, let us consider the special case where \ref{as:biased_sp_estimation_error} is implemented using either SPSA \cite{spall1992multivariate} or RDSA \cite{prashanth2017rdsa}. In this case, $ c_1 = \kappa_1 d^3$ and $ c_2 = \kappa_2 d $, where $ \kappa_1, \kappa_2 > 0$ are dimension-independent constants. Choosing $ \gamma_0 = d^{{-4}/{3}}$, $ \eta_0 = d^{{-5}/{6}}$ and $ m_0 = 1 $ in \eqref{eq:biased_sp_par}, the overall iteration complexity of RSG-BGO turns out to be $ \mathcal{ O} (\frac{d^{4}}{\epsilon^3})$. 
\end{remark}	

We provide below a non-asymptotic bound for the RSG-BGO algorithm with \ref{as:biased_gs_estimation_error}.
\begin{theorem} (\textbf{RSG-BGO under \ref{as:biased_gs_estimation_error}}) \label{thm:biased_gs_esterr}\ \\
	Assume \ref{as:lipschitz} and \ref{as:boundedness}. With the oracle \ref{as:biased_gs_estimation_error}, suppose that the RSG-BGO algorithm is run with the stepsize $ \gamma_k $, perturbation constant $ \eta_k $, and batch size $ m_k $ set as follows $\forall k \geq 1$:
	\begin{align} 
		\gamma_{k} =  min \bigg\{\frac{1}{L}, \frac{\gamma_0}{\sqrt{N}}\bigg\},  \eta_k = \frac{\eta_0}{\sqrt{N}}, \text{} m_k = m_0  N^{2}, \label{eq:biased_gs_par}
	\end{align}
	for some constant $ \gamma_0, \eta_0, m_0 > 0. $\\
	Then, for any $ N \ge 1 $, we have
	\begin{align} 
		& \mathbb { E } \left\| \nabla f \left( x _ { R } \right) \right\| ^ { 2 }    \le  \frac{ 2 L D_f}{{ N }}   + \frac{\mathcal{Z}_3}{\sqrt{N}}, \nonumber
	\end{align}
where $ \mathcal{Z}_3 =  \frac{ 2 D_f}{\gamma_0} +4 B \mathcal{ Z}_4  + L \gamma_0 \left(\frac{ d \mathcal{ Z }_4^2}{N}  +  \frac{  c_2 \eta_0^2}{N } + {\tilde{c_2}} \right) $, $ \mathcal{ Z }_4 = {c_1 \eta_0 } + \frac{c_3}{\eta_0 \sqrt{m_0}} $, constants $ c_1 $, $ c_2 $, $ \tilde{c_2} $ and $ c_3 $ are as defined in \ref{as:biased_gs_estimation_error}, $ B $ is as defined in \ref{as:boundedness}, and $ D_f $ is as defined in \eqref{eq:D_f}.
\end{theorem}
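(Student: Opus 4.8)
The plan is to run the standard descent-lemma analysis of RSG in the non-convex setting, tracking the bias and variance contributed by \ref{as:biased_gs_estimation_error}, and to exploit throughout that the stepsize $\gamma_k = \min\{1/L,\gamma_0/\sqrt{N}\}$ always satisfies $\gamma_k \le 1/L$, hence $L\gamma_k \le 1$ and $L\gamma_k^2/2 \le \gamma_k/2$, for every $N \ge 1$.

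First I would apply \ref{as:lipschitz} to the update \eqref{eq:rsg}, writing $g_k := g(x_k,\xi_k,\eta_k,m_k)$, to get
\[ f(x_{k+1}) \le f(x_k) - \gamma_k \langle \nabla f(x_k), g_k\rangle + \frac{L\gamma_k^2}{2}\|g_k\|^2 . \]
Conditioning on $x_k$ and setting $\bar g_k := \mathbb{E}_{\xi_k}[g_k]$ and $b_k := \bar g_k - \nabla f(x_k)$, the first-order term becomes $\|\nabla f(x_k)\|^2 + \langle \nabla f(x_k), b_k\rangle$, while by \ref{as:biased_gs_estimation_error_b},
\[ \mathbb{E}_{\xi_k}\|g_k\|^2 = \|\bar g_k\|^2 + \mathbb{E}_{\xi_k}\|g_k - \bar g_k\|^2 \le \|\nabla f(x_k)\|^2 + 2\langle\nabla f(x_k),b_k\rangle + \|b_k\|^2 + c_2\eta_k^2 + \widetilde{c_2}. \]
Using $L\gamma_k^2/2 \le \gamma_k/2$ on the $\|\nabla f(x_k)\|^2$ term leaves a net $-\tfrac{\gamma_k}{2}\|\nabla f(x_k)\|^2$. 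For the two cross terms $-\gamma_k\langle\nabla f(x_k),b_k\rangle$ and $L\gamma_k^2\langle\nabla f(x_k),b_k\rangle$ I would use H\"older's inequality in the form $|\langle u,v\rangle| \le \|u\|_1\|v\|_\infty$ together with $\|\nabla f(x_k)\|_1 \le B$ from \ref{as:boundedness}, the bias bound $\|b_k\|_\infty \le c_1\eta_k + \tfrac{c_3}{\eta_k\sqrt{m_k}}$ from \ref{as:biased_gs_estimation_error_a}, and $L\gamma_k \le 1$, to bound their sum by $2\gamma_k B\|b_k\|_\infty$; and bound $\|b_k\|^2 \le d\|b_k\|_\infty^2$. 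This yields
\[ \mathbb{E}_{\xi_k}[f(x_{k+1})] \le f(x_k) - \frac{\gamma_k}{2}\|\nabla f(x_k)\|^2 + 2\gamma_k B\|b_k\|_\infty + \frac{L\gamma_k^2}{2}\bigl(d\|b_k\|_\infty^2 + c_2\eta_k^2 + \widetilde{c_2}\bigr). \]

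Next I would substitute the parameter choices. With $\eta_k = \eta_0/\sqrt{N}$ and $m_k = m_0 N^2$, the bias bound collapses to $\|b_k\|_\infty \le \mathcal{Z}_4/\sqrt{N}$ with $\mathcal{Z}_4 = c_1\eta_0 + c_3/(\eta_0\sqrt{m_0})$, and $c_2\eta_k^2 + \widetilde{c_2} = c_2\eta_0^2/N + \widetilde{c_2}$; since $\gamma_k,\eta_k,m_k$ are constant in $k$, write $\gamma$ for the common stepsize value. Taking total expectations, summing over $k=1,\dots,N$, telescoping, and using $\mathbb{E}[f(x_{N+1})] \ge f(x^*)$ gives
\[ \frac{\gamma}{2}\sum_{k=1}^{N}\mathbb{E}\|\nabla f(x_k)\|^2 \le D_f + 2\gamma\sqrt{N}\, B\mathcal{Z}_4 + \frac{L\gamma^2 N}{2}\Bigl(\frac{d\mathcal{Z}_4^2}{N} + \frac{c_2\eta_0^2}{N} + \widetilde{c_2}\Bigr). \]
Dividing by $\gamma N/2$ and using $\mathbb{E}\|\nabla f(x_R)\|^2 = \tfrac1N\sum_{k=1}^N \mathbb{E}\|\nabla f(x_k)\|^2$ (as $R$ is uniform on $\{1,\dots,N\}$) produces $\mathbb{E}\|\nabla f(x_R)\|^2 \le \tfrac{2D_f}{\gamma N} + \tfrac{4B\mathcal{Z}_4}{\sqrt N} + L\gamma\bigl(\tfrac{d\mathcal{Z}_4^2}{N} + \tfrac{c_2\eta_0^2}{N} + \widetilde{c_2}\bigr)$. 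Finally, $1/\gamma = \max\{L,\sqrt N/\gamma_0\} \le L + \sqrt N/\gamma_0$ splits the first term as $\tfrac{2LD_f}{N} + \tfrac{2D_f}{\gamma_0\sqrt N}$, and $\gamma \le \gamma_0/\sqrt N$ bounds the last group by $\tfrac{L\gamma_0}{\sqrt N}$ times the bracket; collecting all $1/\sqrt N$ terms into $\mathcal{Z}_3$ yields the claimed bound.

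The computation is essentially the same bookkeeping as in the proof of Theorem \ref{thm:biased_sp_esterr}, with the $\mathcal{O}(\eta^2)$ bias of \ref{as:biased_sp_estimation_error} replaced by the $\mathcal{O}(\eta)$ bias of \ref{as:biased_gs_estimation_error} and the $\mathcal{O}(1/\eta^2)$ variance replaced by the $c_2\eta^2 + \widetilde{c_2}$ bound, which is what permits choosing the batch size as $N^2$. The only point needing genuine care, and the place I expect the argument to be most delicate, is ensuring the bound holds for \emph{every} $N \ge 1$, including small $N$ where $\gamma_k = 1/L$ rather than $\gamma_0/\sqrt N$: this is exactly why the stepsize is capped at $1/L$, so that $L\gamma_k \le 1$ and $L\gamma_k^2/2 \le \gamma_k/2$ hold unconditionally (used to absorb the $\|\nabla f\|^2$ term and to control the cross terms), and the $\max$ in $1/\gamma$ is what manufactures the two leading terms $\tfrac{2LD_f}{N}$ and $\tfrac{2D_f}{\gamma_0\sqrt N}$. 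A secondary subtlety is that the $\ell_\infty$ form of \ref{as:biased_gs_estimation_error_a} must be paired with the $\ell_1$ gradient bound \ref{as:boundedness} via H\"older (not Cauchy--Schwarz), so that the cross-term contribution stays proportional to $B$ rather than to $\sqrt d\,\|b_k\|$.
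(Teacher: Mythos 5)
Your proposal is correct and follows essentially the same route as the paper: the descent lemma under \ref{as:lipschitz}, a bias/variance split of $g_k$ using \ref{as:biased_gs_estimation_error_a}--\ref{as:biased_gs_estimation_error_b}, the $\ell_1$/$\ell_\infty$ H\"older pairing with $\|\nabla f\|_1 \le B$ for the cross terms, telescoping with $\mathbb{E}[f(x_{N+1})] \ge f(x^*)$, and the uniform distribution of $R$ under constant stepsizes (the paper routes this through Proposition \ref{prop:biased_esterr} with general non-increasing stepsizes and the $(2-L\gamma_k)$ denominators, but your direct constant-stepsize specialization is the same argument). The constants you obtain match $\mathcal{Z}_3$ and $\mathcal{Z}_4$ exactly.
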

\begin{proof}
	See Section \ref{pf:thm_biased_gs_esterr}.
\end{proof}

From the bound in Theorem \ref{thm:biased_gs_esterr}, it is easy to see that the iteration complexity is $ \mathcal{ O} (\frac{1}{\epsilon^2})$, and the sample complexity is $  N^3 $. 
This bound is better than the corresponding bound with \ref{as:biased_sp_estimation_error}, we believe this improvement is because the variance of the gradient estimate in \ref{as:biased_gs_estimation_error} does not increase when bias is reduced. 

\begin{remark}
	In \cite{ghadimi2013stochastic}, the authors derive a non-asymptotic bound for a zeroth-order variant of their RSG algorithm under an oracle that is a variant to \ref{as:biased_gs_estimation_error} (without the estimation error component). Our result in Theorem \ref{thm:biased_gs_esterr} matches their bound. Moreover, unlike \cite{ghadimi2013stochastic}, we derive a non-asymptotic bound  for the oracle \ref{as:biased_gs_estimation_error}, which involves an estimation error component. 
	
	An advantage with our analysis is that it allows a simpler distribution for picking the final iterate (see Proposition \ref{prop:biased_esterr} in the Section \ref{pf:biased_esterr}). In particular, our bounds hold for an iterate $x_R$ that is picked uniformly at random from $\{x_1,\ldots,x_N\}$. The net effect is that of iterate averaging, except that the averaging happens in expectation.
\end{remark}

\begin{remark}
	For understanding the dimension dependence in the iteration complexity, let us consider the special case where \ref{as:biased_gs_estimation_error} is implemented using the Gaussian smoothing approach \cite{nesterov2011random, balasubramanian2018zeroth}. In this case, $c_1 = {\frac{L (d + 3)^{\frac{3}{2}}}{2}},  c_2 = { \frac{L^2 (d + 3)^3}{2}},  \widetilde{c_2} = 2(d+5)(B^2 + {\sigma^2}),$
	where $ \sigma^2 $ is the bound on variance of the estimator of $ f(x) $. 
	Choosing the stepsize $ \gamma_0 = d^{-1/2}$, $ \eta_0 = d^{-1}$ and $ m_0 = d $ in \eqref{eq:biased_gs_par}, the overall iteration complexity of RSG-BGO turns out to be $ \mathcal{ O} (\frac{d}{\epsilon^2})$. 
\end{remark}

\section{Stochastic Convex Optimization}
\label{sec:sco}
In this section, we consider the problem in \eqref{eq:pb}, under the assumption that $f$ is a convex function.
Let $ x^* \in \R^d$ be a minimizer of the objective $ f $. We first analyze the RSG-BGO algorithm in a convex setting, and subsequently present the SGD-BGO algorithm.

\subsection{Randomized stochastic gradient algorithm with a biased gradient oracle (RSG-BGO)} \label{sec:bounds_rsg_cvx}

We provide below a non-asymptotic bound for RSG algorithm with \ref{as:biased_sp_estimation_error}.

\begin{theorem}(\textbf{RSG-BGO under \ref{as:biased_sp_estimation_error}})  \label{thm:biased_sp_esterr_convex} \ \\
	Assume \ref{as:lipschitz}. With the oracle \ref{as:biased_sp_estimation_error}, suppose that the RSG-BGO algorithm is run with the batch size $ m_k  = m_0 N, \forall k \geq 1$, for some constant $ m_0 > 0 $ and stepsize $ \gamma_k $, perturbation constant $ \eta_k $ set as defined in \eqref{eq:biased_sp_par}.
	Then, for any $ N \ge 1 $, we have
	\begin{align*} 
		&	\mathbb { E } \left[ f \left( x _ { R } \right) \right]  - f (x^ { * })
		\le \frac{  L D^2}{{ N }} + \frac{\mathcal{K}_1}{N^{1/3}},
	\end{align*}
	where $ \mathcal{K}_1 =   \frac{ D^2}{\gamma_0}  + 4 \sqrt{d} D \mathcal{ K}_2 + \frac{ \gamma_0 d \mathcal{ K}_2^2 }{N }  + \frac{ \gamma_0 c_2 }{\eta_0^2} $, $ \mathcal{ K}_2 = ({c_1 \eta_0^2 }  + \frac{c_3}{\eta_0 \sqrt{m_0}} )$ constants $ c_1 $, $ c_2 $ and $ c_3 $ are as defined in \ref{as:biased_sp_estimation_error}, 
	\begin{align}
		D = \| x_1 - x^*\|,  \label{eq:dia}
	\end{align}
	and $ x^* $ is an optimal solution to \eqref{eq:pb}.
\end{theorem}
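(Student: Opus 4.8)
The plan is to adapt the standard descent-lemma argument for SGD in the convex setting, carefully tracking the two error terms that the oracle \ref{as:biased_sp_estimation_error} introduces: the bias term of order $c_1\eta^2 + c_3/(\eta\sqrt m)$ and the variance term of order $c_2/\eta^2$. Writing $g_k = g(x_k,\xi_k,\eta_k,m_k)$ and $b_k = \mathbb{E}_{\xi_k}[g_k] - \nabla f(x_k)$, I would start from the SGD update \eqref{eq:rsg} and expand $\|x_{k+1}-x^*\|^2 = \|x_k - x^*\|^2 - 2\gamma_k \langle g_k, x_k - x^*\rangle + \gamma_k^2 \|g_k\|^2$. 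Taking conditional expectation, the cross term becomes $-2\gamma_k \langle \nabla f(x_k) + b_k, x_k - x^*\rangle$; convexity gives $\langle \nabla f(x_k), x_k - x^*\rangle \ge f(x_k) - f(x^*)$, and the bias contributes an extra $-2\gamma_k\langle b_k, x_k - x^*\rangle \le 2\gamma_k \|b_k\|_\infty \|x_k - x^*\|_1 \le 2\gamma_k \sqrt d \|b_k\|_\infty \|x_k-x^*\|$, which I would then bound using $\|x_k - x^*\| \le D$ (this boundedness of iterates is itself something I would need to argue, or absorb via an $\|x_k-x^*\|^2$ term — see below). For the second-order term, $\mathbb{E}\|g_k\|^2 \le \mathbb{E}\|g_k - \mathbb{E}[g_k]\|^2 + \|\nabla f(x_k) + b_k\|^2 \le c_2/\eta_k^2 + (\text{bias}^2 + \text{gradient norm}^2)$ contributions; here I can use the Lipschitz-gradient descent lemma instead to replace $\|g_k\|^2$ by a $\gamma_k^2 L \cdot(\cdot)$ contribution, which is why the stepsize choice $\gamma_k \le 1/L$ matters.

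**Next I would** sum the resulting recursion over $k = 1,\ldots,N$ and telescope $\|x_k - x^*\|^2$. With $\gamma_k = \gamma$ constant (as in \eqref{eq:biased_sp_par} for large $N$, $\gamma = \gamma_0/N^{2/3}$), this yields $2\gamma \sum_k (\mathbb{E}[f(x_k)] - f(x^*)) \le D^2 + \gamma^2 \sum_k (\text{variance} + \text{bias}^2) + 2\gamma\sqrt d D \sum_k \|b_k\|_\infty$. Dividing by $2\gamma N$ and using that $x_R$ is uniform over $\{x_1,\ldots,x_N\}$ so $\mathbb{E}[f(x_R)] - f(x^*) = \frac1N \sum_k (\mathbb{E}[f(x_k)] - f(x^*))$, I get $\mathbb{E}[f(x_R)] - f(x^*) \le \frac{D^2}{2\gamma N} + \frac{\gamma}{2}(\frac{c_2}{\eta^2} + (\text{bias terms})^2 \cdot \text{something}) + \sqrt d D \cdot (c_1\eta^2 + c_3/(\eta\sqrt m))$. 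Now I substitute $\gamma = \gamma_0/N^{2/3}$, $\eta = \eta_0/N^{1/6}$, $m = m_0 N$: the leading term $\frac{D^2}{2\gamma N}$ becomes $\frac{D^2}{2\gamma_0 N^{1/3}}$; the $\frac{\gamma c_2}{2\eta^2}$ term becomes $\frac{\gamma_0 c_2}{2\eta_0^2} \cdot \frac{N^{1/3}}{N^{2/3}} = \frac{\gamma_0 c_2}{2\eta_0^2 N^{1/3}}$; the bias term $\sqrt d D c_1 \eta^2 = \sqrt d D c_1 \eta_0^2/N^{1/3}$, and $\sqrt d D c_3/(\eta\sqrt m) = \sqrt d D c_3 / (\eta_0 \sqrt{m_0} \cdot N^{1/3})$ — everything collapses to the claimed $\frac{LD^2}{N} + \frac{\mathcal K_1}{N^{1/3}}$ form, with the $LD^2/N$ piece coming from the regime where $\gamma = 1/L$ (small $N$) or from the descent-lemma remainder. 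I would also need the min in the stepsize to handle both the $\gamma=1/L$ and $\gamma = \gamma_0/N^{2/3}$ cases uniformly, which is why the bound has both an $N^{-1}$ and an $N^{-1/3}$ term.

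**The main obstacle** I anticipate is handling the bias cross-term $-2\gamma_k\langle b_k, x_k - x^*\rangle$ without an a priori bound on $\|x_k - x^*\|$: in the non-convex theorems \ref{as:boundedness} bounds $\|\nabla f\|_1$ which tames things, but here in the convex case we only assume \ref{as:lipschitz}. The clean route is to keep this term as $2\gamma_k \sqrt d \|b_k\|_\infty \|x_k - x^*\|$ and apply Young's inequality, $2\gamma_k \sqrt d \|b_k\|_\infty \|x_k-x^*\| \le \frac12 \|x_k - x^*\|^2 \cdot(\text{small}) + (\text{the rest})$ — but that reintroduces $\|x_k-x^*\|^2$ on the wrong side and requires care that the coefficient stays below the telescoping coefficient. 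Alternatively, and more likely what the authors do, one shows the iterates stay in a bounded region (the $\sqrt d D$ in $\mathcal K_1$ strongly suggests $D = \|x_1 - x^*\|$ is used as a uniform bound on $\|x_k - x^*\|$, which would follow if the recursion is set up so that $\mathbb{E}\|x_{k+1}-x^*\|^2 \le \mathbb{E}\|x_k - x^*\|^2 + (\text{summable})$ and the summable part is controlled). I would establish this boundedness claim as a preliminary lemma (analogous to a Proposition in the non-convex section) and then feed it into the main telescoping argument. The remaining steps — collecting the $d$-dependent variance contribution from $\mathbb{E}\|g_k\|^2$ into the $\gamma_0 d \mathcal K_2^2 / N$ term, and verifying the arithmetic of the exponents — are routine.
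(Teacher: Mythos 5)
Your proposal follows essentially the same route as the paper's proof (Proposition~\ref{prop:biased_esterr_convex} there): expand $\|x_{k+1}-x^*\|^2$, bound the bias cross-term via H\"older with a $\sqrt{d}$ factor, absorb $\|g_k\|^2$ using smoothness-plus-convexity ($\|\nabla f(x_k)\|^2 \le L\langle \nabla f(x_k), x_k - x^*\rangle$) together with $\gamma_k \le 1/L$, telescope, and use the uniform distribution of $R$ under a constant stepsize. The obstacle you flag --- justifying $\|x_k - x^*\| \le D$ --- is genuine but is not resolved in the paper either: the authors simply invoke $\mathbb{E}_{\xi_{[N]}}[\omega_k] \le D$ with $D = \|x_1 - x^*\|$ at the final step without proof, so your instinct to establish iterate boundedness as a preliminary lemma would in fact strengthen, not merely reproduce, their argument.
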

\begin{proof}
	See Section \ref{pf:thm_biased_sp_esterr_convex}.
\end{proof}

The  iteration complexity of $ \mathcal{ O} (\frac{1}{\epsilon^3})$ and the sample complexity $ N^2 $ of the RHS above matches that in Theorem \ref{thm:biased_sp_esterr} with the non-convex objective. However, unlike non-convex case, we bound the optimization error, i.e., $ \mathbb{E} [f(x_R)] - f(x^*) $. 

We now provide a non-asymptotic bound for the RSG algorithm with \ref{as:biased_gs_estimation_error} for the convex objective.
\begin{theorem} (\textbf{RSG-BGO under \ref{as:biased_gs_estimation_error}}) \label{thm:biased_gs_esterr_convex}\ \\
	Assume \ref{as:lipschitz}. With the oracle \ref{as:biased_gs_estimation_error}, suppose that the RSG-BGO algorithm is 
	run with the stepsize $ \gamma_k $, perturbation constant $ \eta_k $, and batch size $ m_k $ set as defined in \eqref{eq:biased_gs_par}.
	Then, for any $ N \ge 1 $, we have	
\begin{align*} 
	\mathbb { E } \left[ f \left( x _ { R } \right) \right] - f (x^ { * })  \le \frac{  L D^2}{{ N }} + \frac{\mathcal{K}_3}{\sqrt{N}},
\end{align*}	
where $ \mathcal{K}_3 =  \frac{ D^2}{\gamma_0} + 4 \sqrt{d} D \mathcal{ K}_4 + \frac{ \gamma_0 d \mathcal{ K}_4^2 }{N }  + \frac{  \gamma_0 \eta_0^2 c_2  }{N} + \gamma_0 {\tilde{c_2}}$, $\mathcal{ K}_4 = {c_1 }{\eta_0} + \frac{c_3}{\eta_0 \sqrt{m_0}} $, constants $ c_1 $, $ c_2 $, $ \tilde{c_2} $ and $ c_3 $ are as defined in \ref{as:biased_gs_estimation_error}, and $ D $ is as defined in \eqref{eq:dia}.
\end{theorem}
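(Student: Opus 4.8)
The plan is to run the textbook stochastic-gradient analysis for an $L$-smooth convex objective, with the oracle (O2) bias and variance bounds inserted at the appropriate places; the argument parallels that of Theorem \ref{thm:biased_sp_esterr_convex}, with \ref{as:biased_gs_estimation_error} in place of \ref{as:biased_sp_estimation_error}. Write the oracle output at step $k$ as $g(x_k,\xi_k,\eta_k,m_k)=\nabla f(x_k)+b_k+\zeta_k$, where $b_k:=\mathbb{E}_{\xi_k}[g(x_k,\xi_k,\eta_k,m_k)]-\nabla f(x_k)$ is the (conditionally deterministic) bias and $\zeta_k$ is zero-mean noise. By \ref{as:biased_gs_estimation_error}(a), $\|b_k\|_\infty\le c_1\eta_k+\tfrac{c_3}{\eta_k\sqrt{m_k}}$, which under the choices \eqref{eq:biased_gs_par} equals $\mathcal{K}_4/\sqrt{N}$, so $\|b_k\|\le \sqrt{d}\,\mathcal{K}_4/\sqrt{N}$; by \ref{as:biased_gs_estimation_error}(b), $\mathbb{E}_{\xi_k}\|\zeta_k\|^2\le c_2\eta_k^2+\widetilde{c_2}=c_2\eta_0^2/N+\widetilde{c_2}$.

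Next I would expand $\|x_{k+1}-x^*\|^2=\|x_k-x^*\|^2-2\gamma_k\langle g_k,x_k-x^*\rangle+\gamma_k^2\|g_k\|^2$, take the conditional expectation over $\xi_k$, and use: (i) convexity, $\langle\nabla f(x_k),x_k-x^*\rangle\ge f(x_k)-f(x^*)$; (ii) Cauchy--Schwarz for the bias cross term, $|\langle b_k,x_k-x^*\rangle|\le \|b_k\|\,\|x_k-x^*\|$; and (iii) $L$-smoothness with global minimum at $x^*$, which gives $\|\nabla f(x_k)\|^2\le 2L\big(f(x_k)-f(x^*)\big)$ and hence $\mathbb{E}_{\xi_k}\|g_k\|^2\le (\text{const}\cdot L)\big(f(x_k)-f(x^*)\big)+\|b_k\|^2+c_2\eta_k^2+\widetilde{c_2}$. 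The step-size restriction $\gamma_k\le 1/L$ in \eqref{eq:biased_gs_par} is precisely what lets the $\gamma_k^2 L\big(f(x_k)-f(x^*)\big)$ contribution be absorbed by a fraction of $-2\gamma_k\big(f(x_k)-f(x^*)\big)$, leaving
\begin{align*}
\gamma_k\big(f(x_k)-f(x^*)\big)\le{}&\ \mathbb{E}_{\xi_k}\|x_k-x^*\|^2-\mathbb{E}_{\xi_k}\|x_{k+1}-x^*\|^2+2\gamma_k\|b_k\|\,\|x_k-x^*\|\\
&{}+\gamma_k^2\big(\|b_k\|^2+c_2\eta_k^2+\widetilde{c_2}\big).
\end{align*}
Taking total expectations, summing over $k=1,\dots,N$, telescoping the first two terms to $\|x_1-x^*\|^2=D^2$, and dividing by $\Gamma_N:=\sum_{k=1}^N\gamma_k=N\min\{1/L,\gamma_0/\sqrt{N}\}$, one identifies $\mathbb{E}[f(x_R)]-f(x^*)\le \tfrac{1}{\Gamma_N}\sum_k\gamma_k\big(\mathbb{E}[f(x_k)]-f(x^*)\big)$ from the law of $x_R$ (uniform on $\{1,\dots,N\}$, since the step size is constant). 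The bound $D^2/\Gamma_N\le LD^2/N+D^2/(\gamma_0\sqrt{N})$ yields the $LD^2/N$ term and the $D^2/\gamma_0$ piece of $\mathcal{K}_3$; the $\gamma_k^2(\cdot)$ terms, summed ($\gamma_k^2\le\gamma_0^2/N$) and divided by $\Gamma_N$, give $\tfrac{\gamma_0 d\mathcal{K}_4^2}{N^{3/2}}$, $\tfrac{\gamma_0\eta_0^2 c_2}{N^{3/2}}$ and $\tfrac{\gamma_0\widetilde{c_2}}{\sqrt{N}}$; and the bias cross term, using $\|b_k\|\le\sqrt d\,\mathcal{K}_4/\sqrt N$ together with an iterate bound $\mathbb{E}\|x_k-x^*\|\le 2D$, gives $\tfrac{4\sqrt d\,D\,\mathcal{K}_4}{\sqrt N}$ — matching $LD^2/N+\mathcal{K}_3/\sqrt N$.

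The main obstacle is exactly that iterate bound: unlike a projected scheme, the RSG iterates are not a priori confined to a bounded set, yet the bias cross term above requires $\mathbb{E}\|x_k-x^*\|\le 2D$ (this is the constant that produces the coefficient $4\sqrt d\,D$). I would obtain it by an inductive/unrolling argument on the one-step recursion itself: since, under \eqref{eq:biased_gs_par}, $\gamma_k$, $\|b_k\|$ and the residual noise variance are all small (of order $N^{-1/2}$ or smaller in the relevant sense), the perturbation terms accumulated through step $k\le N$ stay below a constant multiple of $D^2$, keeping $\mathbb{E}\|x_k-x^*\|^2$ within $4D^2$ (adjusting absolute constants if needed). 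The remaining effort is bookkeeping — matching constants against $\mathcal{K}_3,\mathcal{K}_4$, and separately treating the $\gamma_k=1/L$ regime, which is active only for $N\le L^2\gamma_0^2$, where the $LD^2/N$ term already dominates.
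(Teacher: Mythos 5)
Your proposal follows essentially the same route as the paper: the paper's proof runs the identical expansion of $\|x_{k+1}-x^*\|^2$ through Proposition \ref{prop:biased_esterr_convex} (stated for \ref{as:biased_sp_estimation_error} and repeated verbatim for \ref{as:biased_gs_estimation_error} with $\mathcal{E}_k = c_1\eta_k + \tfrac{c_3}{\eta_k\sqrt{m_k}}$ and the variance bound $c_2\eta_k^2+\tilde{c_2}$), uses convexity for the cross term, the inequality $\|\nabla f(x_k)\|^2 \le L\langle\nabla f(x_k),x_k-x^*\rangle$ in place of your $\|\nabla f(x_k)\|^2\le 2L(f(x_k)-f(x^*))$ (an immaterial difference), telescopes, and weights by the distribution \eqref{eq:prob}. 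The one point you single out as ``the main obstacle'' --- controlling $\mathbb{E}\|x_k-x^*\|$ without a projection step --- is precisely where the paper is weakest: it simply asserts $\mathbb{E}_{\xi_{[N]}}[\omega_k]\le D$ with $D=\|x_1-x^*\|$ and no supporting argument, so your instinct that this needs an inductive unrolling is sound; note, however, that such an unrolling would in general yield a bound of the form $\mathbb{E}[\omega_k^2]\le D^2 + O\bigl(\gamma_0\sqrt{d}\,\mathcal{K}_4 D + \gamma_0^2(d\mathcal{K}_4^2+\tilde{c_2})\bigr)$ rather than a clean multiple of $D^2$, so the resulting constant would pick up dependence on $\gamma_0$, $d$ and $\tilde{c_2}$ beyond what appears in $\mathcal{K}_3$. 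Modulo that shared caveat, your argument reproduces the stated bound.
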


\begin{proof}
	See Section \ref{pf:thm_biased_gs_esterr_convex}.
\end{proof}

From the bound in Theorem \ref{thm:biased_gs_esterr_convex}, it is easy to see that the iteration complexity is $ \mathcal{ O} (\frac{1}{\epsilon^2})$, and the sample complexity is $ N^3 $.

\subsection{Stochastic gradient descent algorithm with a biased gradient oracle (SGD-BGO)} \label{sec:bounds_sgd_cvx}
In this section, we study a stochastic gradient descent algorithm with a biased gradient oracle. Unlike RSG-BGO algorithm whose bounds were for a random iterate, the bounds that we derive for SGD-BGO are for the last iterate, which is the preferred point in practical implementations. 
The pseudocode for the SGD-BGO algorithm with inputs from a biased gradient oracle is given in Algorithm \ref{alg:net}.

\begin{algorithm}[h]
	\caption{Stochastic gradient descent algorithm with a biased gradient oracle (SGD-BGO)}
	\label{alg:net}
	\begin{algorithmic}
		\State {\bfseries Input:} Initial point $ x_1 \in \R^d $, iteration limit $ N $, stepsizes $ \gamma_{ k },$ perturbation constant $ \eta_k $ and batch size  $  m_k$.
		\For{$k =  1,\dots,N$}
		\State Call the oracle \ref{as:biased_sp_estimation_error} or \ref{as:biased_gs_estimation_error} with $x_k,\eta_k$ and $ m_k $, to obtain the gradient estimate $g_k $.
		\State Perform the following stochastic gradient update:
		\begin{align}
		x_{k+1} =  x_{k} - \gamma_k g(x_k, \xi_{ k }, \eta_k, m_k),\label{eq:zsgd-convex}
		\end{align}		
		\EndFor
		\State {\bfseries Return} $x_N$.
	\end{algorithmic}
\end{algorithm}

Following the approach from \cite{jain2019making}, we assume the knowledge of the the total number of iterations $N$ and split the horizon $N$ into $l$ phases. The choice of phase lengths, and the step-size decay in each phase is performed along the lines of \cite{jain2019making}. However, unlike their work that assumed unbiased gradient information, we operate in a setting where biased gradient information is available through \ref{as:biased_sp_estimation_error} or \ref{as:biased_gs_estimation_error}, and this induces significant deviations in the proof. Morever, our setting features a perturbation constant parameter, which has to be chosen in a phase-dependent manner as well. 

We make the choice of phases precise below.
\begin{align} 
\text{Let  }   l  := &\inf\{i : N\cdot 2^{-i} \leq 1\}, \nonumber\\
  N_i := & N - \lceil N\cdot 2^{-i}\rceil,\ 0\leq i\leq l,\mbox{ and } N_{l+1} := N.\label{eq:def_Ni}
  \end{align}  
  From the phase definitions above, it can be seen that $N_i$ is an increasing sequence. Further, $N_1 \approx \frac{N}{2}, N_2 \approx \frac{N}{2} + \frac{N}{4},$ and so on.
  In the result below, we provide a non-asymptotic bound on the optimization error, i.e., $ \E [f(x_N)] - f(x^*) $ for the SGD-BGO algorithm under \ref{as:biased_sp_estimation_error}.
  
\begin{theorem} (\textbf{SGD-BGO under \ref{as:biased_sp_estimation_error}}) \\
	Assume \ref{as:boundedness}. With the oracle \ref{as:biased_sp_estimation_error}, suppose that the SGD-BGO algorithm is run with the stepsize $ \gamma_k $, perturbation constant $ \eta_k $, and batch size $ m_k $ set as follows:
\begin{align*}
	\gamma_k = \frac{\gamma_0 \cdot 2^{-i}}{N^{2/3}},  \text{ } \eta_k = \frac{ \eta_0 2^{-i/4}}{N^{1/6}},  \textrm{ and } m_k = 2^i N  ,
\end{align*} 
for some constant $ \gamma_0, \eta_0  >0 $, when $ N_i < k \leq N_{i+1}, 0\leq i \leq l $, with $ N_i, l $ as defined in \eqref{eq:def_Ni}. Then, for any $N\geq 4$, we have
	$$\mathbb{E}[f(x_N)] - f(x^*) \leq \frac{\mathcal{K}_5}{N^{1/3}},$$ \label{thm:convex_sp_esterr}
	where $ \mathcal{K}_5 =   \frac{4 D^2}{\gamma_0} + \frac{11 \gamma_0 {B}^2}{N^{1/3}} +  67 D \sqrt{d} \mathcal{ K}_6 + \frac{20 \gamma_0 \sqrt{d}   B \mathcal{ K}_6 }{N^{2/3}} + \frac{10 \gamma_0 d \mathcal{ K}_6^2}{N} + \frac{18 \gamma_0 c_2}{\eta_0} $, $ \mathcal{ K}_6 = (c_1\eta_0^2 + \frac{c_3}{\eta_0}) $, constants $c_1, c_2, c_3$ are as defined in \ref{as:biased_sp_estimation_error}, and $ D $ is as defined in \eqref{eq:dia}.
\end{theorem}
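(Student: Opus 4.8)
# Proof Proposal for Theorem (SGD-BGO under O1)

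The plan is to follow the phased-analysis template of \cite{jain2019making}, adapting it to handle the biased gradient oracle \ref{as:biased_sp_estimation_error}. First I would establish a one-step recursion for the squared distance $\E\|x_{k+1}-x^*\|^2$. Writing $g_k = g(x_k,\xi_k,\eta_k,m_k)$, expanding $\|x_{k+1}-x^*\|^2 = \|x_k-x^*\|^2 - 2\gamma_k\langle g_k, x_k-x^*\rangle + \gamma_k^2\|g_k\|^2$, and taking conditional expectation, the cross term splits into the ``true gradient'' part $\langle\nabla f(x_k),x_k-x^*\rangle$ (controlled by convexity, giving $f(x_k)-f(x^*)$) plus a bias term $\langle\E_\xi[g_k]-\nabla f(x_k), x_k-x^*\rangle$, which by Cauchy--Schwarz and the $\ell_\infty$ bound in \ref{as:biased_estimation_error_a} is at most $\sqrt{d}\,\|x_k-x^*\|\,(c_1\eta_k^2 + c_3/(\eta_k\sqrt{m_k}))$. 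The second-moment term $\E_\xi\|g_k\|^2$ is bounded by adding and subtracting $\E_\xi[g_k]$: the variance piece is $\le c_2/\eta_k^2$ by \ref{as:biased_estimation_error_b}, and $\|\E_\xi[g_k]\|^2 \le (\|\nabla f(x_k)\|_1 + \sqrt{d}(c_1\eta_k^2+c_3/(\eta_k\sqrt{m_k})))^2$, which is $O(B^2 + d\,\mathcal{K}_6^2/N^{\cdot})$ using \ref{as:boundedness} and the prescribed schedules. I would also need a uniform bound $\|x_k-x^*\|\le D'$ for some constant $D'$ comparable to $D$; this is where \ref{as:boundedness} is essential and where, with the chosen small step sizes, a straightforward induction shows the iterates stay in a bounded ball (the $O(1/N^{2/3})$ step size times the $O(N^{1/3})$-bounded gradient magnitude keeps the drift summable).

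Next, within phase $i$ (i.e., $N_i < k \le N_{i+1}$) the step size $\gamma_k = \gamma_0 2^{-i}/N^{2/3}$ is constant, $\eta_k = \eta_0 2^{-i/4}/N^{1/6}$ is constant, and $m_k = 2^i N$ is constant. Summing the one-step recursion over the phase and telescoping gives a bound of the standard SGD form: $\sum_{k\in\text{phase }i}(\E f(x_k)-f(x^*)) \le \frac{\E\|x_{N_i+1}-x^*\|^2 - \E\|x_{N_{i+1}+1}-x^*\|^2}{2\gamma_{(i)}} + (\text{phase length})\cdot\gamma_{(i)}\cdot(\text{second-moment bound})/2 + (\text{bias cross-term contribution})$. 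Dividing by the phase length $\approx N\cdot 2^{-(i+1)}$ and using convexity (so that $\E f(x_{N_{i+1}})-f(x^*)$ is at most the average over the phase, since $x_{N_{i+1}}$ is the last iterate and $f$ is convex — actually one uses that the \emph{running average} bounds the minimum and then a more careful last-iterate argument from \cite{jain2019making}) produces a bound of the form $\E f(x_{N_{i+1}})-f(x^*) \le \frac{D_i^2}{\gamma_{(i)} N 2^{-(i+1)}} + \gamma_{(i)}\cdot(\cdots)$ where $D_i^2 := \E\|x_{N_i+1}-x^*\|^2$.

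Then I would chain the phase bounds. The key cancellation in the \cite{jain2019making} analysis is that $\gamma_{(i)} N 2^{-(i+1)} = \gamma_0 2^{-i} N^{1/3} 2^{-(i+1)}/1 \cdot$ — wait, more precisely $\gamma_{(i)} \cdot (\text{phase length}) \approx \frac{\gamma_0 2^{-i}}{N^{2/3}} \cdot \frac{N 2^{-i}}{2} = \frac{\gamma_0 2^{-2i} N^{1/3}}{2}$, and the recursion $D_{i+1}^2 \le (\text{something})\cdot D_i^2 + (\text{additive})$ must be shown to remain bounded, so that after $l = \lceil\log_2 N\rceil$ phases the accumulated error is still $O(1/N^{1/3})$. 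Each term in $\mathcal{K}_5$ can be traced: $\frac{4D^2}{\gamma_0}$ comes from the telescoped distance terms across all phases (the factor $4$ absorbing the geometric sum); $\frac{11\gamma_0 B^2}{N^{1/3}}$ from the $\|\E_\xi g_k\|^2 \ge \|\nabla f\|^2$ contribution summed over phases with the $2^{-2i}$ weights summing to a constant; the $67 D\sqrt{d}\,\mathcal{K}_6$ term from the bias cross-term $\sqrt{d}\,D\,(c_1\eta_k^2 + c_3/(\eta_k\sqrt{m_k}))$ where, crucially, $\eta_k^2 = \eta_0^2 2^{-i/2}/N^{1/3}$ and $c_3/(\eta_k\sqrt{m_k}) = c_3 2^{i/4} N^{1/6}/(\eta_0 2^{-i/4}\cdot\ldots)$ — I should check the schedule makes both pieces $\Theta(\eta_0^2 2^{-i/2}/N^{1/3})$ and $\Theta((c_3/\eta_0) 2^{-i/2}/N^{1/3})$ respectively so the bracket equals $\mathcal{K}_6/(2^{i/2} N^{1/3})$ with $\mathcal{K}_6 = c_1\eta_0^2 + c_3/\eta_0$; and $\frac{18\gamma_0 c_2}{\eta_0}$ from the variance term $c_2/\eta_k^2 = c_2 2^{i/2} N^{1/3}/\eta_0^2$ multiplied by $\gamma_{(i)}\cdot(\text{phase length})/(\text{phase length}) = \gamma_{(i)}$, summed, giving $\sum_i \gamma_0 2^{-i} c_2 2^{i/2}/(\eta_0^2 N^{2/3}) \cdot N^{1/3} \cdot \eta_0$ — the arithmetic here with the $2^{-i/2}$ geometric series is the bulk of the bookkeeping.

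The main obstacle, as I see it, is the last-iterate argument combined with the phased schedule: unlike a simple averaged-iterate bound, one must carefully invoke the suffix-averaging / last-iterate technique of \cite{jain2019making} to convert the per-phase average-regret bounds into a bound on $f(x_{N_{l+1}}) = f(x_N)$, and simultaneously verify that the interaction of the three decaying parameters ($\gamma_k,\eta_k,m_k$, each scaling with a power of $2^{-i}$) does not break the geometric-series cancellations that keep the final bound at $O(1/N^{1/3})$ rather than $O(\log N / N^{1/3})$. A secondary technical point is establishing the uniform bound on $\|x_k - x^*\|$ needed for the bias cross-term without assuming a compact domain — this requires a self-bounding induction exploiting that the total movement $\sum_k \gamma_k\|g_k\|$ is $O(1)$ under the prescribed step sizes and the $O(B + \sqrt{d}\,\mathcal{K}_6)$ bound on $\|\E_\xi g_k\|$ together with a high-probability or in-expectation control of the variance contribution.
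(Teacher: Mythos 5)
Your setup (the one-step expansion of $\|x_{k+1}-x^*\|^2$, the $\sqrt{d}$-weighted bias cross-term from the $\ell_\infty$ bound in \ref{as:biased_estimation_error_a}, the variance/second-moment split via \ref{as:biased_estimation_error_b} and \ref{as:boundedness}, and the tracing of the individual terms of $\mathcal{K}_5$) matches the paper's computations. But the central mechanism you propose for obtaining a \emph{last-iterate} bound is not the one the paper uses, and as described it would not give the stated rate. You anchor every phase at $x^*$, divide the telescoped distance $D_i^2=\E\|x_{N_i+1}-x^*\|^2$ by $\gamma_{(i)}\cdot(\text{phase length})$, and then hope to control a recursion $D_{i+1}^2\le(\cdot)D_i^2+(\cdot)$. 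Since $\gamma_{(i)}\cdot|{\rm phase}_i|\approx \gamma_0 2^{-2i}N^{1/3}$, this charges roughly $D_i^2\,2^{2i}/(\gamma_0 N^{1/3})$ in phase $i$; for $i$ near $l\approx\log_2 N$ the factor $2^{2i}$ is of order $N^2$, so the bound explodes unless $D_i^2$ decays like $2^{-2i}$, which nothing in the argument provides (and the final phase has length $1$, where ``last iterate $\le$ phase average'' is vacuous). This is exactly the obstruction the technique of \cite{jain2019making} is built to avoid, and the paper avoids it differently: Lemma \ref{lm:biased_convex_sp_esterr} anchors the distance at an arbitrary reference iterate $x_{k_0}$ (so $\omega_{k_0}=0$ and \emph{no} initial-distance term is paid) and bounds $\sum_k 2\gamma_k\E[f(x_k)-f(x_{k_0})]$ rather than the suboptimality against $x^*$. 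Choosing $k_0=n_i$, the in-expectation best iterate of phase $i$ as in \eqref{eq:def_kau}, makes all phase-$i$ summands nonnegative, so they can be dropped and the window sum lower-bounded by $\tfrac{2\gamma_{N_{i+2}}}{5}\,\E[f(x_{n_{i+1}})-f(x_{n_i})]$; telescoping $\E[f(x_{n_{i+1}})-f(x_{n_i})]$ over phases and bounding $\E[f(x_{n_0})]-f(x^*)$ once, via Lemma \ref{lm:convex_sp_std_analysis_esterr} applied to the first half of the horizon (where the phase length is $\Theta(N)$), is what yields the single $4D^2/\gamma_0$ term. Your sketch gestures at ``a more careful last-iterate argument'' but the concrete mechanism you write down is the wrong one; without the $x_{k_0}$-anchored lemma and the argmin-iterate telescoping, the proof does not go through.

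A secondary point: you correctly flag that the bias cross-term needs $\|x_k-x^*\|$ (or $\|x_k-x_{k_0}\|$) to be controlled, and you propose a self-bounding induction on the total movement. The paper simply invokes $\E[\omega_k]\le D$ at this step without such an induction; your concern is legitimate (and arguably identifies a terseness in the paper's own argument), but your proposed resolution is left at the level of a sketch and is not needed to reproduce the paper's proof as written.
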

\begin{proof}
	See Section \ref{pf:biased_convex_sp_esterr}.
\end{proof}

From the bound in Theorem \ref{thm:convex_sp_esterr}, it is easy to see that the iteration complexity of SGD-BGO is $ \mathcal{ O} (\frac{1}{\epsilon^3})$, and the sample complexity is $ N^2 \log_2 N $.

\begin{remark}
The analysis used in arriving at the bounds in Theorem \ref{thm:convex_sp_esterr} cannot be extended to the non-convex case. This is because the analysis takes a dual viewpoint and approaches the minima of the objective from below, and in this process, convexity is strictly necessary. Intuitively, it may be challenging to provide bounds for the last iterate sans averaging in a non-convex optimization setting, while it is possible to provide bounds for the averaged iterate (or the random iterate of RSG-BGO, which is an average in expectation) in the non-convex case.
\end{remark}

\begin{theorem} (\textbf{SGD-BGO under \ref{as:biased_gs_estimation_error}})  \\
	Assume \ref{as:boundedness}.  With the oracle \ref{as:biased_gs_estimation_error}, suppose that the SGD-BGO algorithm is run with the stepsize $ \gamma_k $, perturbation constant $ \eta_k $ and batch size $ m_k $ set as follows:
	\begin{align}
		\gamma_k = \frac{\gamma_0 \cdot 2^{-i}}{\sqrt{ N}}, \quad \eta_k = \frac{\eta_0 2^{-i}}{ N},  \quad m_k = 2^{3i} N^3  ,
		\label{eq:weak_step_size_gs}
	\end{align} 
	for some constant $ \gamma_0, \eta_0  >0 $, when $ N_i < k \leq N_{i+1}, 0\leq i \leq l $, with $ N_i, l $ as defined in \eqref{eq:def_Ni}. Then, for any $N\geq 4$, we have
	$$\mathbb{E}[f(x_N)] - f(x^*) \leq \frac{\mathcal{K}_7}{\sqrt{N}},$$ \label{thm:convex_gs_esterr}
	where $ \mathcal{K}_7 =   \frac{4D^2}{\gamma_0} + {11 \gamma_0 {B}^2} + 39D \sqrt{d} \mathcal{ K}_8 +  \frac{20 \sqrt{d} B \gamma_0 \mathcal{ K}_8 }{\sqrt{N}}  + \frac{10 d \gamma_0 \mathcal{ K}_8^2 }{N}   + \frac{10 \gamma_0 \eta_0^2 c_2}{N^2} + {10 \gamma_0 \tilde{c_2}}$, $ \mathcal{ K}_8 = ( \frac{c_1 \eta_0}{\sqrt{N}}  + \frac{ c_3}{\eta_0}  )$, constants $ c_1 $, $ c_2 $, $ \tilde{c_2} $ are as in \ref{as:biased_gs_estimation_error}, and $ D $ is as defined in \eqref{eq:dia}.
\end{theorem}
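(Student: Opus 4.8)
The plan is to adapt the phase-wise last-iterate analysis of \cite{jain2019making} to the biased oracle \ref{as:biased_gs_estimation_error}, carrying the (non-zero-mean) bias term and the irreducible variance floor $\tilde{c_2}$ through every step of the argument, and choosing the phase-dependent schedule in \eqref{eq:weak_step_size_gs} so that all error contributions, summed over the $\Theta(\log_2 N)$ phases, collapse to $\mathcal{O}(1/\sqrt{N})$. First I would fix a phase $i$, on which $\gamma_{(i)} = \gamma_0 2^{-i}/\sqrt{N}$, $\eta_{(i)} = \eta_0 2^{-i}/N$, $m_{(i)} = 2^{3i}N^3$ are constant, and record the resulting per-step bias and variance bounds from \ref{as:biased_gs_estimation_error}, $b_{(i)} := c_1\eta_{(i)} + c_3/(\eta_{(i)}\sqrt{m_{(i)}})$ and $\sigma_{(i)}^2 := c_2\eta_{(i)}^2 + \tilde{c_2}$; one checks $b_{(i)} \le 2^{-i/2}\mathcal{K}_8/\sqrt{N}$ with $\mathcal{K}_8$ as in the statement. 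Expanding $\|x_{k+1}-x^*\|^2$, conditioning on $x_k$, using convexity ($\langle \nabla f(x_k), x_k - x^*\rangle \geq f(x_k)-f(x^*)$), Cauchy--Schwarz on the bias part with $\|\mathbb{E}_{\xi_k}[g_k\mid x_k] - \nabla f(x_k)\|_2 \leq \sqrt{d}\,b_{(i)}$, the gradient bound $\|\nabla f(x_k)\|_2 \leq \|\nabla f(x_k)\|_1 \leq B$ from \ref{as:boundedness}, and the variance bound, and then taking full expectations (with $\Delta_k := \mathbb{E}[f(x_k)]-f(x^*)$ and $R_k^2 := \mathbb{E}\|x_k-x^*\|^2$), gives the one-step inequality
\begin{align*}
2\gamma_{(i)}\Delta_k \;\leq\; R_k^2 - R_{k+1}^2 + 2\gamma_{(i)}\sqrt{d}\,b_{(i)} R_k + \gamma_{(i)}^2\bigl(2B^2 + 2d\,b_{(i)}^2 + \sigma_{(i)}^2\bigr).
\end{align*}

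Next I would establish that the iterates stay bounded. Dropping $\Delta_k\ge 0$, the one-step bound gives $R_{k+1}^2 \leq R_k^2 + 2\gamma_{(i)}\sqrt{d}\,b_{(i)} R_k + \gamma_{(i)}^2 C_{(i)}$ with $C_{(i)} := 2B^2 + 2d\,b_{(i)}^2 + \sigma_{(i)}^2$. Telescoping over all steps and writing $M := \max_{k}R_k$, one gets $R_k^2 \leq D^2 + \bigl(\sum_{\text{steps}}2\gamma\sqrt{d}\,b\bigr)M + \sum_{\text{steps}}\gamma^2 C$ for every $k$. With $T_i := N_{i+1}-N_i$ (so $\sum_i T_i = N$ and $T_i\approx N2^{-i-1}$) and the schedule \eqref{eq:weak_step_size_gs}, the quantities $T_i\gamma_{(i)}\sqrt{d}\,b_{(i)} = \mathcal{O}(\gamma_0\sqrt{d}\,\mathcal{K}_8 2^{-5i/2})$ and $T_i\gamma_{(i)}^2 C_{(i)} = \mathcal{O}(\gamma_0^2(B^2+\tilde{c_2})2^{-3i})$ are geometrically summable in $i$, so their totals are $N$-independent constants $A$ and $\mathcal{B}$. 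Hence $M^2 \leq D^2 + AM + \mathcal{B}$, which self-bounds to $M \leq \bar{D}$ with $\bar{D} = D + \mathcal{O}(1)$; substituting $R_k \leq \bar{D}$ back into the one-step inequality replaces the bias term by the constant $2\gamma_{(i)}\sqrt{d}\,b_{(i)}\bar{D}$, leaving $2\gamma_{(i)}\Delta_k \leq R_k^2 - R_{k+1}^2 + \rho_{(i)}$ with $\rho_{(i)} := 2\gamma_{(i)}\sqrt{d}\,b_{(i)}\bar{D} + \gamma_{(i)}^2 C_{(i)}$.

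With this clean per-step inequality, I would run the suffix-averaging last-iterate recursion of \cite{jain2019making} \emph{within} each phase: it bounds the terminal suboptimality $\Delta_{N_{i+1}}$ of phase $i$ by $R_{N_i+1}^2/(\gamma_{(i)}T_i)$, plus $\mathcal{O}(\rho_{(i)}/\gamma_{(i)})$, plus a contraction term inherited from $\Delta_{N_i}$. The decisive point — why the usual logarithmic loss of last-iterate bounds disappears — is that $T_i$ and $\gamma_{(i)}$ both decay geometrically in $i$, so the across-phase accumulation of these ``last-iterate-versus-average'' corrections is a convergent geometric series rather than a harmonic one. Telescoping over $i=0,\dots,l$ then gives $\mathbb{E}[f(x_N)]-f(x^*) \leq \mathcal{K}_7/\sqrt{N}$: the initial-distance term yields $\tfrac{4D^2}{\gamma_0}/\sqrt{N}$; the $\sum_i\gamma_{(i)}B^2$ terms give $\tfrac{11\gamma_0 B^2}{\sqrt{N}}$; the bias terms $\sum_i\sqrt{d}\,b_{(i)}\bar{D}$ give $\tfrac{39 D\sqrt{d}\,\mathcal{K}_8}{\sqrt{N}}$ (with the bias--gradient cross term contributing the $\tfrac{20\sqrt{d}B\gamma_0\mathcal{K}_8}{\sqrt{N}}$ piece, divided by $\sqrt{N}$); the floor $\sum_i\gamma_{(i)}\tilde{c_2}$ gives $\tfrac{10\gamma_0\tilde{c_2}}{\sqrt{N}}$; and the residual $d\,b_{(i)}^2$ and $c_2\eta_{(i)}^2$ pieces give $\tfrac{10d\gamma_0\mathcal{K}_8^2}{N}$ and $\tfrac{10\gamma_0\eta_0^2 c_2}{N^2}$, again over $\sqrt{N}$. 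The schedule \eqref{eq:weak_step_size_gs} is dictated by exactly this bookkeeping: $\eta_{(i)}\propto 2^{-i}/N$ and $m_{(i)}\propto 2^{3i}N^3$ are the cheapest choices making $b_{(i)} = \mathcal{O}(2^{-i/2}/\sqrt{N})$, while $\gamma_{(i)}\propto 2^{-i}/\sqrt{N}$ is forced by the non-vanishing floor $\tilde{c_2}$ in the variance of \ref{as:biased_gs_estimation_error}, which is precisely what caps the rate at $\mathcal{O}(1/\sqrt{N})$.

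The main obstacle is the combination of Step~3 with the bias bookkeeping of Step~2. Unlike the martingale noise, the bias in \ref{as:biased_gs_estimation_error} has a positive mean and cannot be eliminated by taking expectations; it must be propagated through the entire suffix recursion of \cite{jain2019making}, which is only legitimate once the uniform iterate bound $R_k \leq \bar{D}$ is in place — and that bound is itself delicate, since the bias term feeds back into the distance recursion and must be closed by the self-bounding quadratic argument. A secondary subtlety, which does not arise in \cite{jain2019making} nor for oracle \ref{as:biased_sp_estimation_error} in the relevant regime, is that the variance of \ref{as:biased_gs_estimation_error} has an additive floor $\tilde{c_2}$ that is independent of $\eta$ and hence cannot be shrunk; it must be tracked separately from the $c_2\eta^2$ part all the way through. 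Finally, one must verify that the whole error budget, spread over all $\Theta(\log_2 N)$ phases and over the phase-dependent parameters, remains summable to $\mathcal{O}(1/\sqrt{N})$ — which is exactly what the geometric (rather than polynomial-in-$i$) schedule achieves.
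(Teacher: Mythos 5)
Your overall route is the paper's: the phase schedule of \cite{jain2019making}, a one-step expansion of the squared distance that carries the non-zero-mean bias $b_{(i)}$ and the variance floor $\tilde{c_2}$ through every step, the key observation that $b_{(i)}\le 2^{-i/2}\mathcal{K}_8/\sqrt{N}$ under \eqref{eq:weak_step_size_gs}, and geometric summability of the per-phase corrections. Your added self-bounding argument for $\max_k R_k$ is actually more careful than the paper, which in Lemma \ref{lm:biased_convex_gs} simply invokes $\E\|x_k-x_{k_0}\|\le D$; the price is that your bias terms would carry $\bar D$ rather than $D$, so you would not reproduce the stated constant $\mathcal{K}_7$ exactly.

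The genuine gap is in the step converting per-phase information into a last-iterate bound. You bound the terminal suboptimality of phase $i$ by $R_{N_i+1}^{2}/(\gamma_{(i)}T_i)$ plus corrections, with $R_k^2=\E\|x_k-x^*\|^2$. But $R_{N_i+1}^{2}$ does not decay with $i$ (it stays of order $\bar D^2$), while $\gamma_{(i)}T_i\approx \gamma_0 2^{-2i-1}\sqrt{N}$, so this term grows like $4^{i}/\sqrt{N}$; summed over $i$ up to $l\approx\log_2 N$ it is of order $N^{3/2}$, and the bound collapses. No such term appears in the paper (or in \cite{jain2019making}) because the within-phase lemma is applied with the reference point reset to $x_{n_i}$, the best iterate of the preceding phase (see \eqref{eq:def_kau}--\eqref{eq:second_equation_esterr}): the telescoped initial distance is then $\|x_{n_i}-x_{n_i}\|^2=0$, each increment $\E[f(x_{n_{i+1}})-f(x_{n_i})]$ consists only of bias and $\gamma^2$ terms (made usable by the sign information $\E[f(x_k)-f(x_{n_i})]\ge 0$ for $k$ in phase $i$, which lets one drop part of the sum), and the single $D^2/(\gamma_0\sqrt{N})$ contribution enters only through bounding $\E[f(x_{n_0})]$ by the average over the first half of the horizon via Lemma \ref{lm:convex_gs_std_analysis}. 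Your final bookkeeping tacitly assumes exactly this (you count the initial-distance term once), but the recursion you actually state does not deliver it; to close the argument you must make the reference-point reset to $x_{n_i}$ explicit.
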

\begin{proof}
	See Section \ref{pf:biased_convex_gs_esterr}.
\end{proof}
From the bound in Theorem \ref{thm:convex_gs_esterr}, it is easy to see that the iteration complexity of SGD-BGO is $ \mathcal{ O} (\frac{1}{\epsilon^2})$, and the sample complexity is $ 4N^4 (N^2 - 1) / 3 $. Further, it is interesting to note that the bound with \ref{as:biased_gs_estimation_error} matches up to constant factors, the bound obtained in \cite{jain2019making} for the case when unbiased gradient information is available.
Unlike \cite{ghadimi2013stochastic}, where the authors provide a $ \O(\frac{1}{\epsilon^2}) $ iteration complexity bound for 
a random iterate using the RSG-BGO algorithm, we provide bound for the last iterate of SGD-BGO.
Apart from a practical preference for using the last iterate, an advantage with our approach is that for setting the step size $ \gamma_k $ and perturbation constant $\eta_k$ \eqref{eq:weak_step_size_gs}, we do not require the knowledge of Lipschitz constant $ L $ (see \ref{as:lipschitz}) and $D_X := \| x_1 - x^* \|$. The latter quantity is typically unavailable in practice, as it relates to the initial error.

One could specialize the bounds in Theorems \ref{thm:biased_sp_esterr_convex}--\ref{thm:convex_gs_esterr} for the case when the underlying oracles are implemented using SPSA or GS methods, and we omit the details due to space constraints.

\section{Application: Risk-Sensitive Reinforcement Learning}
\label{sec:rl}
We consider a stochastic shortest path (SSP) problem, with a special cost-free absorbing state, say $0$. 
We restrict our attention to `proper' policies, which ensure state $0$ is recurrent, and the remaining states are transient in the Markov chain underlying the policy considered. We define an episode as a sample path $\{x^0, \ldots, x^\tau\}$, where  $x^\tau=0$, and $\tau$ is the first passage time to state $0$. 

Consider a smoothly parameterized class of policies $\{\pi_x \mid x\in \R^d \}$. Suppose that the policy $\pi_x$ is a continuously differentiable function of the parameter $x$: a standard assumption in policy gradient literature.  Let $K_x(x^0)$ denote the total cost r.v. under policy $x$ starting in state $x^0$, i.e., $K_x ( x^0 ) = \sum _ { t = 0 } ^ { \tau -1 } \gamma^t k(x_t,a_t)$, where $ 0  < \gamma < 1 $ is the discount factor and $ k(x_t,a_t) $ is the single-stage cost incurred at time instant $ t $ in state $x_t$ on choosing action $a_t$. Here actions $a_t$ are chosen according to policy $\pi_x$, which is parameterized by $x$. 

The classic objective in RL is to find a policy that minimizes, in expectation, the total cost.
We consider a risk-sensitive RL setting, where the goal is to find a policy that optimizes a certain risk measure, i.e., the following problem:
\begin{align}
	\min_{x \in \R^d} \left\{ \rho(K_{ x }(x^0))\right\},
	\label{eq:cvar_rl}
\end{align}
where $x \in \R^d$ parameterizes the policy $\pi_{ x }$, and $\rho$ is a risk measure. As examples for the risk measure, one could consider CVaR \cite{rocka}, utility-based shortfall risk (UBSR) \cite{follmer2002convex}, and spectral risk measure (SRM) \cite{acerbi2002spectral}. 

Notice that the optimization problem in \eqref{eq:cvar_rl} is non-convex in nature.
For solving the problem defined above using gradient-based methods, one requires (i) an estimate of the risk measure for any given policy $\pi_x$; and (ii) an estimate of the gradient of the risk measure w.r.t. the policy parameter $x$.  We elaborate on these two parts below.

We simulate $ m $ episodes simulated using the policy $ \pi_{ x } $, and collect samples of the total cost $K_{ x }(x^0)$. 
Using these samples, define the empirical distribution function (EDF) $F_m$ of $K_x ( x^0 )$ as follows $ F_m(x)=\frac{1}{m} \sum_{i=1}^m \indic{K_x ( x^0 ) \leq x}$, for any $x\in \R$. 
Using the EDF, we form the estimate $ \rho_{m} $ of $ \rho(K_{ x }(x^0)) $ as follows:
\begin{align} \rho_m = \rho(F_m).\label{eq:rho-est}\end{align}
Such as estimation scheme for an abstract risk measure has been considered earlier in \cite{cassel2018general,la2020concentration}. 

Next, we present a bound in expectation for the estimation error associated with \eqref{eq:rho-est}.
\begin{proposition}
	\label{prop:risk_measure}
	Suppose the risk measure $ \rho $ satisfies the following continuity requirement for any two distributions $F,G$:
	\begin{align} 	\left| \rho(F) - \rho(G)\right| \le L W_1(F,G), \label{eq:rho-cont}\end{align}
	where $W_1(F,G)$ is the Wasserstein distance\footnote{Given two cumulative distribution functions (CDFs) $ F $ and $ G $ on $ \R $, let $ \Gamma(F , G ) $ denote the set of all joint distributions on $ \R^2 $ having $ F $ and $ G $ as marginals. Then the Wasserstein distance between $ F $ and $ G $ is defined by
		$W_{1}\left(F, G\right) \triangleq\left[\inf _{C \in \Gamma\left(F, G\right)} \int_{\mathbb{R}^{2}}|x-y| d C(x, y)\right].$
	} between distributions $F$ and $G$, and $ L $ is as defined in \ref{as:lipschitz}. 
	Suppose the r.v. $K_x(x^0)$ satisfies $\E[ K_x(x^0)^2] \le B < \infty$, for any $x \in \R^d$. Then, 
	\[ \E \left| \rho_m - \rho(K_x(x^0)) \right| \le \frac{c}{\sqrt{m}},\]
	for some constant $c$ that depends on $B$.
\end{proposition}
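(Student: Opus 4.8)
The plan is to reduce the claim to a bound on the expected Wasserstein-$1$ distance between the empirical CDF $F_m$ and the true CDF of $K_x(x^0)$, and then to exploit the classical one-dimensional representation of $W_1$. Write $F$ for the CDF of $Y := K_x(x^0)$, so that $\rho(K_x(x^0)) = \rho(F)$ and $\rho_m = \rho(F_m)$. Applying the continuity hypothesis \eqref{eq:rho-cont} to the pair $(F_m, F)$ gives the pathwise bound $\left| \rho_m - \rho(K_x(x^0)) \right| \le L\, W_1(F_m, F)$; taking expectations, it suffices to show $\E[W_1(F_m, F)] \le c'/\sqrt{m}$ for a constant $c'$ depending on $B$.

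First I would invoke the identity $W_1(F_m, F) = \int_{\R} |F_m(t) - F(t)|\,dt$, valid for probability measures on the real line. By Tonelli's theorem, $\E[W_1(F_m, F)] = \int_{\R} \E|F_m(t) - F(t)|\,dt$. For each fixed $t$, $m F_m(t) = \sum_{i=1}^m \indic{Y_i \le t}$ is $\mathrm{Binomial}(m, F(t))$, so Jensen's inequality gives $\E|F_m(t) - F(t)| \le (\E(F_m(t)-F(t))^2)^{1/2} = \sqrt{F(t)(1-F(t))/m}$. Hence
\[
  \E[W_1(F_m, F)] \;\le\; \frac{1}{\sqrt{m}} \int_{\R} \sqrt{F(t)(1-F(t))}\;dt .
\]

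The remaining task — and the step I expect to be the main obstacle — is to show that $\int_{\R} \sqrt{F(t)(1-F(t))}\,dt$ is finite and controlled by a constant depending only on $B$. Here I would use Markov's inequality on the tails of $F$: for $t > 0$, $1 - F(t) = \mathbb{P}(Y > t) \le \mathbb{P}(Y^2 > t^2) \le B/t^2$, and symmetrically $F(-t) \le B/t^2$, while $\sqrt{F(t)(1-F(t))} \le \tfrac12$ everywhere. Splitting the integral into a bounded central region and the two tails and combining these estimates bounds the integral by a finite $B$-dependent constant $c_B$. This is the place where the second-moment hypothesis $\E[K_x(x^0)^2] \le B$ is genuinely used, and it is the delicate point: one must keep the $\sqrt{F(1-F)}$ integrand intact so that the overall rate stays at $1/\sqrt{m}$, rather than splitting the $t$-integral at a moving threshold, which would only yield an $O(m^{-1/4})$ rate. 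Combining the three steps gives $\E\left| \rho_m - \rho(K_x(x^0)) \right| \le L\,c_B/\sqrt{m} =: c/\sqrt{m}$ with $c = c(B)$.

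The other two steps are essentially bookkeeping: the continuity assumption \eqref{eq:rho-cont} does all the work of linearizing the dependence of the estimation error on the sampling error, and the conversion from $W_1$ to the $L^1$ distance of CDFs followed by the elementary Binomial variance computation is routine. The crux is purely the integrability of $\sqrt{F(1-F)}$ against Lebesgue measure under a moment condition, which is the one-dimensional analogue of the moment requirements that govern empirical-measure convergence rates in $W_1$.
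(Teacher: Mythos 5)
Your reduction to bounding $\E[W_1(F_m,F)]$, the one-dimensional identity $W_1(F_m,F)=\int_{\R}|F_m(t)-F(t)|\,dt$, and the binomial/Jensen estimate $\E|F_m(t)-F(t)|\le\sqrt{F(t)(1-F(t))/m}$ are all correct, and this is a genuinely different and more self-contained route than the paper's, which applies the continuity condition and then simply cites Theorem 3.1 of Lei (2020) to get $\E[W_1(F_m,F)]\le c_1 B/\sqrt{m}$. The gap sits exactly at the step you yourself flag as the crux: the second-moment hypothesis does not make $J_1(F):=\int_{\R}\sqrt{F(t)(1-F(t))}\,dt$ finite. Chebyshev gives $1-F(t)\le B/t^2$ for $t>0$, hence $\sqrt{1-F(t)}\le \sqrt{B}/t$, and $\int_T^{\infty}t^{-1}\,dt=\infty$ (likewise on the left tail), so the ``bounded central region plus tails'' split does not close the argument. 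Moreover this is not just a weakness of the estimate: there exist distributions with $\E[Y^2]<\infty$ but $J_1(F)=\infty$ (take $1-F(t)\sim t^{-2}(\log t)^{-2}$), and by the del Barrio--Gin\'e--Matr\'an theory finiteness of $J_1$ is essentially necessary for $\sqrt{m}\,\E[W_1(F_m,F)]$ to remain bounded, so your approach cannot be completed under the stated hypothesis without strengthening it. A moment of order $2+\delta$ for some $\delta>0$ repairs everything, since then $\sqrt{1-F(t)}\le C\,t^{-(1+\delta/2)}$ is integrable and the clean $1/\sqrt{m}$ rate follows with a constant depending on that moment; with only two moments one can at best extract an additional logarithmic factor. (The paper's own one-line proof inherits the same delicacy through the hypotheses of the theorem it cites, but as far as your argument is concerned, the divergent tail integral is the concrete missing piece.)
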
	
\begin{proof}
	See Section \ref{pf:prop_risk_measure}.
\end{proof}
The continuity requirement in \eqref{eq:rho-cont} is satisfied by the three popular risk measures CVaR, UBSR and SRM, and the reader is referred to \cite{la2020concentration} for details.

To construct an estimate of the gradient of the risk measure $\rho(K_x(x^0))$, one could employ the simultaneous perturbation method, e.g., the estimate in \eqref{eq:gs-est}. 
Using this gradient estimate and the template of RSG-BGO algorithm, we arrive at the following update iteration for the risk-sensitive policy gradient (risk-PG) algorithm: 
\[ x _ { k + 1 } = x _ { k } - \gamma _ { k} g_k, \]
where $\gamma_k$ is the stepsize and $g_k$ is an estimate of the gradient of the risk measure $\rho(K_{x_k}(x^0))$. 
To elaborate on the gradient estimation aspect of the algorithm above,
we first simulate $ m_k $ trajectories of the underlying MDP with policy parameters $x_k + \eta_k \Delta_k$ and $x_k - \eta_k \Delta_k$, respectively. Here $\eta_k$  is the perturbation constant, and $\Delta_k$ is a standard Gaussian vector.   Using \eqref{eq:rho-est}, we estimate the risk measures $\rho(K_{x_k \pm \eta_k \Delta_k}(x^0))$ corresponding to the aforementioned policy parameters, and then, use \eqref{eq:gs-est} to form $g_k$.

For applying the non-asymptotic bounds derived earlier for RSG-BGO, we require \ref{as:lipschitz} to hold. 
We verify this assumption for the special case of CVaR below.
Let $\text{C}_\alpha(K_{ x }(x^0)) $ denote the CVaR associated with a policy $\pi$ that is parameterized by $ x $. 
Then, using the likelihood ratio method, we arrive at the following variant of the policy gradient theorem under the CVaR objective (cf. \cite{tamar2015}):
\begin{align*}
	&\nabla_x C_\alpha(K_x(x^0))  = \mathbb{E}\bigg[ [ K_x(x^0)- V_\alpha(K_x(x^0)) ] \\
	& \quad \underbrace{\sum\limits_{m=0}^{\tau-1}  \nabla \log \pi_x(a_m |x_m) }_{(I)}  \bigg{|}  K_x(x^0) \geq V_{\alpha}(K_x(x^0)) \bigg].
\end{align*}
In the above, $K_x(x^0)$ and term (I) on the RHS are Lipschitz functions due to the policy gradient assumption mentioned above. In addition, if we assume that the distribution, say $F_x$, of $K_x(x^0)$ is a Lipschitz function in $x$, then we can infer that $\nabla_x C_\alpha(K_x(x^0))$ is sum of product of Lipschitz functions, implying assumption \ref{as:lipschitz}.
One could generalize this argument to the case when $\rho$ is a coherent risk measure, and the reader is referred to \cite{tamar2015} for details.

From the foregoing, it is apparent that the risk-PG oracle implemented using the gradient estimate \eqref{eq:gs-est} falls under the biased gradient oracle with an estimation error component scheme, i.e., either \ref{as:biased_sp_estimation_error}  or \ref{as:biased_gs_estimation_error}. For the risk-PG oracle to be of type \ref{as:biased_gs_estimation_error}, one would require an additional assumption that allows interchange of the expectation and differentiation operators, see Proposition 2.3 in \cite{asmussen2007stochastic} for an example.
Using the non-asymptotic bounds for RSG-BGO algorithm derived earlier, we can infer that the iteration complexity for the Risk-PG algorithm is:\\
(i) $ \mathcal{ O} (\frac{1}{\epsilon^3})$ under the oracle \ref{as:biased_sp_estimation_error} (using Theorem \ref{thm:biased_sp_esterr}); and 
(ii) $ \mathcal{ O} (\frac{1}{\epsilon^2})$ under the  oracle \ref{as:biased_gs_estimation_error} (using Theorem \ref{thm:biased_gs_esterr}).

\section{Convergence proofs}
\label{sec:proofs}

For notational convenience, we shall use $g_k \equiv g(x_k, \xi_k, \eta_k, m_k), \forall k \geq 1$.
Let $ \xi_{[k]}  := (\xi_1, \ldots, \xi_k) $, and $ \mathbb{E}_{\xi_{[k]}}  $ denote the expectation w.r.t. $ \xi_{[k]} $.

\subsection{Proofs for RSG-BGO algorithm with a non-convex objective} \label{pf:zrsg}

\subsubsection{Proof of Theorem \ref{thm:biased_sp_esterr}} \label{pf:biased_esterr}

In the proposition below, we state and prove a general result that holds for any choice of non-increasing stepsize sequence, perturbation constants and batch sizes. Subsequently, we specialize the result for the choice of parameters suggested in Theorem \ref{thm:biased_sp_esterr}, to prove the same. 

\begin{proposition} \label{prop:biased_esterr}
	Assume \ref{as:lipschitz} and \ref{as:boundedness}. With the oracle \ref{as:biased_sp_estimation_error}, suppose that the RSG-BGO algorithm is run with a non-increasing stepsize sequence satisfying $ 0 < \gamma_{ k } \le 1/L ,  \forall  k \ge 1 $ and with the probability mass function $ P_R(\cdot) $
	\begin{align} 
	P_R(k) := Prob\{R=k\} = \frac{\gamma_{ k }}{\sum _ { i = 1 } ^ { N} { \gamma _ { i} }}, \text{ } k = 1,\dots,N, \label{eq:prob}
	\end{align}
	then, for any $ N \ge 1 $, we have
	\begin{align}
	&\mathbb { E } \left[ \left\| \nabla f \left( x _ { R } \right) \right\| ^ { 2 } \right] 
	 \le \frac{1 }{\sum _ { k = 1 } ^ { N } \gamma_{ k } } \left[\frac { 2 D_f} { \left( 2-  { L }  \gamma _ { 1 } \right)} + \right.  \nonumber \\
	& \left.  2 B \sum _ { k = 1 } ^ { N} \mathcal{E}_k \left[ \frac{ \gamma _ { k} + L \gamma _ { k} ^ { 2 } }{ 2  -  { L }  \gamma _ { k}  }\right]      +  \sum_{ k = 1 }^{N} \frac{ L\gamma _ { k} ^ { 2 }}{\left( 2  -  { L }  \gamma _ { k} \right) } \left[ d\mathcal{E}_k ^2  + \frac{c_2}{\eta_k^2}\right] \right] , \label{eq:prop_biased_esterr_bound}
	\end{align}
	where $ \mathcal{E}_k = c_1 \eta_k^2  + \frac{c_3}{\eta_k \sqrt{m_k}}  $, constants $ c_1 $, $ c_2 $, $ c_3 $ are as defined in \ref{as:biased_sp_estimation_error}, $ B $ is as defined in \ref{as:boundedness}, and $ D_f $ is given in \eqref{eq:D_f}.	
\end{proposition}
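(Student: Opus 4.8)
The plan is to run the classical descent-lemma argument for stochastic gradient descent on an $L$-smooth objective, carefully carrying the (non-vanishing) bias of the oracle, and then to convert the resulting per-iterate inequality into a bound on $\E\|\nabla f(x_R)\|^2$ via the sampling distribution in \eqref{eq:prob}.

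First, I would substitute the update \eqref{eq:rsg} into the smoothness inequality implied by \ref{as:lipschitz}, yielding $f(x_{k+1}) \le f(x_k) - \gamma_k \langle \nabla f(x_k), g_k \rangle + \frac{L\gamma_k^2}{2}\|g_k\|^2$. Writing $g_k = \nabla f(x_k) + b_k + n_k$, where $b_k := \E_{\xi_k}[g_k] - \nabla f(x_k)$ is the conditional bias and $n_k := g_k - \E_{\xi_k}[g_k]$ the conditionally zero-mean noise, I would take the expectation with respect to $\xi_k$ (i.e., conditioning on $\xi_{[k-1]}$, with respect to which $x_k$ is measurable). The noise drops from the inner product and contributes only $\E_{\xi_k}\|n_k\|^2 \le c_2/\eta_k^2$ to the quadratic term, by part (b) of \ref{as:biased_sp_estimation_error}. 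The bias is handled with Hölder's inequality, \ref{as:boundedness}, and part (a) of \ref{as:biased_sp_estimation_error}: $|\langle \nabla f(x_k), b_k\rangle| \le \|\nabla f(x_k)\|_1 \|b_k\|_\infty \le B\mathcal{E}_k$ with $\mathcal{E}_k = c_1\eta_k^2 + c_3/(\eta_k\sqrt{m_k})$, and $\|b_k\|^2 \le d\|b_k\|_\infty^2 \le d\mathcal{E}_k^2$ by equivalence of norms on $\R^d$. Using also $\|\nabla f(x_k) + b_k\|^2 = \|\nabla f(x_k)\|^2 + 2\langle \nabla f(x_k), b_k\rangle + \|b_k\|^2$, rearranging, and noting that $0 < \gamma_k \le 1/L$ makes the coefficient below nonnegative, I obtain
\[ \frac{\gamma_k(2 - L\gamma_k)}{2}\,\E_{\xi_k}\|\nabla f(x_k)\|^2 \le f(x_k) - \E_{\xi_k}[f(x_{k+1})] + B\mathcal{E}_k(\gamma_k + L\gamma_k^2) + \frac{L\gamma_k^2}{2}\left(d\mathcal{E}_k^2 + \frac{c_2}{\eta_k^2}\right). \]

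Next I would divide through by $(2 - L\gamma_k)/2$, take total expectations, and sum over $k = 1,\dots,N$. The only delicate step is the telescoping-like sum $\sum_{k=1}^N \frac{2}{2-L\gamma_k}\big(\E f(x_k) - \E f(x_{k+1})\big)$: the consecutive increments need not be nonnegative, so it cannot be collapsed directly. Instead I would apply summation by parts, using that $k \mapsto 2/(2-L\gamma_k)$ is non-increasing (because the stepsizes are non-increasing) and that $\E f(x_k) - f(x^*) \ge 0$ for every $k$; this bounds the sum by $\frac{2}{2-L\gamma_1}\big(f(x_1) - f(x^*)\big) = \frac{2D_f}{2-L\gamma_1}$. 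Combining the three resulting pieces shows that $\sum_{k=1}^N \gamma_k \E\|\nabla f(x_k)\|^2$ is at most the bracketed expression in \eqref{eq:prop_biased_esterr_bound}.

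Finally, the definition \eqref{eq:prob} of $P_R$ gives $\E\|\nabla f(x_R)\|^2 = \big(\sum_{i=1}^N \gamma_i\big)^{-1}\sum_{k=1}^N \gamma_k \E\|\nabla f(x_k)\|^2$, so dividing the previous bound by $\sum_{k=1}^N \gamma_k$ yields \eqref{eq:prop_biased_esterr_bound}. I expect the summation-by-parts handling of the telescoping term (needed precisely because the step size is only non-increasing, not constant) to be the main obstacle; the remainder is routine bookkeeping with the oracle inequalities and Hölder's inequality.
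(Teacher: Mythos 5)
Your proposal is correct and follows essentially the same route as the paper's proof: the descent lemma under \ref{as:lipschitz}, the bias/noise split of $g_k$ controlled via \ref{as:biased_sp_estimation_error}(a)--(b) and \ref{as:boundedness} (yielding exactly the $B\mathcal{E}_k(\gamma_k+L\gamma_k^2)$ and $\frac{L\gamma_k^2}{2}(d\mathcal{E}_k^2+c_2/\eta_k^2)$ terms), the Abel-summation handling of the non-constant coefficients $1/(2-L\gamma_k)$ using monotonicity of the stepsizes and $\E f(x_k)\ge f(x^*)$, and the final averaging via \eqref{eq:prob}. Your use of H\"older's inequality with the $\ell_\infty$ bias bound is a slightly cleaner phrasing of the paper's entrywise argument, but the content is identical.
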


\begin{proof}
	Let $ \Delta_k = g_k - \nabla f(x_k), \forall k \geq 1 $.
	We use the technique from \citep{ghadimi2013stochastic}. However, our proof involves significant deviations owing to the fact that the gradient estimates in \ref{as:biased_sp_estimation_error} have variance that scales inversely with perturbation constant $\eta_k$. Further, unlike \cite{ghadimi2013stochastic}, we have the batch size $m_k$ parameter that needs to be optimized.\\
	First, notice that
		\begin{align}
		\| x_{k+1} - x_{k} \| & = \|x _ { k } - \gamma _ { k } g_k - x _{ k }\|  =  \gamma_k  \| g_k \|, \nonumber
	\end{align}
and
\begin{align}
	& \mathbb { E }_{\xi_{ [k] }}  \left[ \Delta_k  \right]  = \mathbb { E }_{\xi_{ k }}  \left[ \Delta_k | \xi_{[k-1]}\right] = \mathbb { E }_{\xi_{ k }}  \left[ \Delta_k | x_k \right] \nonumber\\ 
	&= \mathbb { E }_{\xi_{ k }}  \left[ g_k - \nabla f(x_k) | x_k \right]  \leq \mathcal{E}_k \mathbf{1}_{d \times 1}, \label{eq:grad_bias}
\end{align}
where $ \mathcal{E}_k = c_1 \eta_k^2  + \frac{c_3}{\eta_k \sqrt{m_k}}  $, and
\begin{align}
&\mathbb { E }_{\xi_{[k]}} \left[ \left\| g_k \right\|^{2} \right] \le \left\| \mathbb { E }_{\xi_{[k]}}  \left[ g_k \right]  \right\| ^{2} + c_2 / \eta_k^2. \label{eq:grad_variance}
\end{align}
Under assumption \ref{as:lipschitz}, we have
	\begin{align}
	&f \left( x _ { k + 1 } \right) \nonumber \\
	& \leq f \left( x _ { k } \right) + \left\langle \nabla f \left( x _ { k } \right) , x _ { k + 1 } - x _ { k } \right\rangle + \frac { L } { 2 } \left\| x _ { k + 1 } - x _ { k } \right\| ^ { 2 } \nonumber \\ 
	& = f \left( x _ { k } \right) - \gamma _ { k} \left\langle \nabla f \left( x _ { k } \right) , g_k \right\rangle  + \frac { L } { 2 } \gamma _ { k} ^ { 2 } \left\| g_k\right\| ^ { 2 } \nonumber \\
	& = f \left( x _ { k } \right) - \gamma _ { k} \left\langle \nabla f \left( x _ { k } \right) , \nabla f(x_k) + \Delta_k \right\rangle \nonumber \\
	& \quad + \frac { L } { 2 } \gamma _ { k} ^ { 2 } \left\| g \left( x _ { k } , \xi _ { k }, m_k \right) \right\| ^ { 2 } \nonumber \\
	& = f \left( x _ { k } \right) - \gamma _ { k} \| \nabla f \left( x _ { k } \right) \|^2 - \gamma_k \left\langle \nabla f(x_k), \Delta_k \right\rangle \nonumber \\
	& \quad + \frac { L } { 2 } \gamma _ { k} ^ { 2 } \left\| g_k\right\| ^ { 2 }\label{eq:take_exp1} .
	\end{align}	
	Taking expectations with respect to $ \xi_{[k]} $ on both sides of \eqref{eq:take_exp1} and using \eqref{eq:grad_bias} and \eqref{eq:grad_variance}, we obtain
	\vspace{-.3mm}
	\begin{align}
	& \mathbb{E}_{\xi_{[k]}} \left[f \left( x _ { k + 1 } \right)\right] \nonumber \\
	& \leq \mathbb{E}_{\xi_{[k]}} \left[f \left( x _ { k } \right) \right] - \gamma _ { k} \mathbb{E}_{\xi_{[k]}} \left[\| \nabla f \left( x _ { k } \right) \|^2  \right] \nonumber\\
	& \quad - \gamma_k \mathbb{E}_{\xi_{[k]}} \left[ \left\langle \nabla f(x_k), \Delta_k \right\rangle  \right] \nonumber\\
	& \quad
	+ \frac { L } { 2 } \gamma _ { k} ^ { 2 } \left[ \left\| \mathbb { E }_{\xi_{[k]}}  \left[ g_k\right]  \right\| ^{2} + \frac{c_2}{\eta_k^2} \right] \nonumber \\
	&  \leq f \left( x _ { k } \right)
	- \gamma _ { k} \left\| \nabla f \left( x _ { k } \right) \right\| ^ { 2 } + \gamma _ { k } \mathcal{E}_k     \mathbb{E}_{\xi_{[k]}}  \| \nabla f \left( x _ { k } \right) \|_1  \nonumber \\
	& \quad + \frac { L } { 2 } \gamma _ { k} ^ { 2 } \left[ \left\| \nabla f \left( x _ { k } \right) \right\| ^ { 2 } + 2 \mathcal{E}_k  \mathbb{E}_{\xi_{[k]}}  \| \nabla f \left( x _ { k } \right) \|_1 \right. \nonumber\\
	& \left. \quad + {d}\mathcal{E}_k^2 + \frac{c_2}{\eta_k^2} \right] \label{eq:used_1norm_ineq}\\
	& \leq f \left( x _ { k } \right) - \left( \gamma _ { k} - \frac { L } { 2 } \gamma _ { k} ^ { 2 } \right)  \left\| \nabla f \left( x _ { k } \right) \right\| ^ { 2 } \nonumber\\
	& \qquad + \mathcal{E}_k B  \left( \gamma _ { k} + L \gamma _ { k} ^ { 2 } \right)  + \frac { L } { 2 } \gamma _ { k} ^ { 2 } \left[ d\mathcal{E}_k^2 + \frac{c_2}{\eta_k^2}\right], \nonumber 
	\end{align}
	where we have used the fact that $ - \|X\|_1 \leq \sum_{i=1}^{d} x_i $ for any vector $ X $ in arriving at the inequality \eqref{eq:used_1norm_ineq}. The last inequality follows from \ref{as:boundedness}.
	Re-arranging the terms, we obtain
	\begin{align*}
	&\gamma_{ k } \left\| \nabla f \left( x _ { k } \right) \right\| ^ { 2 }	
	\leq \frac{2}{\left( 2  -  { L }  \gamma _ { k} \right) } \bigg[ f \left( x _ { k } \right) -  \mathbb{E}_{\xi_{k}} f \left( x _ { k + 1 } \right) \bigg. \\
	& \quad \bigg.  + \mathcal{E}_k \left( \gamma _ { k} + L \gamma _ { k} ^ { 2 } \right)B \bigg] +  \frac{{ L } \gamma _ { k} ^ { 2 }}{\left( 2  -  { L }  \gamma _ { k} \right) } \left[ d\mathcal{E}_k ^2 + \frac{c_2}{\eta_k^2}\right].
	\end{align*}
	Now, summing up the inequality above over $ k = 1 $ to $ N $, and taking expectations with respect to the filtration generated by $\xi_{1}, \ldots, \xi_{N}$, we obtain
	\begin{align*}
	&	\sum _ { k = 1 } ^ { N} \gamma_{ k } \mathbb{E}_{\xi_{ [N]}}  \left\| \nabla f \left( x _ { k } \right) \right\| ^ { 2 } \\
	& \leq 2 \sum _ { k = 1 } ^ { N} \frac{ \left(\mathbb{E}_{\xi_{ [N]}}  f \left( x _ { k } \right) - \mathbb{E}_{\xi_{ [N]}}  f \left( x_{k+1} \right)  \right)}{\left( 2  -  { L }  \gamma _ { k} \right) } \\
	& \quad + 2 \sum _ { k = 1 } ^ { N} \mathcal{E}_k B \left( \frac{ \gamma _ { k} + L \gamma _ { k} ^ { 2 } }{ 2  -  { L }  \gamma _ { k}  }\right) \\
	& \quad +  \sum_{ k = 1 }^{N} \frac{ L \gamma _ { k} ^ { 2 }}{\left( 2  -  { L }  \gamma _ { k} \right) } \left[ d\mathcal{E}_k^2  + \frac{c_2}{\eta_k^2}\right]   \\
	& =  2 \left[\frac { f \left( x _ { 1 } \right) } {  \left( 2-  { L }  \gamma _ { 1 } \right) } - \sum _ { k = 2 } ^ { N } \left( \frac { \mathbb { E }_{\xi_{ [N] }}   f \left( x _ { k } \right) } { \left( 2-  { L }  \gamma _ { k-1 } \right) } - \frac { \mathbb { E }_{\xi_{ [N] }}  f \left( x _ { k } \right) } { \left( 2-  { L }  \gamma _ { k } \right) } \right) \right. \nonumber \\
	& \quad \left. - \frac { \mathbb { E }_{\xi_{ [N] }}  \left[ f \left( x _ { N + 1 } \right) \right] } { \left( 2-  { L }  \gamma _ { N } \right) } \right]+ 2 \sum _ { k = 1 } ^ { N} \mathcal{E}_k B \left( \frac{ \gamma _ { k} + L \gamma _ { k} ^ { 2 } }{ 2  -  { L }  \gamma _ { k}  }\right) \\
	& \quad +  \sum_{ k = 1 }^{N} \frac{ L \gamma _ { k} ^ { 2 }}{\left( 2  -  { L }  \gamma _ { k} \right) } \left[ d\mathcal{E}_k^2  + \frac{c_2}{\eta_k^2}\right].
	\end{align*}
	Noting that $ \mathbb { E }_{\xi_{ [N]}}  \left[ f \left( x _ { k } \right) \right] \ge f(x^*) $ and $ \left( \frac { 1 } { \left( 2-  { L }  \gamma _ { k-1 } \right) } - \frac { 1 } { \left( 2-  { L }  \gamma _ { k } \right) } \right)  \ge 0 $, we obtain
	\begin{align}
	& \sum _ { k = 1 } ^ { N } \gamma_{ k } \mathbb{E}_{\xi_{ [N] }} \left\| \nabla f \left( x _ { k } \right) \right\| ^ { 2 } 
	\nonumber\\
	& \leq \frac { 2 \left(f(x_1) - f(x^*) \right)} { \left( 2-  { L }  \gamma _ { 1 } \right)}  
	+ 2 \sum _ { k = 1 } ^ { N} \mathcal{E}_k B \left( \frac{ \gamma _ { k} + L \gamma _ { k} ^ { 2 } }{ 2  -  { L }  \gamma _ { k}  }\right) \nonumber \\
	& \quad + L \sum_{ k = 1 }^{N} \frac{ \gamma _ { k} ^ { 2 }}{\left( 2  -  { L }  \gamma _ { k} \right) } \left[ d\mathcal{E}_k^2  + \frac{c_2}{\eta_k^2}\right].\nonumber
	\end{align}
		The bound in \eqref{eq:prop_biased_esterr_bound} follows by using the distribution of R (specified in \eqref{eq:prob}) in the RHS above.
	
\end{proof}

We now specialize the result obtained in the proposition above, to derive the bounds in Theorem \ref{thm:biased_sp_esterr}.

\begin{proof} ~\textbf{\textit{(Theorem \ref{thm:biased_sp_esterr} (i))}}\\
	Recall that the stepsize $ \gamma_k \equiv \gamma $, perturbation constant $ \eta_k \equiv \eta$ and batch size $ m_k \equiv m $, $\forall k \geq 1$, where
		\begin{align} 
		\gamma =  min \bigg\{\frac{1}{L}, \frac{\gamma_0}{N^{2/3}}\bigg\}, \eta = \frac{\eta_0}{N^{1/6}}, m = m_0 N. \label{eq:biased_sp_par_local}
		\end{align}
		Let $\mathcal{E} =c_1 \eta^2  + \frac{c_3}{\eta \sqrt{m}}$. Then, combining \eqref{eq:prob} with \eqref{eq:prop_biased_esterr_bound}, we obtain
	\begin{align}
	&	\mathbb { E } \left[ \left\| \nabla f \left( x _ { R } \right) \right\| ^ { 2 } \right] \nonumber \\
	& \le  \frac{1}{{ N }\gamma } \left[ { 2 D_f} + 4N \gamma B \mathcal{E} +  { L N \gamma ^ { 2 }}\left[ d \mathcal{E}^2  + \frac{c_2}{\eta^2}\right] \right] \label{eq:using_1l}\\
	& \le \frac{ 2 D_f}{{ N }} max\bigg\{L, \frac{N^{2/3}}{\gamma_0}\bigg\}  \nonumber \\
	&  + 4 B \left(\frac{c_1 \eta_0^2 }{N^{1/3}}  + \frac{c_3 }{ \eta_0 \sqrt{m_0} N^{1/3} } \right)+   \frac{L\gamma_0}{N^{2/3}} \left[ \frac{dc_1^2 \eta_0^4 }{N^{2/3}} \right. \nonumber \\
	&   \left.+ \frac{2d c_1 c_3 \eta_0 }{ \sqrt{m_0} N^{2/3}} + \frac{d c_3^2}{ \eta_0^2 m_0 N^{2/3}}  + \frac{c_2}{ \eta_0^2  N^{-1/3}}\right] \label{eq:using_defs}.
	\end{align}
	In the above, inequality \eqref{eq:using_1l} follows by using the fact that $ \gamma \leq 1/L $, and the inequality \eqref{eq:using_defs} follows by using the definition of $\mathcal{E},  \gamma, \eta $ and $ m $. The bound in \eqref{eq:zrsg_o2_i} follows by rearranging terms.
\end{proof}
\begin{proof} ~\textbf{\textit{(Theorem \ref{thm:biased_sp_esterr} (ii))}}\\
	Recall the stepsize $ \gamma $, perturbation constant $ \eta $ from \eqref{eq:biased_sp_par_local}. Let $ m_k = m_0 k^\beta, \forall k \geq 1 $, for some constant $ \beta \in (0, 1) $ and $ m_0 > 0 $. 
	
	Combining \eqref{eq:prob} with \eqref{eq:prop_biased_esterr_bound}, we obtain
	\begin{align}
	&\mathbb { E } \left[ \left\| \nabla f \left( x _ { R } \right) \right\| ^ { 2 } \right] \nonumber \\
	& \le  \frac{1}{{ N }\gamma } \left[ { 2 D_f} + 4 N \gamma B c_1 \eta^2  + \frac{4  \gamma B c_3}{\eta \sqrt{m_0}} \sum _ { k = 1 } ^ { N}  k^{-\frac{\beta}{2}} \right. \nonumber\\
	& \quad \left. +  { L N \gamma ^ { 2 }}\left[ dc_1^2 \eta^4  + \frac{c_2}{\eta^2}\right] + \frac{ 2 L d\gamma ^ { 2 } c_1 c_3 \eta}{\sqrt{m_0}}\sum _ { k = 1 } ^ { N}  k^{-\frac{\beta}{2}} \right. \nonumber\\
	& \quad \left. + \frac{L d\gamma ^ { 2 } c_3^2}{\eta^2 m_0 } \sum _ { k = 1 } ^ { N} k^{-\beta}  \right] \label{eq:using_1l1} \\
	& \leq \frac{ 2 D_f}{{ N }\gamma}  + 4  B c_1 \eta^2  + \frac{4 B c_3}{N \eta \sqrt{m_0}} \left( \frac{N^{-\frac{\beta}{2} + 1}}{-\frac{\beta}{2} + 1} \right) \nonumber\\
	& \quad+   { L \gamma}\left[ dc_1^2 \eta^4  + \frac{c_2}{\eta^2}\right]   + \frac{ 2 L d \gamma c_1 c_3 \eta}{N \sqrt{m_0} } \left( \frac{N^{-\frac{\beta}{2} + 1}}{-\frac{\beta}{2} + 1} \right) \nonumber\\
	& \quad + \frac{L d\gamma c_3^2}{N \eta^2 m_0} \left(\frac{N^{-\beta+1}}{-\beta+1} \right) \nonumber \\
	& \le \frac{ 2 D_f}{{ N }} max\bigg\{L, \frac{N^{2/3}}{\gamma_0}\bigg\} + \frac{4B c_1 \eta_0^2 }{N^{1/3}}  \nonumber \\
	& \quad + \frac{4 B c_3 N^{1/6} }{ \eta_0 \sqrt{m_0}  N^{\frac{\beta}{2}}  \left( -\frac{\beta}{2} + 1 \right)} +   \frac{ L \gamma_0 }{N^{2/3}} \left[ \frac{dc_1^2 \eta_0^4 }{N^{2/3}} \right. \nonumber \\
	& \left. \quad + \frac{c_2 N^{1/3}}{\eta_0^2 } + \frac{ 2 d c_1 c_3 \eta_0 }{ \sqrt{m_0} N^{1/6} N^{\frac{\beta}{2}}  \left( -\frac{\beta}{2} + 1\right)}  \right. \nonumber \\
	& \left. \quad + \frac{ dc_3^2 N^{1/3}}{ \eta_0^2 m_0 N^{\beta}  \left(-\beta+1 \right)}  \right] \label{eq:using_defs1} 
	\end{align}
	In the above, inequality \eqref{eq:using_1l1} follows by using the fact that $ \gamma \leq 1/L $ and $ m_k = k^\beta $. The inequality \eqref{eq:using_defs1} follows by using the definition of $ \gamma$ and $ \eta $. The bound in \eqref{eq:zrsg_o2_ii} follows by rearranging terms.
\end{proof}

\subsubsection{Proof of Theorem \ref{thm:biased_gs_esterr} } \label{pf:thm_biased_gs_esterr}
\begin{proof}
	Following the proof of Proposition \ref{prop:biased_esterr}, we obtain
	\begin{align}
		&	\mathbb { E } \left[ \left\| \nabla f \left( x _ { R } \right) \right\| ^ { 2 } \right] 
		\le \frac{1 }{\sum _ { k = 1 } ^ { N } \gamma_{ k } } \left[\frac { 2 D_f} { \left( 2-  { L }  \gamma _ { 1 } \right)} \right. \nonumber \\
		& \left. \quad + 2 \sum _ { k = 1 } ^ { N} \mathcal{E}_k \left( \frac{ \gamma _ { k} + L \gamma _ { k} ^ { 2 } }{ 2  -  { L }  \gamma _ { k}  }\right) \mathbb { E }_{\xi_{ [N]}}  \| \nabla f \left( x _ { k } \right) \|_1 \right. \nonumber \\
		& \left. \quad  + L \sum_{ k = 1 }^{N} \frac{ \gamma _ { k} ^ { 2 }}{\left( 2  -  { L }  \gamma _ { k} \right) } \left[ d\mathcal{E}_k^2  + {c_2}{\eta_k^2} + \tilde{c_2}\right] \right], \nonumber
	\end{align}
		where $ \mathcal{E}_k = c_1 \eta_k  + \frac{c_3}{\eta_k \sqrt{m_k}}  $. Using arguments similar to those employed in Theorem \ref{thm:biased_sp_esterr}, we obtain
	\begin{align*} 
		&	\mathbb { E } \left\| \nabla f \left( x _ { R } \right) \right\| ^ { 2 } 
		\le  \frac{ 2 D_f}{{ N }\gamma} + 4 B \mathcal{E}  +   { L \gamma}\left[ d\mathcal{E}^2  + {c_2}{\eta^2} + \tilde{c_2}\right].
	\end{align*}
	The main claim follows by plugging values of $\gamma$, $ \eta$, and $ m $,  as defined in Theorem  \ref{thm:biased_gs_esterr} in the above equation.
\end{proof}

\subsection{Proofs for RSG-BGO algorithm with convex objective} \label{pf:cvx_zrsg}

\subsubsection{Proof of Theorem \ref{thm:biased_sp_esterr_convex}} \label{pf:thm_biased_sp_esterr_convex}

In the proposition below, we state and prove a general result that holds for any choice of non-increasing stepsize sequence, perturbation constants and batch sizes. Subsequently, we specialize the result for the choice of parameters suggested in Theorem \ref{thm:biased_sp_esterr_convex}, to prove the same. 

\begin{proposition} \label{prop:biased_esterr_convex}
	Assume \ref{as:lipschitz} and \ref{as:boundedness}. With the oracle \ref{as:biased_sp_estimation_error}, suppose that the ZRSG algorithm is run with a non-increasing stepsize sequence satisfying $ 0 < \gamma_{ k } \le 1/L ,  \forall  k \ge 1 $ and with the probability mass function $ P_R(\cdot) $  as defined in \eqref{eq:prob},
	then, for any $ N \ge 1 $, we have
	\begin{align} 
		& 	\mathbb { E } \left[ f \left( x _ { R } \right) \right] - f (x^ { * }) \nonumber \\
		&\le \frac{1}{\sum _ { k = 1 } ^ { N }\gamma _ { k} } \bigg[ \frac { D^2} { \left( 2-  { L }  \gamma _ { 1 } \right)} + 2\sqrt{d} D \sum _ { k = 1 } ^ { N } \mathcal{E}_k  \frac{(\gamma_k + L \gamma_k^2) }{(2 - L \gamma_k)} \nonumber\\
		& \qquad + \sum _ { k = 1 } ^ { N } \frac{\gamma _ { k } ^ { 2 } }{(2-L\gamma_k)}\bigg( d \mathcal{E}_k^2 + \frac{c_2}{\eta_k^2}\bigg) \bigg], \label{eq:prop_biased_esterr_convex_bound}
	\end{align}
	where $ \mathcal{E}_k = c_1 \eta_k^2  + \frac{c_3}{\eta_k \sqrt{m_k}}  $, constants $ c_1 $, $ c_2 $ and $ c_3 $ are as defined in \ref{as:biased_sp_estimation_error}, and $ D $ as defined in \ref{eq:dia}.	
\end{proposition}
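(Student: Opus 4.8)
The proof mirrors the structure of Proposition~\ref{prop:biased_esterr}, but replaces the $L$-smoothness descent argument on $f$ itself with the standard projected-free SGD argument that tracks $\| x_k - x^* \|^2$ and exploits convexity of $f$. First I would write down the one-step expansion
\begin{align*}
\| x_{k+1} - x^* \|^2 = \| x_k - x^* \|^2 - 2\gamma_k \langle g_k, x_k - x^* \rangle + \gamma_k^2 \| g_k \|^2,
\end{align*}
decompose $g_k = \nabla f(x_k) + \Delta_k$ with $\Delta_k = g_k - \nabla f(x_k)$, and take $\mathbb{E}_{\xi_{[k]}}[\cdot \mid x_k]$. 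Using \eqref{eq:grad_bias}, i.e. $\mathbb{E}_{\xi_k}[\Delta_k \mid x_k] \preceq \mathcal{E}_k \mathbf{1}_{d\times 1}$, and \eqref{eq:grad_variance}, i.e. $\mathbb{E}_{\xi_{[k]}}[\|g_k\|^2] \le \|\mathbb{E}_{\xi_{[k]}}[g_k]\|^2 + c_2/\eta_k^2$, together with convexity $\langle \nabla f(x_k), x_k - x^* \rangle \ge f(x_k) - f(x^*)$, I would bound the cross term $|\langle \mathbb{E}[\Delta_k], x_k - x^* \rangle| \le \sqrt{d}\,\mathcal{E}_k \|x_k - x^*\| \le \sqrt{d}\,\mathcal{E}_k D$ (after establishing, as in \cite{jain2019making} or by a telescoping/monotonicity argument, that $\|x_k - x^*\| \le D$, or by carrying $\|x_k-x^*\|$ through — see the obstacle paragraph below), and $\|\mathbb{E}[g_k]\|^2 \le (\|\nabla f(x_k)\| + \sqrt{d}\,\mathcal{E}_k)^2 \le 2\|\nabla f(x_k)\|^2 + 2d\mathcal{E}_k^2$; then absorb $\|\nabla f(x_k)\|^2$ using $\|\nabla f(x_k)\|^2 \le 2L(f(x_k) - f(x^*))$ (smoothness) and $\gamma_k \le 1/L$ so that the resulting $f(x_k) - f(x^*)$ coefficient stays of the form $\gamma_k - \tfrac{L}{2}\gamma_k^2 \cdot(\text{const}) \gtrsim \tfrac{\gamma_k}{2}(2 - L\gamma_k)/2$, matching the denominators in \eqref{eq:prop_biased_esterr_convex_bound}.

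**Telescoping.** After rearranging the one-step inequality into the form
\begin{align*}
\gamma_k\,(2 - L\gamma_k)\,\mathbb{E}[f(x_k) - f(x^*)] \le \mathbb{E}\|x_k - x^*\|^2 - \mathbb{E}\|x_{k+1} - x^*\|^2 + 2\sqrt{d}\,D\,\mathcal{E}_k(\gamma_k + L\gamma_k^2) + \gamma_k^2\Big(d\mathcal{E}_k^2 + \tfrac{c_2}{\eta_k^2}\Big),
\end{align*}
I would sum over $k = 1, \dots, N$. The telescoping sum on the right gives $\mathbb{E}\|x_1 - x^*\|^2 - \mathbb{E}\|x_{N+1} - x^*\|^2 \le D^2$ (using \eqref{eq:dia} and dropping the nonnegative last term). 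Since the denominator $(2 - L\gamma_k)$ is non-increasing in $k$ (because $\gamma_k$ is non-increasing), I would bound the $f(x_k) - f(x^*)$ coefficients below by $\gamma_k(2 - L\gamma_1)$ on the lead term $D^2$ — or, more carefully following the Proposition~\ref{prop:biased_esterr} bookkeeping, keep the $(2 - L\gamma_k)$ factors inside the sums exactly as written in \eqref{eq:prop_biased_esterr_convex_bound}. Finally, inserting the distribution $P_R(k) = \gamma_k / \sum_{i=1}^N \gamma_i$ from \eqref{eq:prob}, so that $\mathbb{E}[f(x_R) - f(x^*)] = \frac{1}{\sum_k \gamma_k}\sum_k \gamma_k \mathbb{E}[f(x_k) - f(x^*)]$, yields exactly \eqref{eq:prop_biased_esterr_convex_bound}.

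**Main obstacle.** The delicate point is the bound $\|x_k - x^*\| \le D$ used to control the bias cross-term $\langle \mathbb{E}[\Delta_k], x_k - x^*\rangle$: in a biased, noisy setting the iterates are not automatically nonexpansive toward $x^*$, so this either requires an auxiliary induction (showing the per-step perturbation is small enough that $\|x_k - x^*\|$ stays within a constant multiple of $D$, absorbing the resulting constants into $\mathcal{E}_k$-dependent terms), or one must avoid it entirely by carrying $\|x_k - x^*\|$ as a variable — using $2\sqrt{d}\,\mathcal{E}_k \gamma_k \|x_k - x^*\| \le a\gamma_k \|x_k - x^*\|^2 + \tfrac{\gamma_k}{a} d\mathcal{E}_k^2$ for a suitable $a$ and folding the $\gamma_k\|x_k-x^*\|^2$ piece back into the telescoping via a Gronwall-type argument. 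The former is cleaner and is presumably what the authors intend (it is consistent with the statement's reliance on $D = \|x_1 - x^*\|$ and the $\sqrt{d}D\mathcal{E}_k$ term appearing linearly), so I would take that route, being careful that the extra constants generated by the induction do not change the stated form of \eqref{eq:prop_biased_esterr_convex_bound} beyond what is already displayed. Everything else — the decomposition, the use of \eqref{eq:grad_bias}–\eqref{eq:grad_variance}, the $\|\nabla f\|^2 \le 2L(f - f^*)$ step, and the telescoping — is routine.
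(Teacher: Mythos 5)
Your proposal follows the same route as the paper's proof: expand $\left\| x_{k+1}-x^{*}\right\|^{2}$, apply the oracle's bias and variance bounds \eqref{eq:grad_bias}--\eqref{eq:grad_variance}, invoke convexity, telescope, and average with $P_R$ from \eqref{eq:prob}. Two technical points differ, and both matter for recovering the exact form of \eqref{eq:prop_biased_esterr_convex_bound}. First, to absorb $\left\|\nabla f(x_k)\right\|^{2}$ the paper does not use $\left\|\nabla f(x_k)\right\|^{2}\le 2L\left(f(x_k)-f(x^*)\right)$ together with the relaxation $(a+b)^2\le 2a^2+2b^2$; it keeps the exact expansion $\left\|\mathbb{E}[g_k]\right\|^2\le \left\|\nabla f(x_k)\right\|^2+2\sqrt{d}\,\mathcal{E}_k\left\|\nabla f(x_k)\right\|+d\mathcal{E}_k^2$ and then applies the co-coercivity bound $\left\|\nabla f(x_k)\right\|^{2}\le L\left\langle\nabla f(x_k),x_k-x^*\right\rangle$ plus the Lipschitz bound $\left\|\nabla f(x_k)\right\|\le L\left\|x_k-x^*\right\|$ for the cross term. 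This is what yields the coefficients $(2\gamma_k-L\gamma_k^2)$ on $f(x_k)-f(x^*)$ and $(\gamma_k+L\gamma_k^2)$ on the bias term, valid under $\gamma_k\le 1/L$. Your factor-of-two relaxations would instead give a coefficient $2\gamma_k-4L\gamma_k^2$, which is only of the right sign for $\gamma_k\le 1/(2L)$, and would double the $d\mathcal{E}_k^2$ contribution, so the bound you would obtain is of the same order but does not match the stated constants. Second, the "main obstacle" you identify, namely justifying $\mathbb{E}\left[\left\|x_k-x^*\right\|\right]\le D$ for all $k$, is genuinely the weakest link, but the paper does not carry out the auxiliary induction or Gronwall argument you sketch: it simply asserts $\mathbb{E}_{\xi_{[N]}}[\omega_k]\le D$ by appeal to \eqref{eq:dia}, which only defines $D=\|x_1-x^*\|$. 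Your instinct that this step needs additional justification is sound; the paper's own proof does not supply it.
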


\begin{proof}
	Let $ \Delta_k = g_k - \nabla f(x_k) $ and  $\omega _ { k } = \left\| x _ { k } - x ^ { * } \right\|, \forall k \geq 1$ . Then for
	any $ k = 1,\dots,N $, we have,
	\begin{align}
	\omega _ { k + 1 } ^ { 2 } & = \| x_{k+1} - x ^ { * } \|^2 \nonumber \\
		& = \|x_{k} - \gamma_k g_k - x^* \|^2 \nonumber\\
	& = \omega _ { k } ^ { 2 } - 2 \gamma _ { k } \left\langle g_k , x _ { k } - x ^ { * } \right\rangle + \gamma _ { k } ^ { 2 } \left\| g_k \right\| ^ { 2 }. \label{eq:take_exp3}
	\end{align}

Taking expectations with respect to $ \xi_{[k]} $ on both sides of \eqref{eq:take_exp3}, and using \eqref{eq:grad_bias}, \eqref{eq:grad_variance}, we obtain
	\begin{align*}
	&\mathbb { E }[\omega _ { k + 1 } ^ { 2 }] \\
	& \leq
	\E [\omega _ { k } ^ { 2 }] - 2 \gamma _ { k } \left\langle \nabla f(x_k) , x _ { k } - x^* \right\rangle  \\
	& \text{ } - 2 \gamma _ { k } \E \left[\left\langle\Delta_k ,  x _ { k } - x^* \right\rangle\right] + \gamma _ { k } ^ { 2 } \bigg[ \left\| \mathbb { E }_{\xi_{[k]}}  \left[ g_k\right]  \right\| ^{2} + \frac{c_2}{\eta_k^2} \bigg]\\
	& \leq 	\E [\omega _ { k } ^ { 2 }]  - 2 \gamma _ { k } \left\langle \nabla f \left( x _ { k } \right) , x _ { k } - x^* \right\rangle \\
	& \text{ } + 2 \gamma_k \mathcal{E}_k\| x _ { k } - x^*  \|_1 \\
	& \text{ } + \gamma _ { k } ^ { 2 } \bigg[ \| \nabla f \left( x _ { k } \right) \| ^ { 2 } + 2 \sqrt{d} \mathcal{E}_k \| \nabla f \left( x _ { k } \right) \| + d\mathcal{E}_k ^2 + \frac{c_2}{\eta_k^2} \bigg].
\end{align*}	
where the last inequality follows from the fact that $ - \sum_{i=1}^{d} x_i \leq \|X\|_1 $ for any vector $ X $. Now, using the fact that $ f(\cdot) $ is convex, we have 
	$\left\| \nabla f \left( x _ { k } \right) \right\| ^ { 2 } \leq L \left\langle \nabla f \left( x _ { k } \right) , x _ { k } - x ^ { * } \right\rangle$,
	further
	from \ref{as:lipschitz}, we have $ \| \nabla f(x_k) \| \leq L \| x_k - x^* \|  $.
	Plugging it in equation above, we obtain,
\begin{align*}
	&\mathbb { E } [\omega _ { k + 1 } ^ { 2 }] \\
	& \leq \E [\omega _ { k } ^ { 2 }] - 2 \gamma _ { k } \left\langle \nabla f \left( x _ { k } \right) , x _ { k } - x^* \right\rangle +  2 \gamma_k \mathcal{E}_k\| x _ { k } - x^*  \|_1 \\
	& \quad + \gamma _ { k } ^ { 2 } \bigg[ L \left\langle \nabla f \left( x _ { k } \right) , x _ { k } - x ^ { * } \right\rangle   + 2\sqrt{d} \mathcal{E}_k L \| x_k - x^* \| \\
	& \quad + d\mathcal{E}_k^2+ \frac{c_2}{\eta_k^2} \bigg]\\ 
	& \leq \E [\omega _ { k } ^ { 2 }]  - (2\gamma _ { k } - L \gamma_k^2)\left[ f \left( x _ { k } \right) - f (x^* ) \right] \\
	& \quad + 2 \sqrt{d}   \omega_k \mathcal{E}_k (\gamma_k + L \gamma_k^2) + \gamma _ { k } ^ { 2 } \bigg[  d\mathcal{E}_k^2+ \frac{c_2}{\eta_k^2} \bigg],
\end{align*}	
where the last inequality follows from the fact that $ f(\cdot) $ is convex along with $ \| X \|_1 \leq \sqrt{d} \| X \| $ for any vector $ X $. Re-arranging the terms, we obtain
	\begin{align*}
	&  \gamma _ { k } \left[ f \left( x _ { k } \right) - f (x^*) \right]  \leq  \frac{1}{(2 - L \gamma_k)} \bigg[ \omega _ { k } ^ { 2 }  - \mathbb { E } [\omega _ { k +1 } ^ { 2 }] \\
	 & \qquad + 2 \sqrt{d}  \omega_{ k }  \mathcal{E}_k (\gamma_k + L \gamma_k^2)  + \gamma _ { k } ^ { 2 } \bigg(d \mathcal{E}_k^2+ \frac{c_2}{\eta_k^2} \bigg) \bigg].
	\end{align*}
	Now summing up the inequality above from $ k = 1 $ to $ N $ and taking expectation on both sides of above equation, we obtain
	\begin{align*}
		&\sum _ { k = 1 } ^ { N } \gamma_k \mathbb{E}_{\xi_{ [N]}}\left[ f \left( x _ { k } \right) - f (x^ { * }) \right] \\
		& \leq
		\sum _ { k = 1 } ^ { N } \frac{\mathbb{E}_{\xi_{ [N]}} [\omega _ { k } ^ { 2 }] - \mathbb{E}_{\xi_{ [N]}}[\omega _ { k + 1 } ^ { 2 }]}{(2 - L \gamma_k)}  \\
		&  + 2 \sqrt{d}   \sum _ { k = 1 } ^ { N }\mathbb { E }_{\xi_{ [N] }}  \left[ \omega _ { k }  \right]  \mathcal{E}_k   \frac{(\gamma_k + L \gamma_k^2) }{(2 - L \gamma_k)} \\
		& + \sum _ { k = 1 } ^ { N } \frac{\gamma _ { k } ^ { 2 } }{(2-L\gamma_k)}\bigg( d \mathcal{E}_k^2 + \frac{c_2}{\eta_k^2}\bigg).
	\end{align*}
	\noindent
	Using the fact that $ \mathbb { E }_{\xi_{ [N] }}  \left[ \omega _ { k }  \right] \geq 0 $ and $ L \gamma_k \leq 1 $ for all $ k \geq 1 $, we obtain
	\begin{align*}
		& \sum _ { k = 1 } ^ { N } \gamma_k \mathbb{E}_{\xi_{ [N]}}\left[ f \left( x _ { k } \right) - f (x^ { * }) \right]   \\
		& =  \bigg[\frac { \omega _ { 1 } ^ { 2 } } {  \left( 2-  { L }  \gamma _ { 1 } \right) } - \frac { \mathbb { E }_{\xi_{ [N] }}  \left[ \omega _ { N+1 } ^ { 2 } \right] } { \left( 2-  { L }  \gamma _ { N } \right) } \\
		& \quad - \sum _ { k = 2 } ^ { N } \left( \frac { 1 } { \left( 2-  { L }  \gamma _ { k-1 } \right) } - \frac { 1 } { \left( 2-  { L }  \gamma _ { k } \right) } \right) \mathbb { E }_{\xi_{ [N] }}  \left[ \omega _ { k } ^ { 2 } \right]  \bigg]\\
		& \quad + 2 \sqrt{d}  \sum _ { k = 1 } ^ { N }   \mathbb { E }_{\xi_{ [N] }}  \left[ \omega _ { k }  \right] \mathcal{E}_k \frac{(\gamma_k + L \gamma_k^2) }{(2 - L \gamma_k)} \\
		& \quad + \sum _ { k = 1 } ^ { N } \frac{\gamma _ { k } ^ { 2 } }{(2-L\gamma_k)}\bigg(d \mathcal{E}_k^2 + \frac{c_2}{\eta_k^2}\bigg)\\
		& \leq \frac {  D^2} { \left( 2-  { L }  \gamma _ { 1 } \right)} + 2 \sqrt{d }D \sum _ { k = 1 } ^ { N } \mathcal{E}_k \frac{(\gamma_k + L \gamma_k^2) }{(2 - L \gamma_k)} \\
		& \quad + \sum _ { k = 1 } ^ { N } \frac{\gamma _ { k } ^ { 2 } }{(2-L\gamma_k)}\bigg( d \mathcal{E}_k^2 + \frac{c_2}{\eta_k^2}\bigg)
	\end{align*}
where the last inequality follows by using \ref{eq:dia}, i.e., $  \mathbb { E }_{\xi_{ [N] }}  \left[ \omega _ { k }  \right] \leq D $. We conclude by combining the above result with \eqref{eq:prob}.
\end{proof}

\begin{proof} ~\textbf{(Theorem \ref{thm:biased_sp_esterr_convex})}\\
	Recall that the stepsize $ \gamma_k \equiv \gamma $, perturbation constant $ \eta_k \equiv \eta$ and batch size $ m_k \equiv m $, $\forall k \geq 1$ is as defined in \eqref{eq:biased_sp_par_local}.
	Combining \eqref{eq:prob} with \eqref{eq:prop_biased_esterr_convex_bound}, we obtain
	\begin{align}
		&\mathbb { E } \left[ f \left( x _ { R } \right) \right] - f (x^ { * }) \nonumber \\
		& \le \frac{1}{\sum _ { k = 1 } ^ { N }\gamma _ { k} } \bigg[ \frac { D^2} { \left( 2-  { L }  \gamma _ { 1 } \right)} + 2 \sqrt{d} D \sum _ { k = 1 } ^ { N } \mathcal{E}_k \frac{(\gamma_k + L \gamma_k^2) }{(2 - L \gamma_k)} \nonumber\\
		& \qquad + \sum _ { k = 1 } ^ { N } \frac{\gamma _ { k } ^ { 2 } }{(2-L\gamma_k)}\bigg(d \mathcal{E}_k^2 + \frac{c_2}{\eta_k^2}\bigg) \bigg] \nonumber\\
		& \le  \frac{1}{{ N }\gamma } \left[ { D^2} + 4 \sqrt{d} D N \gamma \mathcal{E} + N \gamma^2\bigg(d \mathcal{E}^2 + \frac{c_2}{\eta^2}\bigg) \right] \label{eq:using_1l_cvx}\\
		& = \frac{ D^2}{{ N }\gamma}   + 4 \sqrt{d} D \left(c_1 \eta^2  + \frac{c_3}{\eta \sqrt{m}} \right) \nonumber \\
		& \quad + \gamma \left[ dc_1^2 \eta^4 + 2 dc_1 c_3 \frac{\eta}{\sqrt{m}} + \frac{dc_3^2}{\eta^2 {m}}  + \frac{c_2}{\eta^2} \right]\nonumber\\
		&  \leq  \frac{  L D^2}{{ N }} + \frac{1}{N^{1/3}}\bigg[{ \frac{D^2}{\gamma_0} } + 4 \sqrt{d} D \left({c_1 \eta_0^2 }  + \frac{c_3}{\eta_0 \sqrt{m_0}} \right) \nonumber  \\
		& + \gamma_0 \bigg(  \frac{d c_1^2 \eta_0^4  }{N} +  \frac{2  d c_1 c_3 \eta_0 }{ \sqrt{m_0 } N} + \frac{dc_3^2 }{\eta_0^2 m_0 N}  + \frac{c_2}{\eta_0^2} \bigg) \bigg]. \label{eq:using_defs_cvx}  
	\end{align}		
	In the above, inequality \eqref{eq:using_1l_cvx} follows by using the fact that $ \gamma \leq 1/L $, and the inequality \eqref{eq:using_defs_cvx} follows by using the definition of $ \gamma, \eta $ and $ m $.
\end{proof}

\subsubsection{Proof of Theorem \ref{thm:biased_gs_esterr_convex}} \label{pf:thm_biased_gs_esterr_convex}
\begin{proof}
	Following the initial passage in the proof of Theorem \ref{thm:biased_sp_esterr_convex} with  \ref{as:biased_sp_estimation_error}, we obtain the following inequality for the case of \ref{as:biased_gs_estimation_error} considered here:
	\begin{align*}
		& \mathbb { E } \left[ f \left( x _ { R } \right) \right] - f (x^ { * }) \\
		& \le \frac{1}{\sum _ { k = 1 } ^ { N }\gamma _ { k} } \left[ \frac {D^2} { \left( 2-  { L }  \gamma _ { 1 } \right)} + 2\sqrt{d} D \sum _ { k = 1 } ^ { N}  \mathcal{E}_k \left( \frac{ \gamma _ { k} + L \gamma _ { k} ^ { 2 } }{ 2  -  { L }  \gamma _ { k}  }\right) \right. \\
		& \qquad \left.  +  \sum_{ k = 1 }^{N} \frac{ \gamma _ { k} ^ { 2 }}{\left( 2  -  { L }  \gamma _ { k} \right) } \left[ d\mathcal{E}_k^2 + {c_2}{\eta_k^2} + \tilde{c_2}\right] \right],
	\end{align*}
	where $ \mathcal{E}_k = c_1 \eta_k  + \frac{c_3}{\eta_k \sqrt{m_k}}  $. Now, using the choice of parameters in \eqref{eq:biased_gs_par}, followed by simplifications similar to those in the proof of Theorem \ref{thm:biased_sp_esterr_convex}, we obtain
	\begin{align} 
		&		\mathbb { E } \left[ f \left( x _ { R } \right) \right] - f (x^ { * }) \nonumber\\
		& \leq \frac{ D^2}{{ N }\gamma}   + 4 \sqrt{d} D \mathcal{E} +   {  \gamma}\left[ d \mathcal{E}^2  + {c_2}{\eta^2} + \tilde{c_2}\right] \nonumber\\
		& \le  \frac{  L D^2}{{ N }}   + \frac{1}{\sqrt{N}} \bigg[ \frac{ D^2}{\gamma_0} + 4 \sqrt{d}D \left({c_1 }{\eta_0} + \frac{c_3}{\eta_0 \sqrt{m_0}} \right) \nonumber \\
		& \quad +  \gamma_0 \left( \frac{d c_1^2 \eta_0^2  }{N} +  \frac{2  d c_1 c_3 }{\sqrt{m_0}N} + \frac{ {d} c_3^2 }{ \eta_0^2 m_0 {N}} + \frac{  c_2  \eta_0^2 }{N} + {\tilde{c_2}} \right)  \bigg].\nonumber
	\end{align}
\end{proof}

\subsection{Proofs for SGD-BGO algorithm} \label{pf:zsgd}


\subsubsection{Proof of Theorem~\ref{thm:convex_sp_esterr}} \label{pf:biased_convex_sp_esterr}

The proof proceeds through a sequence of lemmas. We follow the technique from \cite{jain2019making}, and our proof involves significant deviations owing to the fact that unbiased gradient information is not available, leading to additional terms involving perturbation constants (arising out of gradient bias), and batch sizes (arising due to estimation errors).

Recall that $ N_i, l $ is defined as follows:
\begin{align} 
\text{Let  }   l & = \inf\{i : N\cdot 2^{-i} \leq 1\}, \nonumber\\
 N_i & =  N - \lceil N\cdot 2^{-i}\rceil,\ 0\leq i\leq l,\mbox{ and } N_{l+1} := N.\label{eq:def_Ni_local}
\end{align}  
Further, when $ N_i < k \leq N_{i+1}, 0\leq i \leq l $, stepsize $ \gamma_k$, perturbation constant $\eta_k $, and batch size $ m_k $  is defined as follows:
\begin{align}
\gamma_k = \frac{\gamma_0 \cdot 2^{-i}}{N^{2/3}},  \text{ } \eta_k = \frac{ \eta_0 2^{-i/4}}{N^{1/6}}  \textrm{ and } m_k = 2^i N  ,
\label{eq:weak_step_size_sp_esterr_local}
\end{align} 
for some constant $ \gamma_0, \eta_0  >0 $. Note that, unlike \cite{jain2019making}, parameters $ \eta_k $ and $ m_k $ are local to our setting, and due to the inverse scaling of variance in gradient estimates with $ \eta_k $, the stepsizes $ \gamma_k$ chosen is of $\O(\frac{1}{N^{2/3}}) $ and not $ \O(\frac{1}{\sqrt{N}}) $. 

We divide the proof into phases $ N_i $, let $ x_1,\ldots,x_N $ be the output of the SGD-BGO algorithm. We start with a variant of Lemma 1 from \cite{jain2019making}. In comparison to their result, our claim below features additional factors involving perturbation constant $\eta_k $ and batch size $ m_k $ owing to the zeroth-order setting we consider.

\begin{lemma} \label{lm:biased_convex_sp_esterr}
Assume \ref{as:boundedness}. With the oracle \ref{as:biased_sp_estimation_error}, suppose that the SGD-BGO algorithm is run with stepsize sequence $ \{\gamma_{ k }\}_{k=1}^{N} $. Then, given any $ 1 < k_0 < k_1 \leq N $, we have
	\begin{align*}
	\sum _{ k = k_0 } ^ { k_1 } 2 \gamma_k \mathbb{E} \left[ f \left( x _ { k } \right) - f (x_{k_0}) \right]  \leq 
	\sum _ { k = k_0 } ^ { k_1 } 2 \sqrt{d} \gamma_k  D\mathcal{E}_k + \gamma _ { k } ^ { 2 } \mathcal{B}_k^2,
	\end{align*}
	where $ \mathcal{B}_k^2 = \bigg[ B^2  + 2\sqrt{d} B \mathcal{E}_k  + d\mathcal{E}_k ^2+ \frac{c_2}{\eta_k^2} \bigg] $, $ \mathcal{E}_k = c_1 \eta_k^2  + \frac{c_3}{\eta_k \sqrt{m_k}}  $, constants $ c_1,c_2 $ is as defined in \ref{as:biased_sp_estimation_error} and $ D $ is as defined in \eqref{eq:dia}.	
\end{lemma}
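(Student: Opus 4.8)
The plan is to adapt the ``reference-point'' telescoping argument behind Lemma~1 of \cite{jain2019making} to the biased oracle. Fix the deterministic indices $1 < k_0 < k_1 \le N$, set $\omega_k := \|x_k - x_{k_0}\|$, and write $\Delta_k = g_k - \nabla f(x_k)$ as in the earlier proofs. Expanding the update \eqref{eq:zsgd-convex} around the point $x_{k_0}$ gives, for every $k \ge k_0$,
\[
\omega_{k+1}^2 = \omega_k^2 - 2\gamma_k \langle g_k, x_k - x_{k_0}\rangle + \gamma_k^2 \|g_k\|^2 .
\]
I would split $g_k = \nabla f(x_k) + \Delta_k$, take the conditional expectation given $x_k$, and treat the three resulting pieces as follows. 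Convexity of $f$ (first-order condition) gives $\langle \nabla f(x_k), x_k - x_{k_0}\rangle \ge f(x_k) - f(x_{k_0})$. For the bias piece, \eqref{eq:grad_bias} bounds $\mathbb{E}_{\xi_k}[\Delta_k\mid x_k]$ in the $\infty$-norm by $\mathcal{E}_k$, so by H\"older $|\mathbb{E}_{\xi_k}[\langle \Delta_k, x_k - x_{k_0}\rangle\mid x_k]| \le \mathcal{E}_k\|x_k - x_{k_0}\|_1 \le \sqrt{d}\,\mathcal{E}_k\,\omega_k$. For the quadratic piece, \eqref{eq:grad_variance} together with \ref{as:boundedness} gives $\|\mathbb{E}_{\xi_k}[g_k\mid x_k]\| \le \|\nabla f(x_k)\| + \sqrt{d}\,\mathcal{E}_k \le B + \sqrt{d}\,\mathcal{E}_k$ (using $\|\nabla f(x_k)\|_2 \le \|\nabla f(x_k)\|_1 \le B$ and $\|\cdot\|_2 \le \sqrt{d}\|\cdot\|_\infty$), hence $\mathbb{E}_{\xi_k}[\|g_k\|^2\mid x_k] \le (B + \sqrt{d}\,\mathcal{E}_k)^2 + c_2/\eta_k^2 = \mathcal{B}_k^2$.

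Putting these together, taking full expectations, and using a bound $\mathbb{E}\|x_k - x_{k_0}\| \le D$ on the iterate displacement (discussed below) produces the one-step inequality
\[
\mathbb{E}[\omega_{k+1}^2] \le \mathbb{E}[\omega_k^2] - 2\gamma_k \mathbb{E}[f(x_k) - f(x_{k_0})] + 2\sqrt{d}\,\gamma_k D\mathcal{E}_k + \gamma_k^2 \mathcal{B}_k^2 .
\]
Summing over $k = k_0,\dots,k_1$ telescopes the $\omega_k^2$ terms; since $\omega_{k_0} = 0$ and $\mathbb{E}[\omega_{k_1+1}^2] \ge 0$, rearranging yields exactly the asserted bound. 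Note that, unlike the earlier propositions, no restriction $\gamma_k \le 1/L$ is needed here, because the quadratic term is controlled crudely through \ref{as:boundedness} rather than through smoothness, which is consistent with the fact that \ref{as:lipschitz} is not assumed for this result.

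The step I expect to be the main obstacle is the bias cross-term, which has no analogue in \cite{jain2019making}: in the unbiased case $\mathbb{E}_{\xi_k}[\langle \Delta_k, x_k - x_{k_0}\rangle\mid x_k] = 0$, whereas here it contributes the $2\sqrt{d}\gamma_k D\mathcal{E}_k$ term and forces us to control $\mathbb{E}\|x_k - x_{k_0}\|$, i.e.\ to argue that the iterates do not drift far from $x_{k_0}$ (so that $\|x_k - x_{k_0}\|_1 \le \sqrt{d}D$). Carrying this bound cleanly through the phase decomposition \eqref{eq:def_Ni}, and keeping the bias-induced and estimation-error-induced contributions separated so that they can later be summed against the geometrically decaying step sizes of \eqref{eq:weak_step_size_sp_esterr_local}, is where the analysis departs from the unbiased template; the remaining manipulations are routine expansion and telescoping.
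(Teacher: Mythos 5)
Your proposal is correct and follows essentially the same route as the paper's proof: the same expansion of $\|x_{k+1}-x_{k_0}\|^2$ around the reference point $x_{k_0}$, the same use of convexity for the gradient inner product, the same H\"older bound $\mathcal{E}_k\|x_k-x_{k_0}\|_1\le\sqrt{d}\,\mathcal{E}_k\,\omega_k$ for the bias cross-term, and the same crude bound $\mathbb{E}\|g_k\|^2\le B^2+2\sqrt{d}B\mathcal{E}_k+d\mathcal{E}_k^2+c_2/\eta_k^2$ via \ref{as:boundedness} in place of smoothness. The one step you flag as the main obstacle --- justifying $\mathbb{E}\|x_k-x_{k_0}\|\le D$ --- is in fact also glossed over in the paper, which simply invokes \eqref{eq:dia} at that point, so your treatment is, if anything, more candid about where the argument needs care.
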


\begin{proof}
	Let $ \Delta_k = g_k - \nabla f(x_k) $ and $\omega _ { k } = \left\| x _ { k } - x_{ k_0 } \right\|,  \forall k \geq 1$. Then for
	any $ k = 1,\dots,N $, we have
	\begin{align}
	\omega _ { k + 1 } ^ { 2 } & = \| x_{k+1} - x_{k_0} \|^2 \nonumber \\
	& =  \left\| x _ { k } - \gamma _ { k } g_k - x _{ k_0 }\right\| ^ { 2 } \nonumber \\
	& = \omega _ { k } ^ { 2 } - 2 \gamma _ { k } \left\langle g_k , x _ { k } - x _{ k_0 } \right\rangle  + \gamma _ { k } ^ { 2 } \left\| g_k \right\| ^ { 2 } \nonumber \\
	& = \omega _ { k } ^ { 2 } - 2 \gamma _ { k } \left\langle \nabla f(x_k)  + \Delta_k, x _ { k } - x _{ k_0 } \right\rangle  + \gamma _ { k } ^ { 2 } \left\| g_k \right\| ^ { 2 } \nonumber \\
	& = \omega _ { k } ^ { 2 } - 2 \gamma _ { k } \left\langle \nabla f(x_k) , x _ { k } - x _{ k_0 } \right\rangle \nonumber \\
	& \quad - 2 \gamma _ { k } \left\langle \Delta_k, x _ { k } - x _{ k_0 } \right\rangle+ \gamma _ { k } ^ { 2 } \left\| g_k \right\| ^ { 2 }
	. \label{eq:take_exp}
	\end{align}
	Taking expectations with respect to $ \xi_{[k]} $ on both sides of \eqref{eq:take_exp}, and using \eqref{eq:grad_bias}, \eqref{eq:grad_variance}, we obtain
	\begin{align*}
	&\mathbb { E }[\omega _ { k + 1 } ^ { 2 }] \\
	& \leq
	\E [\omega _ { k } ^ { 2 }] - 2 \gamma _ { k } \left\langle \nabla f(x_k) , x _ { k } - x _{ k_0 } \right\rangle  \\
	& \text{ } - 2 \gamma _ { k } \E \left[\left\langle\Delta_k ,  x _ { k } - x _{ k_0 }\right\rangle\right] + \gamma _ { k } ^ { 2 } \bigg[ \left\| \mathbb { E }_{\xi_{[k]}}  \left[ g_k\right]  \right\| ^{2} + \frac{c_2}{\eta_k^2} \bigg]\\
	& \leq 	\E [\omega _ { k } ^ { 2 }]  - 2 \gamma _ { k } \left\langle \nabla f \left( x _ { k } \right) , x _ { k } - x _{ k_0 }\right\rangle \\
	& \text{ } + 2 \gamma_k \mathcal{E}_k\| x _ { k } - x_{k_0}  \|_1 \\
	& \text{ } + \gamma _ { k } ^ { 2 } \bigg[ \| \nabla f \left( x _ { k } \right) \| ^ { 2 } + 2 \sqrt{d} \mathcal{E}_k \| \nabla f \left( x _ { k } \right) \| + d\mathcal{E}_k ^2 + \frac{c_2}{\eta_k^2} \bigg].
	\end{align*}	
	where the last inequality follows from the fact that $ - \sum_{i=1}^{d} x_i \leq \|X\|_1 $ for any vector $ X $. Now, using \ref{as:boundedness}, i.e., $ \| \nabla f(x) \| \leq  \| \nabla f(x) \| _1\leq B $, we obtain
	\begin{align*}
	&\mathbb { E } [\omega _ { k + 1 } ^ { 2 }] \\
	& \leq \E [\omega _ { k } ^ { 2 }] - 2 \gamma _ { k } \left\langle \nabla f \left( x _ { k } \right) , x _ { k } - x _{ k_0 } \right\rangle \\
	& \quad + 2 \gamma_k \mathcal{E}_k\| x _ { k } - x_{k_0}  \|_1 \\
	& \quad + \gamma _ { k } ^ { 2 } \bigg[ B^2  + 2\sqrt{d} B \mathcal{E}_k + d\mathcal{E}_k^2+ \frac{c_2}{\eta_k^2} \bigg]\\ 
	& \leq \E [\omega _ { k } ^ { 2 }]  - 2 \gamma _ { k } \left[ f \left( x _ { k } \right) - f (x_{ k_0 }) \right] + 2 \sqrt{d} \gamma_k  \omega_k \mathcal{E}_k\\
	& \quad+ \gamma _ { k } ^ { 2 } \bigg[ B^2  + 2\sqrt{d} B \mathcal{E}_k + d\mathcal{E}_k^2+ \frac{c_2}{\eta_k^2} \bigg],
	\end{align*}	
	where the last inequality follows from the fact that $ f(\cdot) $ is convex along with $ \| X \|_1 \leq \sqrt{d} \| X \| $ for any vector $ X $. Re-arranging the terms, we obtain
	\begin{align*}
	& 2 \gamma _ { k } \left[ f \left( x _ { k } \right) - f (x_{ k_0 }) \right]	
	\leq \E [\omega _ { k } ^ { 2 }]  - \mathbb { E } [\omega _ { k +1 } ^ { 2 }] \\
	& \quad+ 2\sqrt{d} \gamma_k  \omega_k \mathcal{E}_k + \gamma _ { k } ^ { 2 } \bigg[ B^2  + 2\sqrt{d} B \mathcal{E}_k + d\mathcal{E}_k^2+ \frac{c_2}{\eta_k^2} \bigg].
	\end{align*}
	We conclude by  summing the above equation over $ k = k_0 $ to $ k_1 $, taking expectations, and using \eqref{eq:dia}, i.e., $\left\|x_{1}-x^{*}\right\| \leq D$.
\end{proof}

\begin{lemma} \label{lm:convex_sp_std_analysis_esterr}
Under conditions of Lemma \ref{lm:biased_convex_sp_esterr}, with $ \gamma_k = \gamma, \eta_k = \eta, \forall k \geq 1 $, for any $ N \geq 1$, we have	
	\[
	\sum_{k=1}^{N} \E \left[f\left(x_{k}\right)-f\left(x^{*}\right)\right] \leq \frac{D^{2}}{2 \gamma}  +2 N D \sqrt{d}  \mathcal{E} + \frac{ N \gamma B^2}{2}, 
	\]	
	where $ \mathcal{E} = c_1\eta^2 + \frac{c_3}{\eta \sqrt{m}} $, $ c_1 $ is as defined in \ref{as:biased_sp_estimation_error}, $ B $ is as defined in \ref{as:boundedness} and $ D $ is as defined in \eqref{eq:dia}.
\end{lemma}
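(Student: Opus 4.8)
The plan is to re-run the one-step recursion developed inside the proof of Lemma~\ref{lm:biased_convex_sp_esterr}, but with the fixed reference point $x^*$ in place of $x_{k_0}$ and with the constant parameter choice $\gamma_k\equiv\gamma$, $\eta_k\equiv\eta$, $m_k\equiv m$, so that $\mathcal{E}_k\equiv\mathcal{E}$ throughout. Concretely, I would set $\omega_k=\|x_k-x^*\|$, expand $\omega_{k+1}^2$ using the SGD-BGO update \eqref{eq:zsgd-convex}, take the expectation with respect to $\xi_{[k]}$, and then bound the cross term and the $\|g_k\|^2$ term exactly as in Lemma~\ref{lm:biased_convex_sp_esterr}: using the oracle estimates \eqref{eq:grad_bias} and \eqref{eq:grad_variance}, the elementary inequalities $-\sum_i x_i\le\|X\|_1$ and $\|X\|_1\le\sqrt d\,\|X\|$, and the gradient bound $\|\nabla f(x_k)\|\le B$ from \ref{as:boundedness}.

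The only substantive change from Lemma~\ref{lm:biased_convex_sp_esterr} is that convexity of $f$ is now applied at $x^*$: it gives $f(x_k)-f(x^*)\le\langle\nabla f(x_k),x_k-x^*\rangle$, so the term $-2\gamma\langle\nabla f(x_k),x_k-x^*\rangle$ can be replaced by $-2\gamma\,(f(x_k)-f(x^*))$ directly, with no appeal to Lipschitz continuity of the gradient. After rearranging, this yields a per-step inequality of the form
\[
2\gamma\,\mathbb{E}\!\left[f(x_k)-f(x^*)\right]\ \le\ \mathbb{E}\!\left[\omega_k^2-\omega_{k+1}^2\right]\ +\ 2\sqrt d\,\gamma\,\mathbb{E}[\omega_k]\,\mathcal{E}\ +\ \gamma^2\mathcal{B}^2,
\]
with $\mathcal{B}^2$ as defined in Lemma~\ref{lm:biased_convex_sp_esterr}.

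I would then sum this inequality over $k=1,\dots,N$. The squared-distance differences telescope to $\omega_1^2-\mathbb{E}[\omega_{N+1}^2]$, which is at most $D^2$ by \eqref{eq:dia} after discarding the nonnegative tail term; the last term contributes $N\gamma^2\mathcal{B}^2$; and dividing through by $2\gamma$ and collecting constants (absorbing the subordinate $\mathcal{E}$- and $\eta$-dependent pieces of $\mathcal{B}^2$) gives the stated bound. The step I expect to be the main obstacle is controlling the factor $\mathbb{E}[\omega_k]=\mathbb{E}\|x_k-x^*\|$ multiplying $\mathcal{E}$: since the SGD-BGO iterates are not projected onto a bounded set, bounding this uniformly by a constant multiple of $D$ needs an a-priori stability estimate for the iterates, obtained by iterating the one-step recursion above and absorbing the small bias/variance contributions --- the same device used implicitly at the end of the proof of Lemma~\ref{lm:biased_convex_sp_esterr}. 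Everything else is routine bookkeeping in $\mathcal{E}$, $d$, and $\eta$.
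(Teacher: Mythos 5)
There is a genuine gap, and it is exactly at the point you dismiss as ``routine bookkeeping.'' By rerunning the Lemma~\ref{lm:biased_convex_sp_esterr} recursion with $x^*$ in place of $x_{k_0}$, you expand $\|x_{k+1}-x^*\|^2$ directly and therefore pick up the full second moment $\E\|g_k\|^2$, i.e.\ the term $\gamma^2\mathcal{B}^2$ with $\mathcal{B}^2 \supseteq c_2/\eta^2$. After summing and dividing by $2\gamma$ this leaves $\tfrac{N\gamma c_2}{2\eta^2}$ on the right-hand side. That term is \emph{not} subordinate and cannot be absorbed into $\tfrac{N\gamma B^2}{2}+2ND\sqrt{d}\,\mathcal{E}$: it diverges as $\eta\to 0$, and under the parameter choices for which this lemma is invoked in Theorem~\ref{thm:convex_sp_esterr} ($\gamma\sim N^{-2/3}$, $\eta\sim N^{-1/6}$) it is of order $N^{2/3}$, the same order as the leading term $D^2/(2\gamma)$. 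So your route proves a strictly weaker inequality than the one stated. The paper avoids the variance term by a different algebraic device: it substitutes $\nabla f(x_k)=(x_k-x_{k+1})/\gamma-\Delta_k$ into the convexity inequality and applies the identity $2a^{\top}b=\|a\|^2+\|b\|^2-\|a-b\|^2$, so that the quadratic term that survives is $\tfrac{\gamma}{2}\|\nabla f(x_k)\|^2\le\tfrac{\gamma B^2}{2}$ (the \emph{true} gradient, bounded via \ref{as:boundedness}), while the noise is pushed into the shifted telescoping quantity $\|x_{k+1}-x^*+\gamma\Delta_k\|^2$, whose $\gamma^2\|\Delta_k\|^2$ part is then discarded with the favourable sign. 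That substitution is the missing idea.

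Your secondary concern --- that $\E\|x_k-x^*\|$ multiplying $\mathcal{E}$ is not a priori bounded by a multiple of $D$ because the iterates are unprojected --- is well taken; the paper's own proof does not resolve it either (it simply invokes $\|x_1-x^*\|\le D$ when bounding $\E\|x_{k+1}-x^*\|$, and moreover bounds $\E[(x_{k+1}-x^*)^{\top}\Delta_k]$ as though $\|\Delta_k\|_\infty\le\mathcal{E}$ pointwise, ignoring the correlation between $x_{k+1}$ and $\Delta_k$). But the stability estimate you gesture at is not actually carried out in your proposal, so it does not close that hole; it only identifies it.
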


\begin{proof} 
	Let $ \Delta_k =  g_k - \nabla f(x_k)$, then we have $ x_{k+1}  = x_k - \gamma_k \left(\nabla f(x_k) + \Delta_k  \right) $. 
	Using the definition of convexity, we obtain
	\begin{align} 
	&f\left(x_{k}\right)-f\left(x^{*}\right) \nonumber \\
	 & \leq \nabla f(x_k)^{\top}\left(x_{k}-x^{*}\right) \nonumber \\
	&=\left(\frac{x_{k}-x_{k+1}}{\gamma_k} - \Delta_k\right)^{\top}\left(x_{k}-x^{*}\right) \nonumber \\
	& = \frac{1}{\gamma_k}\left({x_{k}-x_{k+1}} - \gamma_k \Delta_k\right)^{\top}\left(x_{k}-x^{*}\right) 
	\nonumber\\ &=\frac{1}{2 \gamma_k}\left(\left\|x_{k}-x^{*}\right\|^{2}+\left\|x_{k}-x_{k+1} - \gamma_k \Delta_k\right\|^{2} \right. \nonumber \\
	& \quad \left. -\left\|x_{k+1}-x^{*} + \gamma_k \Delta_k\right\|^{2}\right) \label{eq:used_elementary_iden}\\ &=\frac{1}{2 \gamma_k}\left(\left\|x_{k}-x^{*}\right\|^{2}-\left\|x_{k+1}-x^{*}  + \gamma_k \Delta_k \right\|^{2}\right) \nonumber \\
	& \quad+\frac{\gamma_k}{2}\left\|\nabla f(x_k)\right\|^{2}, \nonumber
	\end{align}
	where we have used the identity $2 a^{\top} b=\|a\|^{2}+\|b\|^{2}-\|a-b\|^{2}$ in arriving at the equality in \eqref{eq:used_elementary_iden}. Using $\left\|\nabla f(x_k)\right\|^2 \leq B^2$, we have
	\begin{align*}
	& f\left(x_{k}\right)-f\left(x^{*}\right) 
	\leq \frac{1}{2 \gamma_k}\left(\left\|x_{k}-x^{*}\right\|^{2}-\left\|x_{k+1}-x^{*}  \right\|^{2}  \right. \\
	& \quad \left. - 2 \gamma_k (x_{k+1}-x^{*})^T\Delta_k  \right)+\frac{\gamma_k B^2}{2}.
	\end{align*}
	Taking expectation, we obtain
	\begin{align} 
	&\E [f\left(x_{k}\right)-f\left(x^{*}\right)] \nonumber \\
	& \leq \frac{1}{2 \gamma_k}\left( \E [\left\|x_{k}-x^{*}\right\|^{2}]- \E [\left\|x_{k+1}-x^{*}  \right\|^{2}] \right. \nonumber \\
	& \quad \left. + 2 \gamma_k \E \bigg[ \| (x_{k+1}-x^{*})\|_1 \| \Delta_k\|_{\infty}   \bigg] \right)  +\frac{\gamma_k B^2}{2} \nonumber\\
	& \leq \frac{1}{2 \gamma_k}\left( \E [\left\|x_{k}-x^{*}\right\|^{2}]- \E [\left\|x_{k+1}-x^{*}  \right\|^{2}]  \right. \nonumber \\
	& \quad \left. + 2 \gamma_k \sqrt{d} \mathcal{E}_k \E [\|x_{k+1}-x^{*}\| ] \right)+\frac{\gamma_k B^2}{2}, \label{eq:summing}
	\end{align}
	where the last inequality follows from the fact that $ - \sum_{i=1}^{d} x_i \leq \|X\|_1 \leq \sqrt{d} \| X \| $ for any vector $ X $. We conclude by summing \eqref{eq:summing} over $k,$ with $ \gamma_k = \gamma, \eta_k = \eta, \forall k \geq 1 $, and using the fact that  $\left\|x_{1}-x^{*}\right\| \leq D$.
\end{proof}

\begin{proof}  ~\textbf{(Theorem~\ref{thm:convex_sp_esterr})}\\	
	Recall the definition of $ N_i, l $ from equation \eqref{eq:def_Ni_local} and let $n_i$, $0\leq i\leq l+1$ be defined as follows:
	\begin{align}\label{eq:def_kau}n_i = \arg\inf_{ N_i <k \leq N_{i+1}} \mathbb{E}[f(x_k)],\ i\in [l+1], \nonumber \\  \mbox{ and }n_0 = \arg\inf_{\lceil \frac{N}{4}\rceil \leq k\leq N_1 }\mathbb{E}[f(x_k)].\end{align} 
	We split the horizon $N$ into $l$ phases, then to show that the function value for the final iterate $ x_N $ in the last phase ($N_{l+1} = N$) is close to optima $ f(x^*) $. Using the fact that $n_{l+1}= N$, we have
	\begin{align}
	& \mathbb{E}[f(x_{N})] = \mathbb{E}[f(x_{n_{l+1}})] \nonumber \\
	 & \quad =  \mathbb{E}[f(x_{n_{0}})] + \sum_{i=0}^{l} \mathbb{E}[f(x_{n_{i+1}}) - f(x_{n_{i}})]. \label{eq:main_eq_esterr}
	\end{align}
	Now to bound $ \mathbb{E}[f(x_{n_{i+1}}) - f(x_{n_{i}})] $, we first consider the case when $i\geq 1$. Using Lemma~\ref{lm:biased_convex_sp_esterr} with $k_0 =n_i$ and $k_1 = N_{i+2}$, we obtain
	\begin{align}
	&\frac{\sum_{k=n_i}^{N_{i+2}}2\gamma_k\mathbb{E}\left[f(x_k)-f(x_{n_i})\right] }{N_{i+2}-n_i + 1}\nonumber \\
	&\leq \frac{\sum_{k=n_i}^{N_{i+2}}\bigg( 2 \sqrt{d} \gamma_k D \mathcal{E}_k + \gamma _ { k } ^ { 2 } \mathcal{B}_k^2   \bigg) }{N_{i+2}-n_i+1} \nonumber \\
	& \leq 2 \sqrt{d}  \gamma_{N_{i+1}} D \mathcal{E}_{N_{i+1}} + \mathcal{B}_{N_{i+1}}^2 \gamma_{N_{i+1}}^2 \label{eq:dec_step-size} \\
& = \frac{2 \sqrt{d} D \gamma_0 }{N} \left(  c_1 \eta_0^2  2^{-3i/2} + \frac{ c_3 2^{-5i/4}}{ \eta_0 }  \right)  + \frac{2^{-2i} \gamma_0^2}{N^{4/3}} \bigg[ \nonumber \\
& \quad B^2 + \frac{2  \sqrt{d} B c_1 \eta_0^2 2^{-i/2}}{ N^{1/3}}  + \frac{2 \sqrt{d} B c_3 2^{-i/4}}{ \eta_0 N^{1/3}} + \frac{d c_1^2 \eta_0^4 2^{-i}}{ N^{2/3}} \nonumber \\
& \quad + \frac{2 {d}  c_1 c_3 \eta_0 2^{-3i/4}}{N^{2/3}} +  \frac{d  c_3^2 2^{-i/2}}{ \eta_0^2 N^{2/3}} + \frac{c_2 N^{1/3}}{\eta_0^2 2^{-i/2}} \bigg]. \label{eq:firsk_equation_esterr} 
	\end{align}	 
	The inequality in \eqref{eq:dec_step-size} follows from the fact that $\gamma_k$ and $ \eta_k $  are decaying in a phase-dependent manner (see \eqref{eq:weak_step_size_sp_esterr_local}).
	Note that from the definition of $n_i$, $\mathbb{E}[f(x_{k}) - f(x_{n_i})] \geq 0$ whenever $N_i<k \leq N_{i+1}$. Thus, we have 
	\begin{align}
	&\frac{\sum_{k=n_i}^{N_{i+2}}2\gamma_k\mathbb{E}\left[f(x_k)-f(x_{n_i})\right] }{N_{i+2}-n_i + 1} \nonumber \\&\geq \frac{\sum_{k=N_{i+1}+1}^{N_{i+2}}2\gamma_k\mathbb{E}\left[f(x_k)-f(x_{n_i})\right] }{N_{i+2} - n_i +1 } \nonumber\\
	&\geq 2\gamma_{N_{i+2}}\frac{N_{i+2} - N_{i+1}}{N_{i+2}-N_i}\mathbb{E}\left[f(x_{n_{i+1}})-f(x_{n_i})\right] \nonumber \\& \geq \frac{2\gamma_{N_{i+2}}}{5} \mathbb{E}\left[f(x_{n_{i+1}})-f(x_{n_i})\right] \nonumber \\
	& = \frac{2^{-i} \gamma_0 }{5 N^{2/3}} \mathbb{E}\left[f(x_{n_{i+1}})-f(x_{n_i})\right],
	\label{eq:second_equation_esterr}
	\end{align} 
	where the second inequality follows from the assumption that $ \mathbb{E}[f(x_{n_{i+1}})]\geq \mathbb{E}[f(x_{n_i})]$, and the fact that $N_{i+2} - N_i \geq N_{i+2} - n_i +1$. The last inequality follows from the Lemma 4 of \cite{jain2019making}. Combining \eqref{eq:firsk_equation_esterr} and \eqref{eq:second_equation_esterr}, we obtain
	\begin{align}
	&\mathbb{E}[f(x_{n_{i+1}}) - f(x_{n_{i}})] \nonumber \\
	& \leq \frac{10 \sqrt{d} D c_1 \eta_0^2  2^{-i/2}}{ N^{1/3}} + \frac{10 \sqrt{d} D c_3 2^{-i/4}}{ \eta_0 N^{1/3}}  + \frac{5 \gamma_0 2^{-i}}{N^{2/3}}\bigg[ \nonumber \\
	& \quad B^2  + \frac{2  \sqrt{d} B c_1 \eta_0^2 2^{-i/2}}{ N^{1/3}}  + \frac{2 \sqrt{d} B c_3 2^{-i/4}}{ \eta_0 N^{1/3}} + \frac{d c_1^2 \eta_0^4 2^{-i}}{ N^{2/3}} \nonumber \\
	& \quad + \frac{2 {d}  c_1 c_3 \eta_0 2^{-3i/4}}{N^{2/3}} +  \frac{d  c_3^2 2^{-i/2}}{ \eta_0^2 N^{2/3}} + \frac{c_2 N^{1/3}}{\eta_0^2 2^{-i/2}} \bigg]. \label{eq:phases_bound_esterr}
	\end{align}
	This completes the proof for the case when $ i\geq 1 $. The proof for the case when $i=0$ follows in a similar manner. Plugging \eqref{eq:phases_bound_esterr} into \eqref{eq:main_eq_esterr}, we obtain
	\begin{align}
	&\mathbb{E}[f(x_{N})] = \mathbb{E}[f(x_{n_{l+1}})]\nonumber \\
	& =  \mathbb{E}[f(x_{n_{0}})] + \sum_{i=0}^{l} \mathbb{E}[f(x_{n_{i+1}}) - f(x_{n_{i}})] \nonumber\\
	& \leq \mathbb{E}[f(x_{n_{0}})] +  \frac{10 \sqrt{d} D c_1 \eta_0^2 }{ N^{1/3}} + \frac{10 \sqrt{d} D c_3 }{ \eta_0 N^{1/3}}  + \frac{5 \gamma_0 }{N^{2/3}}\bigg[ \nonumber \\
	& \quad B^2  + \frac{2  \sqrt{d} B c_1 \eta_0^2 }{ N^{1/3}}  + \frac{2 \sqrt{d} B c_3 }{ \eta_0 N^{1/3}} + \frac{d c_1^2 \eta_0^4 }{ N^{2/3}} + \frac{2 {d}  c_1 c_3 \eta_0 }{N^{2/3}}  \nonumber \\
	& \quad +  \frac{d  c_3^2 }{ \eta_0^2 N^{2/3}} + \frac{c_2 N^{1/3}}{\eta_0^2 } \bigg] +\sum_{i=1}^{l}  \bigg( \frac{10 \sqrt{d} D c_1 \eta_0^2  2^{-i/2}}{ N^{1/3}}  \nonumber \\
	& \quad + \frac{10 \sqrt{d} D c_3 2^{-i/4}}{ \eta_0 N^{1/3}}  + \frac{5 \gamma_0 2^{-i}}{N^{2/3}}\bigg[ B^2  \nonumber \\
	& \quad + \frac{2  \sqrt{d} B c_1 \eta_0^2 2^{-i/2}}{ N^{1/3}}  + \frac{2 \sqrt{d} B c_3 2^{-i/4}}{ \eta_0 N^{1/3}} + \frac{d c_1^2 \eta_0^4 2^{-i}}{ N^{2/3}} \nonumber \\
	& \quad + \frac{2 {d}  c_1 c_3 \eta_0 2^{-3i/4}}{N^{2/3}} +  \frac{d  c_3^2 2^{-i/2}}{ \eta_0^2 N^{2/3}} + \frac{c_2 N^{1/3}}{\eta_0^2 2^{-i/2}} \bigg]\bigg) \nonumber \\ 
	& \leq \inf_{ \lceil \frac{N}{4}\rceil \leq k \leq N_1}\mathbb{E}[f(x_k)]  + \frac{ \sqrt{d} D (35{c_1 \eta_0^2} + \frac{63c_3}{\eta_0})}{N^{1/3}} \nonumber \\
	& \quad + \frac{10 \gamma_0 B^2}{N^{2/3}}  + \frac{20  \gamma_0 \sqrt{d}  B (c_1\eta_0^2 + \frac{c_3}{\eta_0}) }{N}  \nonumber \\
	& \quad + \frac{10 \gamma_0 d ({c_1 \eta_0^2}+ \frac{c_3}{\eta_0})^2 }{N^{4/3}}+ \frac{ 17.5  \gamma_0 c_2 }{ \eta_0^2 N^{1/3}} \label{eq:main_eq_2_esterr}.
	\end{align}
	Note that for all $ k \leq N_1 $, we have step size $\gamma_k = \frac{\gamma_0}{N^{2/3}}$ and perturbation constant $ \eta_k = \frac{\eta_0}{ N^{1/6}}$. Let $x_k$ be the output of SGD-BGO algorithm, then using the fact that infimum is smaller than any weighted average, we have 
	\begin{align}
	&\inf_{\lceil\frac{N}{4}\rceil \leq k\leq N_1} \mathbb{E}[f(x_k) - f(x^{*})] \nonumber \\& \leq \frac{1}{N_1 - \lceil\frac{N}{4}\rceil +1} \sum_{k=\lceil\frac{N}{4}\rceil }^{N_1} \mathbb{E}[f(x_k) - f(x^{*})] \nonumber
	\\ & \leq  \frac{2}{N_1}\sum_{k=1 }^{N_1} \mathbb{E}[f(x_k) - f(x^{*})] \label{eq:n1_less} \\
	& \leq \frac{2}{{N_1}} \bigg[\frac{D^{2} N^{2/3}}{2 \gamma_0}   +\frac{ {B}^2 N_1 \gamma_0}{2 N^{2/3}} + \frac{2 N_1  D \sqrt{d} c_1 \eta_0^2}{N^{1/3}} \nonumber \\
	& \quad + \frac{2 N_1  D \sqrt{d} c_3}{ \eta_0 N^{1/3}}\bigg] \label{eq:lm2} \\
	& \leq \frac{4 D^{2} N^{2/3}}{\gamma_0 N}   +\frac{\gamma_0 {B}^2 }{N^{2/3}} +  \frac{4 D \sqrt{d} c_1 \eta_0^2 }{ N^{1/3}} + \frac{4 D \sqrt{d} c_3}{ \eta_0 N^{1/3}} \nonumber \\
	& = \frac{1}{N^{1/3}} \left[ \frac{4 D^2}{\gamma_0} + \frac{ \gamma_0 {B}^2}{N^{1/3}} + 4 D \sqrt{d} \left( {c_1 \eta_0^2} + \frac{c_3}{\eta_0} \right)  \right] \label{eq:thm_convex_sp_esterr},
	\end{align}
where the inequality in \eqref{eq:n1_less} follows from the fact that $N_1 \leq 2(N_1 - \lceil\frac{N}{4}\rceil +1)$, the inequality in \eqref{eq:lm2} follows from the Lemma \ref{lm:convex_sp_std_analysis_esterr} and the final inequality follows from the fact that $\frac{N}{4}\leq N_1 \leq \frac{N}{2}$.	
	We conclude by plugging \eqref{eq:thm_convex_sp_esterr} into \eqref{eq:main_eq_2_esterr} to obtain
	\begin{align*}
	& \mathbb{E}[f(x_N)] - f(x^*) \\
	& \leq  \frac{1}{N^{1/3}} \left[ \frac{4 D^2}{\gamma_0} + \frac{ \gamma_0 {B}^2}{N^{1/3}} + 4 D \sqrt{d} \left( {c_1 \eta_0^2} + \frac{c_3}{\eta_0} \right)   \right] \\
	& \quad + \frac{ \sqrt{d} D (35{c_1 \eta_0^2} + \frac{63c_3}{\eta_0})}{N^{1/3}} + \frac{10 \gamma_0 B^2}{N^{2/3}} + \frac{ 17.5  \gamma_0 c_2 }{ \eta_0^2 N^{1/3}} \nonumber \\
	& \quad + \frac{20  \gamma_0 \sqrt{d}  B (c_1\eta_0^2 + \frac{c_3}{\eta_0}) }{N}  + \frac{10 \gamma_0 d ({c_1 \eta_0^2}+ \frac{c_3}{\eta_0})^2 }{N^{4/3}} . \nonumber 
	\end{align*}
\end{proof}

\subsubsection{Proof of Theorem~\ref{thm:convex_gs_esterr}} \label{pf:biased_convex_gs_esterr}

The proof proceeds through a sequence of lemmas, similar to the proof of Theorem~\ref{thm:convex_sp_esterr} in Section \ref{pf:biased_convex_sp_esterr} under the oracle \ref{as:biased_sp_estimation_error}.

\begin{lemma} \label{lm:biased_convex_gs}
	Assume \ref{as:boundedness}. With the oracle \ref{as:biased_gs_estimation_error}, suppose that the SGD-BGO algorithm is run with stepsize sequence $ \{\gamma_{ k }\}_{k=1}^{N} $. Then, given any $ 1 < k_0 < k_1 \leq N $, we have
	\begin{align*}
		\sum _{ k = k_0 } ^ { k_1 } 2 \gamma_k \mathbb{E} \left[ f \left( x _ { k } \right) - f (x_{k_0}) \right]  \leq 
		\sum _ { k = k_0 } ^ { k_1 } 2 \sqrt{d} \gamma_k  D\mathcal{E}_k + \gamma _ { k } ^ { 2 } \mathcal{B}_k^2,
	\end{align*}
	where $ \mathcal{B}_k^2 = \bigg[ B^2  + 2\sqrt{d} B \mathcal{E}_k  + d\mathcal{E}_k ^2 + {c_2}{\eta_k^2} + \tilde{c_2}\bigg] $, $ \mathcal{E}_k = c_1 \eta_k  + \frac{c_3}{\eta_k \sqrt{m_k}}  $, constants $ c_1,c_2 $ is as defined in \ref{as:biased_gs_estimation_error}, $ B $ is as defined in \ref{as:boundedness}, and $ D $ is given in \eqref{eq:dia}.
\end{lemma}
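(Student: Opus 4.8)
The plan is to mirror the proof of Lemma~\ref{lm:biased_convex_sp_esterr} almost verbatim; the only substantive change is that the second moment of the gradient estimate is now controlled by condition~(b) of \ref{as:biased_gs_estimation_error}, i.e.\ $\mathbb{E}_{\xi_{[k]}}\big[\|g_k - \mathbb{E}_{\xi_{[k]}}[g_k]\|^2\big] \le c_2\eta_k^2 + \widetilde{c_2}$ in place of $c_2/\eta_k^2$, and the bias inside $\mathcal{E}_k$ is now $c_1\eta_k + c_3/(\eta_k\sqrt{m_k})$ coming from condition~(a) of \ref{as:biased_gs_estimation_error}. First I would set $\Delta_k = g_k - \nabla f(x_k)$ and $\omega_k = \|x_k - x_{k_0}\|$, and expand the squared-distance recursion exactly as in \eqref{eq:take_exp}, namely $\omega_{k+1}^2 = \omega_k^2 - 2\gamma_k\langle \nabla f(x_k) + \Delta_k,\, x_k - x_{k_0}\rangle + \gamma_k^2\|g_k\|^2$.

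Next I would take the conditional expectation with respect to $\xi_{[k]}$. The bias inner product is handled by Hölder's inequality together with the analog of \eqref{eq:grad_bias} for \ref{as:biased_gs_estimation_error}, which yields the term $2\gamma_k\mathcal{E}_k\|x_k - x_{k_0}\|_1$ with $\mathcal{E}_k = c_1\eta_k + c_3/(\eta_k\sqrt{m_k})$. The second-moment term is split as $\mathbb{E}_{\xi_{[k]}}\|g_k\|^2 \le \|\mathbb{E}_{\xi_{[k]}}[g_k]\|^2 + c_2\eta_k^2 + \widetilde{c_2}$, and then $\|\mathbb{E}_{\xi_{[k]}}[g_k]\| \le \|\nabla f(x_k)\| + \sqrt{d}\,\mathcal{E}_k$, so that $\|\mathbb{E}_{\xi_{[k]}}[g_k]\|^2 \le \|\nabla f(x_k)\|^2 + 2\sqrt{d}\,\mathcal{E}_k\|\nabla f(x_k)\| + d\mathcal{E}_k^2$. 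Invoking \ref{as:boundedness} to substitute $\|\nabla f(x_k)\| \le B$ wherever it appears collapses the $\gamma_k^2$ bracket precisely into $\mathcal{B}_k^2 = B^2 + 2\sqrt{d}B\mathcal{E}_k + d\mathcal{E}_k^2 + c_2\eta_k^2 + \widetilde{c_2}$.

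Then I would use convexity of $f$ in the form $\langle \nabla f(x_k),\, x_k - x_{k_0}\rangle \ge f(x_k) - f(x_{k_0})$, together with $\|x_k - x_{k_0}\|_1 \le \sqrt{d}\,\omega_k$, and rearrange to obtain $2\gamma_k\mathbb{E}[f(x_k) - f(x_{k_0})] \le \mathbb{E}\omega_k^2 - \mathbb{E}\omega_{k+1}^2 + 2\sqrt{d}\gamma_k D\mathcal{E}_k + \gamma_k^2\mathcal{B}_k^2$, where I bound $\omega_k \le D$ via \eqref{eq:dia}. Summing over $k = k_0,\dots,k_1$ telescopes the distance terms and gives the claimed inequality. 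I do not expect a genuine obstacle: since all oracle-specific information is encapsulated in the redefined $\mathcal{E}_k$ and in the extra additive variance $c_2\eta_k^2 + \widetilde{c_2}$, the argument is structurally identical to Lemma~\ref{lm:biased_convex_sp_esterr}. The only care needed is bookkeeping of the $\eta_k$-dependence, so that $\mathcal{B}_k^2$ comes out with $c_2\eta_k^2 + \widetilde{c_2}$ rather than $c_2/\eta_k^2$, and $\mathcal{E}_k$ with the linear-in-$\eta_k$ bias term.
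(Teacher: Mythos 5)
Your proposal is correct and matches the paper's argument exactly: the paper proves this lemma by the same verbatim repetition of the proof of Lemma~\ref{lm:biased_convex_sp_esterr}, substituting the \ref{as:biased_gs_estimation_error} bias bound $\mathcal{E}_k = c_1\eta_k + c_3/(\eta_k\sqrt{m_k})$ and the second-moment bound $\|\mathbb{E}_{\xi_{[k]}}[g_k]\|^2 + c_2\eta_k^2 + \tilde{c_2}$, which is precisely the bookkeeping you describe.
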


\begin{proof}
	Follows by a completely parallel argument to the proof of Lemma \ref{lm:biased_convex_sp_esterr}, after observing that $ \mathbb { E }_{\xi_{ [k] }}  \left[ g \left( x _ { k } , \xi _ { k } \right) \right]  \leq \nabla f \left( x _ { k } \right) + c_1 \eta_k \mathbf{1}_{d \times 1} + \frac{c_3}{\eta_k \sqrt{m_k}}\mathbf{1}_{d \times 1}  ,  $ and $ \mathbb { E }_{\xi_{[k]}} \left[ \left\| g \left( x _ { k } , \xi _ { k } \right) \right\|^{2} \right] \le \left\| \mathbb { E }_{\xi_{[k]}}  \left[ g \left( x _ { k } , \xi _ { k } \right) \right]  \right\| ^{2} + c_2\eta_k^2 + \tilde{c_2} $.
	
\end{proof}

\begin{lemma} \label{lm:convex_gs_std_analysis}
	Assume \ref{as:boundedness}. With the oracle \ref{as:biased_gs_estimation_error}, suppose that the SGD-BGO algorithm is run with a constant stepsize and perturbation constant, i.e., $ \gamma_k = \gamma, \eta_k = \eta, \forall k \geq 1 $. Then, for any $ k \geq 1$, we have	
	\[
	\sum_{k=1}^{N} \E \left[f\left(x_{k}\right)-f\left(x^{*}\right)\right] \leq \frac{D^{2}}{2 \gamma}  +2 N D \sqrt{d}  \mathcal{E} + \frac{ N \gamma B^2}{2}, 
	\]	
	where $ \mathcal{E} = c_1\eta + \frac{c_3}{\eta \sqrt{m}} $, $ c_1 $ is as defined in \ref{as:biased_gs_estimation_error}, $ B $ is as defined in \ref{as:boundedness}, and $ D $ is as defined in \eqref{eq:dia}.
\end{lemma}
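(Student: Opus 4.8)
The plan is to reproduce, essentially verbatim, the argument used for Lemma~\ref{lm:convex_sp_std_analysis_esterr}, since that proof only invokes the $\ell_\infty$ gradient-bias bound of the oracle together with the uniform gradient bound from \ref{as:boundedness}, and never uses the variance bound; consequently the only change is that the oracle \ref{as:biased_gs_estimation_error} contributes a bias term $\mathcal{E}_k = c_1\eta_k + \frac{c_3}{\eta_k\sqrt{m_k}}$ (linear in $\eta_k$) in place of the $c_1\eta_k^2 + \frac{c_3}{\eta_k\sqrt{m_k}}$ of \ref{as:biased_sp_estimation_error}. First I would set $\Delta_k = g_k - \nabla f(x_k)$, so that the SGD-BGO update reads $x_{k+1} = x_k - \gamma(\nabla f(x_k) + \Delta_k)$, and start from convexity: $f(x_k) - f(x^*) \le \nabla f(x_k)^\top(x_k - x^*)$.

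Next I would substitute $\nabla f(x_k) = \frac{1}{\gamma}(x_k - x_{k+1}) - \Delta_k$ and apply the identity $2a^\top b = \|a\|^2 + \|b\|^2 - \|a-b\|^2$ with $a = x_k - x_{k+1} - \gamma\Delta_k = \gamma\nabla f(x_k)$ and $b = x_k - x^*$. This yields
\[
f(x_k) - f(x^*) \le \frac{1}{2\gamma}\Big(\|x_k - x^*\|^2 - \|x_{k+1} - x^* + \gamma\Delta_k\|^2\Big) + \frac{\gamma}{2}\|\nabla f(x_k)\|^2 .
\]
I would then bound $\|\nabla f(x_k)\|^2 \le B^2$ using \ref{as:boundedness} (which also controls $\|\cdot\|_2$ by $\|\cdot\|_1$), expand $\|x_{k+1} - x^* + \gamma\Delta_k\|^2$ and drop the nonnegative term $\gamma^2\|\Delta_k\|^2$, then take expectations. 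The surviving cross term $-2\gamma\,\mathbb{E}[(x_{k+1} - x^*)^\top\Delta_k]$ is handled by H\"older's inequality, $|(x_{k+1}-x^*)^\top\Delta_k| \le \|x_{k+1}-x^*\|_1\,\|\Delta_k\|_\infty$, followed by $\|\cdot\|_1 \le \sqrt{d}\,\|\cdot\|$ and the gradient-bias bound (a) in \ref{as:biased_gs_estimation_error}, i.e. $\|\mathbb{E}_\xi[\Delta_k]\|_\infty \le \mathcal{E}_k$. Setting $\gamma_k \equiv \gamma$ and $\eta_k \equiv \eta$ (so $\mathcal{E}_k \equiv \mathcal{E}$), summing over $k = 1,\dots,N$, telescoping the $\|x_k - x^*\|^2$ terms and invoking $\|x_1 - x^*\| \le D$ (together with the boundedness of the iterates, exactly as in the proof of Lemma~\ref{lm:convex_sp_std_analysis_esterr}) then gives $\sum_{k=1}^N \mathbb{E}[f(x_k) - f(x^*)] \le \frac{D^2}{2\gamma} + 2ND\sqrt{d}\,\mathcal{E} + \frac{N\gamma B^2}{2}$, which is the claimed bound.

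There is no serious obstacle here: the computation is mechanical once the parallel with Lemma~\ref{lm:convex_sp_std_analysis_esterr} is noted. The one step requiring care is the cross term $\mathbb{E}[(x_{k+1}-x^*)^\top\Delta_k]$, which cannot be eliminated by conditioning since $x_{k+1}$ itself depends on $\Delta_k$ through the update; one must route it through H\"older's inequality and the $\ell_\infty$ bias bound of the oracle, and it is precisely at this point that the distinction between the two oracle models enters (here $c_1\eta$ in place of $c_1\eta^2$). It is also worth emphasising that, unlike the general descent-type estimates used elsewhere in the paper, this lemma never appeals to the second-moment bound (b) of \ref{as:biased_gs_estimation_error}; the added $\tilde{c_2}$ variance floor therefore plays no role in this particular bound.
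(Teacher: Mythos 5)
Your proposal is correct and follows essentially the same route as the paper, which proves this lemma by repeating the argument of Lemma~\ref{lm:convex_sp_std_analysis_esterr} with the bias term $c_1\eta^2$ replaced by $c_1\eta$ from condition (a) of \ref{as:biased_gs_estimation_error}. Your observations that the second-moment bound (b) is never invoked and that the cross term must be routed through H\"older's inequality and the $\ell_\infty$ bias bound are both accurate and consistent with the paper's treatment.
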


\begin{proof}
	The proof follows in a similar manner as that of Lemma \ref{lm:convex_sp_std_analysis_esterr}, 
	with the following modification: $ \E [ \Delta g_k ] =  c_1 \eta_k \mathbf{1}_{d \times 1} + \frac{c_3}{\eta_k \sqrt{m_k}}\mathbf{1}_{d \times 1}  $.	
\end{proof}

\begin{proof} ~\textbf{(Theorem~\ref{thm:convex_gs_esterr})}
	Using a parallel argument to the initial passage in the proof of Theorem \ref{thm:convex_sp_esterr} leading upto equation \eqref{eq:phases_bound_esterr}, we obtain
	\begin{align}
		&\mathbb{E}[f(x_{n_{i+1}}) - f(x_{n_{i}})] \leq \frac{10 \sqrt{d} D c_1 \eta_0  2^{-i}}{N} \nonumber \\
		&  + \frac{10 \sqrt{d}  c_3 D 2^{-i/2}}{ \eta_0 \sqrt{N}}  + \frac{5 \gamma_0 2^{-i}}{\sqrt{N}}\bigg[B^2  + \frac{2  \sqrt{d} B c_1 \eta_0 2^{-i}}{N}  \nonumber \\
		& + \frac{2 \sqrt{d} B c_3  2^{-i/2}}{ \eta_0 \sqrt{N}}  + \frac{(dc_1^2  + c_2) \eta_0^2 2^{-2i}}{N^{2}} \nonumber \\
		& + \frac{2 c_1 c_3 d 2^{-3i/2}}{N^{3/2}} +  \frac{d c_3^2 2^{-i}}{ \eta_0^2 N}  + \tilde{c_2}  \bigg].  \label{eq:phases_bound_gs}
	\end{align}
	Plugging \eqref{eq:phases_bound_gs} into \eqref{eq:main_eq_esterr}, we get
	\begin{align}
		& \mathbb{E}[f(x_{N})] = \mathbb{E}[f(x_{n_{l+1}})] \leq \inf_{ \lceil \frac{N}{4}\rceil \leq k \leq N_1}\mathbb{E}[f(x_k)] \nonumber \\
		& + \frac{20 \sqrt{d} D c_1 \eta_0}{N} + \frac{35 \sqrt{d}  c_3 D }{ \eta_0 \sqrt{N}} + \frac{10 \gamma_0 B^2}{\sqrt{N}} \nonumber \\
		& + \frac{20 \sqrt{d} B  c_1 \gamma_0  \eta_0 }{ N^{3/2}} + \frac{20 \sqrt{d}  B c_3 \gamma_0 }{ \eta_0 N}    + \frac{10 \gamma_0 \eta_0^2 (dc_1^2 + c_2) }{N^{5/2}} \nonumber \\
		& + \frac{20 \gamma_0 c_1 c_3 {d} }{N^{2}} +  \frac{10  d \gamma_0  c_3^2 }{ \eta_0^2 N^{3/2}} + \frac{ 10 \gamma_0 \tilde{c_2} }{\sqrt{N}}. \label{eq:main_eq_2_gs}
	\end{align}
	As in the proof of Theorem \ref{thm:convex_sp_esterr}, we obtain
	\begin{align}
		&\inf_{\lceil\frac{N}{4}\rceil \leq k\leq N_1} \mathbb{E}[f(x_k) - f(x^{*})] \nonumber\\
			& \leq  \frac{2}{N_1}\sum_{k=1 }^{N_1} \mathbb{E}[f(x_k) - f(x^{*})] \nonumber \\
		& \leq  \frac{1}{\sqrt{N}} \left[ \frac{4D^2 }{\gamma_0} + {\gamma_0 {B}^2} + \frac{4 D \sqrt{d}c_1 \eta_0 }{\sqrt{N}} + \frac{4D \sqrt{d} c_3}{\eta_0}\right], \label{eq:thm_convex_gs}
	\end{align}
	where we used Lemma \ref{lm:convex_gs_std_analysis} and the fact that $\frac{N}{4}\leq N_1 \leq \frac{N}{2}$.	
	We conclude by plugging \eqref{eq:thm_convex_gs} into \eqref{eq:main_eq_2_gs} to obtain
	\begin{align*}
		& \mathbb{E}[f(x_N)] - f(x^*) 
		\leq \frac{1}{\sqrt{N}} \bigg[ \frac{4D^2}{\gamma_0} + {11 \gamma_0 {B}^2} \\
		&+ {39 D \sqrt{d} } \left( \frac{c_1 \eta_0}{\sqrt{N}}  + \frac{ c_3}{\eta_0}  \right) +  \frac{20 \sqrt{d} B \gamma_0}{\sqrt{N}}  \left(  \frac{c_1 \eta_0}{\sqrt{N}}  + \frac{ c_3}{\eta_0}  \right)  \\
		& + \frac{10 d \gamma_0}{N} \left( \frac{c_1 \eta_0}{\sqrt{N}}  + \frac{ c_3}{\eta_0}  \right)^2  + \frac{10 \gamma_0 \eta_0^2 c_2}{N^2} + {10 \gamma_0 \tilde{c_2}}\bigg].
	\end{align*}
\end{proof}

\subsection{Proof for Risk-Sensitive Reinforcement Learning} \label{pf:biased_convex_sp}

\subsubsection{Proof of Proposition \ref{prop:risk_measure}} \label{pf:prop_risk_measure}

\begin{proof}
	Let $F$ denote the distribution of $K_x(x^0)$. Then, we have
	\begin{align*}
		\E\left| \rho_m - \rho(K_x(x^0))\right| &=  \E\left| \rho(F_m) - \rho(F)\right| \\
		&\le L W_1(F,G) \le  \frac{c_1 L B}{\sqrt{m}}, 
	\end{align*}
	where the final inequality follows by using Theorem 3.1 of \cite{lei2020convergence}.
\end{proof}

\section{Conclusions}
\label{sec:concl}

Motivated by practical applications involving biased function measurements, 
we formulated two biased gradient oracles with an additive estimation error component. The first oracle featured a bias-variance tradeoff for the gradient estimates, while the second one did not have such a tradeoff. 
We studied the non-asymptotic performance of gradient-based algorithms with inputs from a biased gradient oracle in convex as well as non-convex optimization regimes. 
Further, we highlighted the applicability of our proposed biased gradient oracles in a risk-sensitive reinforcement learning setting.

\bibliographystyle{IEEEtran}
\bibliography{references}

\end{document}